\title{Suboptimal Performance of the Bayes Optimal Algorithm in Frequentist Best Arm Identification}
\author{Junpei Komiyama\thanks{junpei@komiyama.info}}
\date{\today}
\newcommand{\replaced}[2]{#2}
\newcommand{\added}[1]{#1}
\theoremstyle{remark}
\theoremstyle{definition}
\newtheorem{thm}{Theorem}
\newtheorem{lem}[thm]{Lemma}
\newtheorem{definition}{Definition}
\newtheorem{remark}{Remark}
\newtheorem{conjecture}[thm]{Conjecture}
\theoremstyle{plain}
\DeclareMathOperator*{\argmax}{arg\,max}
\DeclareMathOperator*{\argmin}{arg\,min}
\newcommand{\BI}{B}
\newcommand{\Real}{\mathbb{R}}
\newcommand{\Natural}{\mathbb{N}}
\newcommand{\SRegBayes}{\mathrm{R}_{\bH}}
\newcommand{\SRegBayesi}[2]{\mathrm{R}_{#1}^{(#2)}}
\newcommand{\SRegFreq}{\mathrm{R}_{\bmu}}
\newcommand{\SRegFreqTrue}{\mathrm{R}_{\bmutrue}}
\newcommand{\hDelta}{\hat{\Delta}} %
\newcommand{\hDeltarec}{\hat{\Delta}^{\mathrm{r}}} %
\newcommand{\Ex}{\mathbb{E}}
\newcommand{\Prob}{\mathbb{P}}
\newcommand{\Normal}{\mathcal{N}}
\newcommand{\Ind}{\bm{1}}
\newcommand{\ist}{i^{*}}
\newcommand{\must}{\mu^{*}}
\newcommand{\Vrest}{V_r}
\newcommand{\Vlast}{V_l}
\newcommand{\Vgap}{V_g}
\newcommand{\Vtrans}{V_s}
\newcommand{\EA}{\mathcal{A}}
\newcommand{\EB}{\mathcal{B}}
\newcommand{\EF}{\mathcal{F}}
\newcommand{\EW}{\mathcal{W}}
\newcommand{\EX}{\mathcal{X}}
\newcommand{\EY}{\mathcal{Y}}
\newcommand{\EZ}{\mathcal{Z}}
\newcommand{\bmu}{\bm{\mu}}
\newcommand{\bmutrue}{\bm{\mu}^{(0)}}
\newcommand{\mutrue}{\mu^{(0)}}
\newcommand{\bH}{\bm{H}}
\newcommand{\Srec}{S^{\mathrm{r}}}
\newcommand{\Jrec}{J^{\mathrm{r}}}
\newcommand{\SPrec}{S^{\mathrm{r}'}}
\newcommand{\Nrec}{N^{\mathrm{r}}}
\newcommand{\NPrec}{N^{\mathrm{r}'}}
\newcommand{\Urec}{U^{\mathrm{r}}}
\newcommand{\Xrec}{X^{\mathrm{r}}}
\newcommand{\Irec}{I^{\mathrm{r}}}
\newcommand{\bHrec}{\bm{H}^{\mathrm{r}}}
\newcommand{\Hrec}{H^{\mathrm{r}}}
\newcommand{\hatmu}{\hat{\mu}}
\newcommand{\hatmurec}{\hat{\mu}^{\mathrm{r}}}
\newcommand{\tilmu}{\tilde{\mu}}
\newcommand{\poly}{\mathrm{poly}}
\newcommand{\Hexrec}[1]{\bm{H}_{\setminus #1}^{\mathrm{r}}}
\newcommand{\Cbayesopt}{C_{\mathrm{bayes}}}
\newcommand{\ABayesOpt}{A^*}
\newcommand{\AFreq}{A^{\mathrm{freq}}}
\newcommand{\CUnder}{C_U}
\newcommand{\Lmock}{L^{\mathrm{mock}}}
\newcommand{\gthree}{g_0}
\newcommand{\funder}{f_1} 
\newcommand{\fclose}{f_2} 
\newcommand{\fnodrift}{f_3} 
\newcommand{\Gap}{\Delta_G}
\begin{document}

\maketitle

\onehalfspacing

\begin{abstract}
We consider the fixed-budget best arm identification problem with \replaced{Normal reward distributions}{rewards following normal distributions}. In this problem, the forecaster is given $K$ arms (or treatments) and $T$ time steps. The forecaster attempts to find the arm with the largest mean, via an adaptive experiment conducted using an algorithm. The algorithm's performance is \replaced{measured by the simple regret, that is, the quality of the estimated best arm.}{evaluated by ``simple regret'', reflecting the quality of the estimated best arm.}
\replaced{The frequentist simple regret can be exponentially small to $T$, whereas the Bayesian simple regret is polynomially small to $T$.}{While frequentist simple regret can decrease exponentially with respect to $T$, Bayesian simple regret decreases polynomially.} This paper demonstrates that the Bayes optimal algorithm, which minimizes the Bayesian simple regret, does not \replaced{produce an exponential simple regret for some parameters, a finding that}{yield an exponential decrease in simple regret under certain parameter settings. This} contrasts with the \replaced{many results indicating the}{numerous findings that suggest the} asymptotic equivalence of Bayesian and frequentist \replaced{algorithms}{approaches} in\replaced{ the context of}{} fixed sampling regimes. \replaced{While the Bayes optimal algorithm is described in terms of a recursive equation that is virtually impossible to compute exactly, we establish the foundations for further analysis by introducing a key quantity that we call the expected Bellman improvement.}{Although the Bayes optimal algorithm is formulated as a recursive equation that is virtually impossible to compute exactly, we lay the groundwork for future research by introducing a novel concept termed the \textit{expected Bellman improvement}.}
\end{abstract}

\section*{Quote}

The following paragraph appears in a conversation between Joseph Leo Doob and James Laurie Snell \citep{Snell1997}.
\begin{displayquote}
While writing my book I (Doob) had an argument with Feller. He asserted that everyone said ``random variable'' and I asserted that everyone said ``chance variable.'' We obviously had to use the same name in our books, so we decided the issue by a stochastic procedure. That is, we tossed for it and he won.
\end{displayquote}
The coin that Doob and Feller tossed separated our world, where we discuss the randomness of random variables, from a parallel world where we discuss the chanciness of chance variables. 
In this paper, we consider a Bayes optimal algorithm, which computes the consequences over all the exponentially many parallel worlds. 

\section{Introduction} \label{sec_prob}

This paper addresses the problem of identifying the best arm (or treatment) among multiple options from a fixed number of samples.
This problem associates each arm with an (unknown) parameterized distribution, which is a (noisy) signal of the quality of the treatment.
To maximize the overall effectiveness of the treatments, the forecaster uses an algorithm that adaptively determines which arms to select. 
Such adaptive allocation is essential for solving many practical problems, including A/B testing in online platforms \citep{li2010}, simulation optimization \citep{MalkomesCLM21}, and clinical trials \citep{Thompson1933,villar2015}.
\replaced{Several classes of algorithms have been previously considered, among which}{Among the best-known algorithms for sequential selection are the multi-armed bandit algorithms}, such as Thompson sampling \citep{Thompson1933}, and upper confidence bounds \citep{Lairobbins1985,auer2002}. Although multi-armed bandit algorithms solve the exploration--exploitation tradeoff to maximize total effectiveness during the rounds, they \replaced{are suboptimal with respect to their statistical power to detect the best-quality arm.}{have limited ability to detect the best arm \citep{bubeck_pure_2009}, and are thus suboptimal for our aim.} 

The machine learning literature refers to the process of finding the best-quality arm as best arm identification (BAI; \citealt{Audibert10}).
Unlike multi-armed bandit algorithms, BAI algorithms are designed solely to deliver the most effective exploration.\footnote{Although the term ``best arm identification'' has appeared only recently, several strands of research share the same goal, among which ranking and selection (RS; \citealt{Bechhofer1954}) is among the best known. Several works, such as \cite{Russo2020,Hong2021review}, consider the difference between BAI and RS, and we briefly discuss work related to RS in Section~\ref{sec_related}. However, because BAI and RS share several overlapping ideas, we do not explicitly outline the difference.}
\added{BAI has two settings, namely, the fixed-budget setting which the number of samples is fixed, and the fixed-confidence setting in which the forecaster stops when the confidence level on the best arm reaches a predefined threshold. In this paper, we consider the former setting.}
\replaced{Most studies on BAI use frequency evaluations, especially those comparing the algorithm's performance in the worst-case scenario. However, many algorithms are Bayesian, including}{Popular algorithms for fixed-budget identification include successive rejects (SR; \citealt{Audibert10}) and successive halving \citep{Karnin2013}. There are also several Bayesian algorithms that utilize a prior, such as} top-two Thompson sampling \citep{Russo2020}, knowledge gradient (KG; \citealt{Gupta1994}), and expected improvement (EI; \citealt{Jones1998}). A Bayesian algorithm is initialized with a prior belief, and the forecaster learns from each sample to build a posterior belief. Given that, we know the dynamics of the posterior belief as a distribution, exact \replaced{optimization of Bayesian BAI}{minimization of the Bayes risk} can be formulated as dynamic programming. Nonetheless, exact dynamic programming solutions are computationally intractable because of the curse of dimensionality \citep{Russo2020,powell2013optimal}. Consequently, KG and EI are designed to provide the one-step lookahead approximations of an exactly optimal dynamic programming solution.

\added{While some classes of Bayesian algorithms are known to perform optimally in the fixed-confidence setting \citep{Xuedong2020}, limited results are known regarding their performance in the fixed-budget setting.}
This paper considers \replaced{whether access to infinite computational power to exactly solve the dynamic programming would connote a Bayes optimal algorithm that performs well not only in the Bayesian sense but also in the frequentist sense.}{the Bayes optimal algorithm in the context of fixed-budget identification. 
In the field of statistical decision theory, the Bayes optimal algorithm is deeply connected to a minimax estimator that maximizes the worst-case performance. 
We are interested in the following research question: Does the Bayes optimal algorithm have a good performance
in the frequentist measure? 
} 
On the contrary, we show that the Bayes optimal algorithm performs sub-optimally with some of the worst model parameters, which implies that maximizing the Bayesian objective differs substantially from maximizing the frequentist objective. \added{A Bayesian measure requires it to optimize the performance} \replaced{on average}{averaged over the prior}, \added{whereas a frequentist measure requires an algorithm to optimize the performance against any case.} \replaced{The suboptimality of the Bayes optimal algorithm implies that, even if we extend KG and EI to look beyond one step ahead, their performance does not necessarily improve in terms of the frequentist.}{The fact that the Bayes optimal algorithm is suboptimal means that even if we enhance KG and EI to plan more than one step ahead, their performance in a frequentist measure might not improve.}

Note that this discrepancy between Bayesian and frequentist \added{measures} differs considerably from the situation in standard statistical inference with non-adaptive samples. For a fixed-sample problem, the Bernstein–von Mises theorem describes the asymptotic equivalence of Bayesian and frequentist inference. \replaced{However, in an adaptive sampling scheme, randomness in early rounds affects the number of samples in subsequent rounds. The more severely an arm is underestimated, the smaller the number of subsequent samples allocated to the arm.}{However, in an adaptive sampling scheme, underestimation of an arm due to the randomness of the empirical mean results in a smaller number of samples of the arm in the future.} Bayesian and frequentist BAI algorithms are both robust against \replaced{such an underestimation}{such randomness} but with different confidence levels. Bayesian algorithms are robust up to a polynomially small underestimation, whereas frequentist algorithms are robust up to an exponentially small underestimation.

\replaced{Our results are analytically innovative because despite being restricted to a specific instance with a limited number of arms (namely, a three-armed problem), they represent the foundations for the formal analysis of an exact solution to the dynamic programming that is not limited to a one- (or two-)step lookahead. We present several cases where the \replaced{draw of arms is computable}{the exact analysis of the dynamic programming is feasible} despite the need to calculate the evolution of the posterior in a very long future round.}{Our results offer analytical innovation by establishing foundational principles for a formal analysis that yields exact solutions in dynamic programming. This is notable because it enables solutions that extend beyond the conventional one- or two-step lookahead. We demonstrate several instances in which it is feasible to perform exact analyses of dynamic programming  regardless of the need to project the evolution of the posterior over extended future periods}

The structure of the paper is as follows. The rest of this section establishes the problem. Section \ref{sec_boa}~formalizes the Bayes optimal algorithm. Section~\ref{sec_main} describes our main results, and \added{Sections~\ref{sec_proof}--\ref{sec_proofsketch} describes their derivation}. 
Section~\ref{sec_sim_brief} presents the results of an auxiliary simulation. 
Section~\ref{sec_related} discusses related work, and Section~\ref{sec_disc} concludes the paper. Most details of the proofs appear in the appendices.

\subsection{Problem setup}
\label{sec_setup}

We consider the normal BAI problem with $K$ arms and $T$ samples. 
Each arm $i \in [K] = \{1,2,\dots,K\}$ is associated with an (unknown) parameter $\mu_i \in \Real$. 
We use $\bmu = (\mu_1,\mu_2,\dots,\mu_K)$ to denote the set of parameters associated with the arms. The problem involves $T$ rounds, and at each round $t=1,2,\dots,T$, the forecaster can choose one of the arms. Upon choosing arm $i$, the forecaster observes reward $X(t) \sim \Normal(\mu_i, 1)$, which represents the effectiveness of the selected arm. \replaced{}{Here, $\Normal(\mu_i, 1)$ is a normal distribution with mean $\mu_1$ and unit variance.}

The goal of the forecaster is to identify the best arm $\ist = \argmax_i \mu_i$, defined as the arm with the largest mean $\must$, using an effective algorithm.
Any such algorithm comprises a sampling rule $I(1), I(2), I(3), \dots, I(T)$ and a recommendation rule $J(T)$.
Let $\EF_t$ be the $\sigma$-algebra generated by 
\[
(U(0), I(1), X(1), U(1), I(2), X(2), \dots, U(t-1), I(t), X(t)).
\]
Then, the sampling rule $I(t)$ is $\EF_{t-1}$-measurable, meaning that the algorithm chooses an arm at round $t$ according to the observation history $\{I(s), X(s)\}_{s<t}$ and certain external randomizers $\{U(s)\}_{s<t}$.\footnote{\replaced{When a deterministic algorithm is used, it does not depend on $\{U(s)\}_{s<t}$.}{}}
\added{We may assume $U(s)$ to be a random variable drawn from the standard normal distribution $\Normal(0,1)$. An algorithm is deterministic when it does not use the information of $\{U(s)\}_{s<t}$.}

The recommendation rule is used to choose an arm $J(T) \in [K]$ that is $\EF_T$-measurable. The performance of the forecaster, which follows algorithm $A$, is measured by the simple regret
\begin{equation}\label{ineq_freqsreg}
\SRegFreq(T, A) = \must - \Ex_{\bmu}[\mu_{J(T)}],
\end{equation}
which is the expected difference between the means of the best arm and the recommended arm $J(T)$. 
We omit $A$ from $\SRegBayes(T, A)$ when it is clear which algorithm is used.

The simple regret formulated by Eq.~\eqref{ineq_freqsreg} is frequentist in the sense that it assumes that $\bmu$ is fixed (but unknown). In contrast, a Bayesian \added{approach} considers a \added{known} distribution of $\bmu \in \Real^K$ and evaluates an algorithm with respect to these expected values. 
More concretely, the Bayesian simple regret is given by
\begin{align}\label{def_reg_bayes}
\SRegBayes(T, A) 
= \Ex_{\bH}[ \SRegFreq(T, A) ]
:= \int \SRegFreq(T, A) d\bH(\bmu),
\end{align}
which marginalizes $\SRegFreq(T, A)$ over the prior $\bH = \bH(\bmu)$.

\subsection{Convergence rates: Frequentist simple regret compared with Bayesian simple regret}
\label{subsec_conv}

\added{A good algorithm, from the perspective of frequentist measure, achieves exponentially small regret as a function of $T$ for any distribution.
\begin{definition}{\rm (Frequentist rate)}
Let $\Gamma(\bmu)$ be any non-negative function that takes a bounded positive value for any $\bmu$ with a unique best arm.
A BAI algorithm has a frequentist rate $\Gamma(\bmu)$ if, for any distribution $\bmu$ with a unique best arm, it holds that
\begin{equation}\label{ineq_freqreg}
\SRegFreq(T) \le e^{-\Gamma(\bmu) T + o(T)},
\end{equation}
where $o(T)$ is a function of $K, \bmu,$ and $T$ that grows more slowly than $T$ when we view $K$ and $\bmu$ as constants.
\end{definition}
}

\replaced{Given fixed $\bmu$, the frequentist simple regret decays exponentially to $T$.}{} For example, the successive rejects algorithm \citep{Audibert10} has a frequentist rate $\Gamma(\bmu) = O\left( (1/\log K)\left(\sum_{i: \mu_i \ne \must} 1/(\must - \mu_i)^2\right)^{-1} \right)$. 
\added{
The uniform algorithm, which draws all arms uniformly, has a rate $\Gamma(\bmu) = \min_i (\mu^* - \mu_i)^2 / K$, which is $\tilde{O}(K)$ times worse than that of SR.
}
More recently, \cite{komiyama2022fbbai} derived the possible value of the rate $\Gamma(\bmu)$.

Meanwhile, under some continuity conditions, the Bayesian simple regret is $\Theta(1/T)$. 
More concretely, \cite{komiyama2021} showed that
\begin{equation}\label{ineq_bayessreg}
\SRegBayes(T) \le \frac{\Cbayesopt(\bH)}{T} + o\left(\frac{1}{T}\right),
\end{equation}
where the value $\Cbayesopt(\bH)$ depends on the prior $\bH$. 

\begin{remark}{\rm (Reward distribution)}
Although the frequentist results of \cite{Audibert10} and \cite{Carpentier2016} concern Bernoulli arms, the same results can be applied to Gaussian arms with unit variance because the empirical means of both the Bernoulli and Gaussian distributions are bounded by the Hoeffding inequality, upon which these results depend. 
Elsewhere, while the results of \cite{komiyama2021} also apply specifically to Bernoulli arms, the $\Theta(1/T)$ Bayesian simple regret can also be derived for Gaussian arms, provided we are not interested in the magnitude of the constant.
\end{remark}

This paper considers a Bayes optimal algorithm that minimizes the Bayesian simple regret (Eq.~\eqref{def_reg_bayes}). 
Our main result demonstrates that the Bayes optimal algorithm does not feature an exponential frequentist simple regret, which is somewhat surprising given its optimality in the Bayesian sense.

\section{Bayes Optimal Algorithm and Loss Function}\label{sec_boa}

In this section, we formalize the Bayes optimal algorithm that minimizes the Bayesian simple regret. 

\subsection{Bayes optimal algorithm}

First, we define \replaced{of}{}the Bayes optimal algorithm, and then we represent it as a dynamic programming problem. 

\begin{definition}{\rm (Algorithms)}
An algorithm $A$ is
\begin{equation}
A = (I(1), I(2), I(3), \dots, I(T-1), I(T), J(T)),
\end{equation}
where each $I(t)$ is $\EF_{t-1}$-measurable and $J(T)$ is $\EF_T$-measurable. Let $\EA$ be the set of all algorithms.
\end{definition}

\begin{definition}{\rm (Bayes optimal algorithm)}\label{def_boa}
\added{The Bayes optimal algorithm $\ABayesOpt$ is defined as the one that minimizes the Bayesian simple regret:}
\begin{equation}
\ABayesOpt = \argmin_{A \in \EA} \Ex_{\bH}[ \SRegFreq(T, A) ].
\end{equation}
\end{definition}

\subsection{Posterior and sufficient statistics}

\added{To simplify the explanation, we introduce an improper prior that results in the posterior mean matching the empirical mean.}
\replaced{To avoid improper posteriors, we assume $T \ge K$ and $I(t) = t$ for $t \in [K]$ (i.e., the algorithm draws each arm once in the first $K$ rounds).}{To prevent improper posteriors, we assume that $T \ge K$ and that $I(t) = t$ for $t \in [K]$, meaning the algorithm draws each arm once in the first $K$ rounds.}

For $t \ge K$, the sufficient statistics $(S_i(t), N_i(t))$ for each arm $i$ are 
\begin{align}
S_i(t) &= \sum_{s \le t: I(s)=i} X_s \\
N_i(t) &= \sum_{s \le t: I(s)=i} 1, 
\end{align}
and the posterior of arm $i$ at the end of round $t\ge K$ is identified by these sufficient statistics $\bH(t) = (H_1(t), H_2(t), \dots, H_K(t))$ as
\begin{equation}
H_i(t) = \Normal\left(\frac{S_i(t)}{N_i(t)}, \frac{1}{N_i(t)}\right).
\end{equation}
The posterior on $\bmu$ is fully specified by the sufficient statistics. Henceforth, we use the posterior $\bH(t)$ and the sufficient statistics $(S_i(t), N_i(t))_{i \in [K]}$ interchangeably.
We also recognize that $\hatmu_i(t) = S_i(t)/N_i(t)$ and $(\hatmu_i(t), N_i(t))_{i \in [K]}$ represent sufficient statistics.

\subsection{\added{State evolution}}

\replaced{
The following considers the dynamic programming (Bellman equation) that defines $\ABayesOpt$, which at any moment makes a decision to minimize the simple regret marginalized by the posterior.}{The computation of the Bayes optimal algorithm can be formalized as an undiscounted episodic Markov decision process (MDP). In particular, the state at the current round $t$ is represented as a set of sufficient statistics $\bHrec(t-1) = \bH(t-1)$.}
We use $\Xrec(\cdot)$, $\Nrec(\cdot), \hatmurec_i(\cdot)$, $\Irec(\cdot)$, $\Jrec(\cdot)$, and $ \bHrec(\cdot)$ to represent the corresponding quantity in the posterior.\footnote{\added{At any given time step $t$, an algorithm can only access the sequences $I(1),X(1),I(2),X(2),\dots,I(t-1),X(t-1)$ that have been drawn from the true distribution. The Bayes optimal algorithm at round $t$ calculates the value function and the action value function using a  recursive formula that includes samples from the future time steps $t,t+1,t+2,\dots,T$. These are virtual samples, drawn from the posterior distribution.}} 
\added{
The state transition is represented as follows: when we draw arm $i$, $\Hrec_j(t) = \Hrec_j(t-1)$ for $j \ne i$. For $\Hrec_i(t)$,
\begin{align}
\tilmu_i &\sim \Normal\left(\hatmurec_i(t-1), \frac{1}{\Nrec_i(t-1)}\right)\\
\Xrec(t) &\sim \Normal(\tilmu_i, 1)\\
\hatmurec_i(t)
&= \frac{\Nrec_i(t-1) \hatmurec_i(t-1)+\Xrec(t)}{\Nrec_i(t-1)+1}\\
\Nrec_i(t) &= \Nrec_i(t-1)+1.\label{ineq_stateevol}
\end{align}
In view of the MDP, the algorithm receives no loss\footnote{\added{This is because the values $X(1),X(2)\dots,X(T)$ do not directly affect the final objective of minimizing the simple regret. We use the term ``reward'' for the values $X(1),X(2)\dots,X(T)$, whereas we use the terms ``loss'' or ``simple regret'' for representing the quantity that we would like to minimize.}} for choosing an action because this is a pure exploration setting.
After round $T$, the optimal recommendation is $\Jrec(T) = \argmax_i \hatmurec_i(T)$, and the algorithm receives a loss}
\[
L(\bHrec(T), T+1) := \Ex_{\bHrec(T)}[\Delta_{\Jrec(T)}|\Jrec(T) = \max_i \hatmurec_i(T)],
\]
where $\Delta_{J(T)} = \mu^* - \mu_{J(T)}$.
Here, the notation $T+1$ in $L(\bHrec(T), T+1)$ represents the decision after $T$ samples\replaced{ of arm $I(1),I(2),\dots,I(T)$}{}.

\subsection{\added{Value function and action value function}}

\added{The value function after round $T$ is represented as the negative loss $-L(\bHrec(T), T+1)$. To obtain the value function at $t \le T$, we consider the following recursive formula. Assume that we know the loss function at round $t+1$. Since there is no loss during the sampling process, the negative action value function (the Q-function) is represented as the expected value of the loss function\footnote{\added{Note that the loss function $L_i(\bHrec(t-1), t)$ corresponds to the expected simple regret of the Bayes optimal algorithm conditioned on the state at round $t$ being $\bHrec(t-1)$.}} after drawing an arm:
\begin{align}
\lefteqn{
L_i(\bHrec(t-1), t)
}\\
&=
\Ex_{\tilmu_i \sim \Normal(\hatmu_i(t-1), \frac{1}{\Nrec_i(t-1)}), \Xrec(t) \sim \Normal(\tilmu_i, 1)}[ L(\bHrec(t), t+1) ]\\
&:= \sqrt{\frac{\Nrec_i(t-1)}{2 \pi}} \int 
\frac{1}{\sqrt{2 \pi}} 
\int 
L(\bHrec(t), t+1)
e^{-\frac{(\Xrec(t) - \tilmu_i)^2}{2} }
d\Xrec(t)\\
&
\hspace{14em}\times e^{-\frac{\Nrec_i(t-1)(\tilmu_i - \hatmu_i(t-1))^2}{2} }
d\tilmu_i,\label{ineq_loss_onestep}
\end{align}
where the state evolution follows Eq.~\eqref{ineq_stateevol}. 
The optimal arm (action) that maximizes the action value function is $\Irec(t) = - \argmax_i L_i(\bHrec(t-1), t)$.
}

\subsection{\added{Expected Bellman improvement}}

\replaced{As the equation makes apparent, }{}The loss function $L(\bH(t-1), t)$ is recursively defined and ``virtually impossible to solve'' \citep{powell2013optimal,Russo2020}. \replaced{However, we prove that it is possible to derive a lower bound for a particular example by introducing several novel ideas.}{However, we derive a lower bound for a specific example by introducing several novel ideas.}
\replaced{First, we introduce the expected Bellman improvement (EBI) of an arm, which corresponds to the value of a sample from the arm:}{First, leveraging the fact that the only actual loss is incurred at the terminal state, we define the expected Bellman improvement (EBI) for an arm, which represents the expected value gained from sampling that arm.}
\begin{definition}{\rm (Expected Bellman improvement of arm $i$)}\label{def_ebi}
The EBI of arm $i$ is defined as:
\begin{equation}\label{ineq_ebi}
\BI_i(\bH(t-1), t) := L(\bH(t-1), t+1) - L_i(\bH(t-1), t).
\end{equation}
\end{definition}
The value $L(\bH(t-1), t+1)$ represents the loss function when a sample at round $t$ is discarded.\footnote{Note that $t+1$ is indeed correctly placed in Eq.~\eqref{ineq_ebi}. It \replaced{includes}{indicates} one round fewer than $t$.}
Therefore, the EBI indicates the decrease in the loss function due to a single sample from arm $i$. Since $L(\bH(t-1), t+1)$ does not depend on arm $i$, \replaced{the minimization of the loss function $L_i(\bH(t-1), t)$}{maximizing of the action value function $-L_i(\bH(t-1), t)$} is equivalent \replaced{as}{to} maximizing $\BI_i(\bH(t-1), t)$. We can now \added{redefine the Bayes optimal algorithm in terms of the EBI}:
\begin{thm}{\rm (Bayes optimal algorithm)}\label{thm_boa}
The Bayes optimal algorithm $\ABayesOpt$ is such that
\begin{align}
I(t) &= \argmax_{i\in[K]} \BI_i(\bH(t-1), t)\\
J(T) &= \argmax_{i\in[K]} \hatmu_i(T),
\end{align}
and its simple regret, conditioned on posterior $\bH(t-1)$ at the beginning of round $t$, is $L(\bH(t-1), t)$.
This algorithm minimizes the Bayesian simple regret:
\begin{equation}\label{alg_bayesopt}
\ABayesOpt = \argmin_{A \in \EA} \SRegBayes(T, A).
\end{equation}
\end{thm}
That is, the Bayes optimal algorithm chooses the arm with the largest EBI. Theorem~\ref{thm_boa} holds by definition of the loss function.

\subsection{One-step lookahead approximations}

Since the Bayes optimal algorithm is difficult to calculate, several approximations have been considered.
The first is a natural one-step lookahead algorithm that draws arm
\begin{equation}
I(t) = \argmax_{i} \Ex_{H(t-1)}[\mu_{J(t)} - \mu_{J(t-1)}|I(t)=i],
\end{equation}
which corresponds to the knowledge gradient \citep{Gupta1994,frazier2008}. If one considers only the ``positive'' side of the $t$-th draw, then we have the slight modification
\begin{equation}
I(t) = \argmax_{i} \Ex_{H(t-1)}[\Ind[\mu_{J(t)} - \max_{i'} \hatmu_i(t-1)]|I(t)=i],
\end{equation}
which corresponds to the expected improvement \citep{Jones1998,bull11}. \added{Here, $\Ind[\EA]$ is the indicator function; for any event $\EA$, let $\Ind[\EA] = 1$ if $\EA$ holds or $0$ otherwise.} %

Several known results demonstrate the suboptimality of KG and EI in $K$-armed BAI, which we describe in Section~\ref{sec_related}. In this paper, the convergence rate is characterized according to the Bayes optimal algorithm rather than its one-step approximations KG and EI. We show that the Bayes optimal algorithm performs very differently from the frequentist BAI, despite not making any myopic approximation. 

\section{Main Result: Suboptimality of the Bayes Optimal Algorithm}
\label{sec_main}

The following is this paper's main theorem.
\begin{thm}{\rm (Main theorem)}\label{thm_reglowinst}
There exist $K \in \Natural$, $C > 0$, and an instance $\bmutrue \in (-\infty, \infty)^K$ such that 
\begin{equation}\label{ineq_reglowinst}
\SRegFreqTrue(T, \EA^*) = \Omega(T^{-C})
\end{equation}
for a Bayes optimal algorithm $\EA^*$. 
\end{thm}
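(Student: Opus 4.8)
The plan is to construct a prior $\bH$ and a true instance $\bmutrue$ such that the truly best arm $\ist$ is, paradoxically, an arm that the Bayes optimal policy $\ABayesOpt(\bH)$ is reluctant to sample; I will call it the unpopular arm $\ifew = \ist$. Because the policy is fully determined by the rule $I(t) = \argmax_{i} \BI_i(\bH(t-1),t)$ together with $J(T) = \argmax_{i} \hatmu_i(T)$, the whole argument reduces to two tasks: (i) showing that, along the frequentist sample path generated by $\bmutrue$, the policy draws $\ifew$ only $O(\log T)$ times on a polynomially likely event, and (ii) converting this under-sampling into a polynomially large probability of recommending the wrong arm, which then yields $\SRegFreqTrue(T,\ABayesOpt) = \Omega(T^{-C})$ as in Eq.~\eqref{ineq_reglowinst}.

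First I would fix the construction. I would take $\bH$ continuous (so that the $\Theta(1/T)$ Bayesian rate of Eq.~\eqref{ineq_bayessreg} is unaffected and $\ABayesOpt$ genuinely is Bayes optimal) but arranged so that $\ifew$ has small prior odds of being optimal, while setting $\bmutrue$ so that $\mu_\ifew = \must = \mu_1 + \Delta$ for a fixed gap $\Delta>0$ over a prior-favored arm $1$. The technical core is to turn the intractable recursion of Eq.~\eqref{ineq_recursion} into usable bounds on the expected Bellman improvement of Eq.~\eqref{ineq_ebi}. Concretely, I would prove the lower bound on the loss function $L$ advertised in the abstract and use it to upper bound $\BI_\ifew$ and lower bound $\BI_1$ throughout a ``trap'' region of posteriors in which $\hatmu_\ifew$ sits below the leader; this shows that inside the trap the unpopular arm is never the EBI maximizer and hence is not drawn.

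Next I would close the loop with a self-consistent, inductive-in-$t$ argument organized around three events: the under-sampling event $\funder = \{N_\ifew(T) \le \CUnder \log T\}$, the event $\fclose$ that the running mean $\hatmu_\ifew(t)$ remains below the leader for all $t$, and a no-drift event $\fnodrift$ keeping the well-sampled competitors concentrated so the leader never looks weak enough to trigger an exploratory draw of $\ifew$. The EBI bounds from the previous step yield the implication $\fnodrift \cap \fclose \Rightarrow \funder$: once $\ifew$ has accumulated $\Theta(\log T)$ adverse samples its posterior is simultaneously confident and centered below the leader, its EBI falls below that of arm $1$, and it is frozen out for the rest of the horizon. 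Since $\ifew = \ist$ is truly best by $\Delta$, the event $\fclose$ asks the average of its $\Theta(\log T)$ samples to deviate downward by $\Delta$, which by a Gaussian lower-tail (anti-concentration) estimate has probability $e^{-\Theta(\CUnder \Delta^2 \log T)} = T^{-\Theta(\CUnder\Delta^2)}$, i.e.\ polynomially small rather than exponentially small. On $\fnodrift \cap \fclose$ the recommendation $J(T) = \argmax_i \hatmu_i(T)$ is arm $1$ rather than $\ifew$, so the conditional regret is at least $\Delta$, and with $C$ just above $\CUnder\Delta^2$ we obtain $\SRegFreqTrue(T,\ABayesOpt) \ge \Delta\,\Prob_{\bmutrue}(\fnodrift \cap \fclose) = \Omega(T^{-C})$.

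The hard part will be the middle step: extracting quantitative bounds on $\BI_i$ from the Bellman recursion and resolving the circularity between under-sampling and the trapped posterior. Unlike a generic change-of-measure lower bound---which would apply to every algorithm and so could not explain the contrast with the exponential rate of Eq.~\eqref{ineq_freqreg}---everything here hinges on showing that the Bayes optimal policy \emph{itself} declines to sample $\ifew$, so the prior's small odds on $\ifew$ must be shown to suppress $\BI_\ifew$ below $\BI_1$ for the entire trajectory; this is exactly where the lower bound on $L$ does the real work. I expect the anti-concentration estimate certifying that $\Prob_{\bmutrue}(\fnodrift \cap \fclose)$ is only polynomially small, together with a careful comparison to a neighbouring sub-instance in which $\ifew$ is suboptimal (to confirm that $O(\log T)$ samples cannot distinguish the two worlds), to be the most calculation-heavy components.
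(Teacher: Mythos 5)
Your overall architecture---a $\poly(T^{-1})$-probability trap event, an EBI comparison that freezes out the true best arm for the rest of the horizon, and a wrong final recommendation worth constant regret---matches the paper's outline, but the construction you build it on breaks at the freeze-out step, which is where all the difficulty lives. In your trap the posterior has one heavily sampled, clearly leading arm ($N_1 = \Theta(T)$, constant empirical lead) and one lightly sampled best arm $\ifew$ ($N_{\ifew} = \Theta(\log T)$, empirical mean a constant below the leader). In that configuration the EBI ordering is the \emph{opposite} of what you need: the posterior probability that $\ifew$ is best is only polynomially small (its posterior standard deviation is $\Theta(1/\sqrt{\log T})$), one sample of $\ifew$ moves its posterior mean by $\Theta(1/\log T)$, while one sample of the leader moves its posterior mean by only $\Theta(1/T)$; a one-step estimate of the loss then gives $\BI_{\ifew} \approx (\log T)^{-2}\,q$ versus $\BI_1 \approx T^{-2}\,q$, where $q$ is the posterior flip probability. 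So the Bayes optimal algorithm \emph{does} sample $\ifew$, and your implication $\fnodrift \cap \fclose \Rightarrow \funder$ fails. The prior cannot repair this: a fixed continuous prior shifts the posterior mean by $O(1/N_{\ifew})$ after $N_{\ifew}=\omega(1)$ samples, negligible against the posterior standard deviation $1/\sqrt{N_{\ifew}}$, so ``small prior odds'' give at most constant-factor suppression. This is exactly why the paper does not use a biased prior or distinct means: it takes a flat prior, three arms with $\mutrue_1 = \mutrue_2 = 0 < \mutrue_3 = \Gap$, makes arm $3$ unpopular through a \emph{single} unlucky draw of depth $\CUnder\sqrt{\log T}$ (still a $\poly(T^{-1})$ event, with exponent tunable through $\CUnder$), and---crucially---keeps arms $1$ and $2$ exactly tied and empirically within $T^{-2}$ of each other. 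The tie is the engine: it keeps $\BI_1, \BI_2 = \Omega(T^{-118})$ at every round, because distinguishing two tied arms never stops being valuable, while $\BI_3 = O\bigl(T^{-(\CUnder-6)^2/4}\bigr)$ is a strictly smaller polynomial, so arm $3$ loses the EBI comparison forever. An instance in which all means are distinct, like yours, is precisely what the paper's discussion section flags as an open problem.

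Separately, the step you postpone (``prove the lower bound on $L$ advertised in the abstract'') is not a deferrable detail---it is the theorem. The paper controls the intractable recursion with three concrete devices, and your sketch has no counterpart for any of them: (i) a mock-sample coupling showing $\BI_i(\bHrec(t-1),t) \le \Ex_{\bHrec(t-1)}[\Ind[\ist(\bmu)=i]\SRegFreq(T)]$, which turns the upper bound on $\BI_3$ into a posterior tail probability; (ii) a reduction from the three-armed loss to the two-armed loss, with error again bounded by that same tail; and (iii) an exact characterization of the two-armed Bayes optimal algorithm (it alternates draws, making the continuation path deterministic) combined with an anti-concentration bound forcing $\hatmu_1(T-1)-\hatmu_2(T-1)$ into $[T^{-2}, 2T^{-2}]$ with probability $\Omega(T^{-110})$, which is what lower-bounds $\BI_1,\BI_2$. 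Item (iii) has no analogue in your construction precisely because you have no tied pair, and the change-of-measure comparison to a neighbouring sub-instance that you mention cannot substitute for it: as you yourself observe, such arguments apply to every algorithm and therefore cannot yield the algorithm-specific EBI lower bounds on which the freeze-out depends.
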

The proof of Theorem~\ref{thm_reglowinst} explicitly constructs an instance for which Eq.~\eqref{ineq_reglowinst} holds.
Notably, Theorem~\ref{thm_reglowinst} implies the following.
\begin{remark}
Letting $\bmutrue$ be an instance satisfying Theorem~\ref{thm_reglowinst}, for \replaced{a frequentist algorithm}{an algorithm} $\AFreq \in \EA$ \replaced{, such that Eq.~\eqref{ineq_freqreg} holds}{that has a frequentist rate}, 
\begin{equation}
\limsup_{T \rightarrow \infty} \frac{\SRegFreqTrue(T, \ABayesOpt)}{\SRegFreqTrue(T, \AFreq)} = \infty.
\end{equation}
\end{remark}
That is, the frequentist regret of the Bayes optimal algorithm for some instance is arbitrarily worse than \replaced{a good frequentist BAI algorithm}{an algorithm with a frequentist rate} (e.g., \replaced{the successive rejects algorithm}{SR and uniform}).\footnote{Note also that it holds that 
$
\SRegBayes(T, \ABayesOpt) / \SRegBayes(T, \AFreq) \le 1
$
by the definition of the Bayes optimal algorithm.
}

Theorem~\ref{thm_reglowinst} implies that the nature of the Bayes optimal algorithm $\ABayesOpt$ differs substantially from \added{that of the algorithms designed for a frequentist measure}. 
A frequentist prefers an algorithm that exponentially converges uniformly over all instances, but there is some instance for which $\ABayesOpt$ does not converge exponentially.
The Bayes optimal algorithm $\ABayesOpt$ optimizes the expected performance over the prior and places greater weight on cases with a small gap between the optimal and suboptimal arms; it does not explore arms that only improve the higher-order polynomial of $T^{-1}$.

\section{Proof of Theorem~\ref{thm_reglowinst}}
\label{sec_proof}

The high-level idea behind the proof of Theorem~\ref{thm_reglowinst} can be understood by considering a three-armed instance in which $\mu_1 = \mu_2 < \mu_3$. 
With a polynomially small probability in $T^{-1}$, arm $3$ (the best arm) is underestimated, and consequently, $\hatmu_1, \hatmu_2 > \hatmu_3$. When $\hatmu_3$ is very small, the Bayes optimal algorithm allocates all samples to arms $1$ and $2$ because this more effectively reduce the Bayesian simple regret, which leads to underestimation of the best arm.

Formally, let $\Gap >0$ be an arbitrary constant. Let $\bmutrue = (\mutrue_1,\mutrue_2,\mutrue_3) = (0, 0, \Gap)$. 
For ease of discussion, let $T$ be odd and let $T'$ be such that $2 T' + 1 = T$.
Furthermore, let $\CUnder > 0$ be such that
\begin{align}\label{ineq_cularge}
118 < \frac{(C_U-6)^2}{4}
\end{align}
holds.\footnote{It suffices that $C_U \ge 30$.}
Let $S_{i,n}$, $N_{i,n}$, and $\hatmu_{i,n}$ be the values of $S_i(t)$, $N_i(t)$, and $\hatmu_i(t)$ when $N_i(t) = n$.
We also define the following events.  
\begin{align}
\EX &:= \{\hatmu_{3,1} \le -\CUnder \sqrt{\log T}\}, \\ %
\EY_i &:= \left\{|\hatmu_{i,T'}| \le 
\frac{1}{T^2}
\right\},\\
\EZ_i &:= \bigcap_{n \in [T']} %
\left\{
|\hatmu_{i,n} - \hatmu_{i,T'}| 
\le
4\sqrt{\log T}
\sqrt{
\sum_{m=n}^{T'-1} 
\frac{1}{m^2}
}
\right\},\\
\EW(t) &:= \EX \cap \EY_1 \cap \EY_2 \cap \EZ_1 \cap \EZ_2 \cap \{N_1(t-1), N_2(t-1) \le T', N_3(t-1)\le 1\}.
\end{align}
Event $\EX$ indicates that the best arm is largely underestimated in the first round.
For $i=1,2$, event $\EY_i$ indicates that the empirical mean of arms $i$ with $T'$-th draw is very close to the mean, and 
event $\EZ_i$, $i=1,2$, indicates that the mean of arms $i$ does not drift drastically. 
Event $\EW(t)$ indicates that both arms $1$ and $2$ are not drawn in more than half the rounds and that arm $3$ is only drawn once. 
For any two events $\EA$ and $\EB$, let $\EA, \EB = \EA \cap \EB$.

Henceforth, all probabilities are under the frequentist model with the true parameter $\bmutrue$. Derivations of all \replaced{Lemmas}{lemmas} appear in the appendices.
\begin{lem}{\rm (Underestimation of the best arm)}\label{lem_underest} We have
\begin{equation}
\Prob[\EX]
> \frac{T^{-\frac{(\CUnder+\Gap)^2}{2}}}{2(\CUnder+\Gap)\sqrt{2\pi \log T}} =: \funder.
\end{equation}
\end{lem}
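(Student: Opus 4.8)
The plan is to exploit that, under the initialization $I(t)=t$ for $t\in[K]$, the quantity $\hatmu_{3,1}$ is nothing but the single reward drawn from arm $3$, so under the true instance $\bmutrue=(0,0,\Gap)$ it is distributed as $\Normal(\Gap,1)$. Setting $Z=\hatmu_{3,1}-\Gap\sim\Normal(0,1)$ and using the symmetry of the standard Gaussian, I would first rewrite the probability of interest as a standard-normal upper-tail probability,
\begin{equation}
\Prob[\EX]=\Prob[\hatmu_{3,1}\le-\CUnder\sqrt{\log T}]=\Prob\!\left[Z\ge\CUnder\sqrt{\log T}+\Gap\right].
\end{equation}

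Next I would massage the tail argument to produce the clean exponent appearing in $\funder$. Since $\sqrt{\log T}\ge1$ for $T\ge3$, we have $\CUnder\sqrt{\log T}+\Gap\le(\CUnder+\Gap)\sqrt{\log T}=:b$, and because the Gaussian upper tail is decreasing this gives $\Prob[\EX]\ge\Prob[Z\ge b]$. I would then invoke the standard (strict) Gaussian tail lower bound
\begin{equation}
\Prob[Z\ge b]>\frac{1}{\sqrt{2\pi}}\,\frac{b}{1+b^2}\,e^{-b^2/2},
\end{equation}
and observe that $b^2=(\CUnder+\Gap)^2\log T$ yields $e^{-b^2/2}=T^{-(\CUnder+\Gap)^2/2}$, which is exactly the numerator of $\funder$.

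To land on the precise prefactor, I would finish with the elementary inequality $\frac{b}{1+b^2}\ge\frac{1}{2b}$, valid whenever $b\ge1$ (here $b=(\CUnder+\Gap)\sqrt{\log T}\ge1$ for all large $T$ since $\CUnder>0$). Chaining these bounds gives
\begin{equation}
\Prob[\EX]>\frac{1}{\sqrt{2\pi}}\,\frac{1}{2b}\,e^{-b^2/2}=\frac{T^{-\frac{(\CUnder+\Gap)^2}{2}}}{2(\CUnder+\Gap)\sqrt{2\pi\log T}}=\funder,
\end{equation}
as claimed, with the strictness supplied by the Mills-ratio bound. The computation is entirely elementary, so I do not expect a genuine conceptual obstacle; the only point requiring care is the bookkeeping that matches the stated constant, specifically the monotonicity step that absorbs the additive $\Gap$ into the exponent as $(\CUnder+\Gap)^2$, and the factor of $2$ arising from the $\frac{b}{1+b^2}\ge\frac{1}{2b}$ simplification.
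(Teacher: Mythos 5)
Your proof is correct and follows essentially the same route as the paper, which simply invokes its Normal tail bound (Lemma \ref{lem:normpdf}) at the point $\CUnder\sqrt{\log T}+\Gap$, absorbs $\Gap$ into the exponent via $\CUnder\sqrt{\log T}+\Gap\le(\CUnder+\Gap)\sqrt{\log T}$, and simplifies the polynomial prefactor to $\frac{1}{2b}$ exactly as you do. Your use of the Mills-ratio form $\frac{b}{1+b^2}\phi(b)$ in place of the paper's Feller bound $\bigl(\frac{1}{b}-\frac{1}{b^3}\bigr)\phi(b)$ is an immaterial variant of the same standard tail estimate.
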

Lemma~\ref{lem_underest} directly follows from the tail bound of the normal distribution (Lemma~\ref{lem:normpdf}). 

\begin{lem}{\rm (Probability that the two means are close)}
\label{lem_close}
The following holds:
\begin{equation}
\Prob[\EY_i] \ge \frac{1}{2} \sqrt{\frac{1}{\pi T^3}}
=: \fclose.
\end{equation}
\end{lem}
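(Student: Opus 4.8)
The plan is to exploit that, under the frequentist model with $\bmutrue = (0,0,\Gap)$, the rewards from arm $i \in \{1,2\}$ are i.i.d.\ draws from $\Normal(0,1)$, so the empirical mean of the first $T'$ such samples satisfies $\hatmu_{i,T'} \sim \Normal(0, 1/T')$. (Here $\hatmu_{i,T'}$ is just the average of $T'$ fixed i.i.d.\ draws, independent of when the algorithm chooses to pull the arm.) Writing the target probability $\Prob[\EY_i]$ as the integral of this density over the symmetric interval $[-1/T^2,\,1/T^2]$, I would lower-bound it by the product of the interval width $2/T^2$ and the minimum value of the density on that interval. Since the $\Normal(0,1/T')$ density is symmetric and unimodal about $0$, its minimum over the interval is attained at the endpoints, yielding
\begin{equation}
\Prob[\EY_i] \ge \frac{2}{T^2}\cdot \sqrt{\frac{T'}{2\pi}}\, e^{-\frac{T'}{2T^4}}.
\end{equation}

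The remainder is bookkeeping on constants. Substituting $T' = (T-1)/2$ collapses the variance normalization $\sqrt{T'/(2\pi)}$ to $\sqrt{T-1}/(2\sqrt{\pi})$, so the prefactor becomes $\frac{\sqrt{T-1}}{\sqrt{\pi}\,T^2}$. Comparing with the target $\fclose = \frac{1}{2\sqrt{\pi}\,T^{3/2}}$ reduces the claim to verifying $2\sqrt{(T-1)/T}\, e^{-T'/(2T^4)} \ge 1$; since $T' < T$ forces the exponent $T'/(2T^4) < 1/(2T^3)$, the exponential factor is bounded below by a constant arbitrarily close to $1$, and the inequality holds for all odd $T \ge 3$ with room to spare.

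There is essentially no structural obstacle: the statement is a standard Gaussian small-ball (anti-concentration) estimate, the counterpart of the tail bound already invoked for Lemma \ref{lem_underest}. The only point requiring care is matching the exact numerical constant $\tfrac{1}{2}\sqrt{1/(\pi T^3)}$ — in particular, ensuring that the factor gained from the prefactor is not erased by the variance normalization and the mild exponential loss incurred by evaluating the density at the interval endpoints rather than at its peak. Because the resulting margin is comfortable, the crude endpoint lower bound on the density suffices, and no refined estimate (such as the full error-function expansion of the Gaussian integral) is needed.
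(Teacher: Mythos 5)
Your proof is correct and takes essentially the same route as the paper's: both view $\hatmu_{i,T'}$ as a zero-mean Gaussian with variance $1/T'$ and lower-bound the small-ball probability $\Prob[|\hatmu_{i,T'}| \le T^{-2}]$ by the interval width times the density at the interval's endpoint, followed by constant bookkeeping. If anything, your version is slightly more careful: you carry the exact variance $1/T' = 2/(T-1)$ through, whereas the paper replaces it with $2/T$ (claiming $1/T' \le 2/T$, which is actually the wrong direction) before performing the identical endpoint estimate.
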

By using the fact that $\hatmu_{i,T'} \sim \Normal(0, (T')^{-1})$, Lemma~\ref{lem_close} follows from the tail bound of the normal distribution. Proofs of Lemma~\ref{lem_close} and subsequent lemmas are found in the appendix.

\begin{lem}{\rm (Non-drastic drift)}
\label{lem_smldrift}
The following holds for each arm $i = 1, 2$:
\begin{equation}
\Prob[\EY_i,\EZ_i^c] \le 2T^{-7/2} =: \fnodrift.
\end{equation}
\end{lem}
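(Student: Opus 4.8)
The plan is to discard the event $\EY_i$ at the outset and bound the drift event alone via $\Prob[\EY_i,\EZ_i^c]\le\Prob[\EZ_i^c]$; dropping $\EY_i$ costs nothing and the resulting estimate will carry ample slack. Writing $D_n:=\hatmu_{i,n}-\hatmu_{i,T'}$, the complement $\EZ_i^c$ is the union over $n\in[T']$ of the events $\{|D_n|>4\sqrt{\log T}\,(\sum_{m=n}^{T'-1}\frac1{m^2})^{1/2}\}$, so a union bound over $n$ reduces the whole task to controlling the tail of a single $D_n$. Everything then rests on identifying the law of $D_n$ and checking that the prescribed threshold sits comfortably beyond its standard deviation.

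First I would record the exact distribution of $D_n$. Modelling each arm by an i.i.d.\ reward-stack $X_{i,1},X_{i,2},\dots\sim\Normal(\mu_i,1)$ read off in the order arm $i$ is pulled, the statistic $\hatmu_{i,n}=\frac1n\sum_{j\le n}X_{i,j}$ is a function of this fixed stream alone; in particular its law does not depend on the (adaptive, random) pull counts. Splitting $\hatmu_{i,T'}$ into the average $A$ of the first $n$ draws and the independent average $B$ of the remaining $T'-n$ draws gives $D_n=\frac{T'-n}{T'}(A-B)$, whence $\Ex[D_n]=0$ (the true mean $\mu_i$ cancels, so the identity is the same for arms $2$ and $3$) and $\Var(D_n)=\frac1n-\frac1{T'}$. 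The point of the $\sum 1/m^2$ form in $\EZ_i$ is now visible: $\frac1n-\frac1{T'}=\sum_{m=n}^{T'-1}\frac1{m(m+1)}\le\sum_{m=n}^{T'-1}\frac1{m^2}$, so the threshold defining $\EZ_i$ is at least $4\sqrt{\log T}$ standard deviations of $D_n$, uniformly in $n$.

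With this in hand the tail estimate is routine. Since $D_n\sim\Normal(0,\Var(D_n))$ and the threshold exceeds $4\sqrt{\log T}$ standard deviations, the Gaussian tail bound (Lemma \ref{lem:normpdf}) gives $\Prob[|D_n|>4\sqrt{\log T}(\sum_{m=n}^{T'-1}\frac1{m^2})^{1/2}]\le 2\bar{\Phi}(4\sqrt{\log T})\le 2e^{-8\log T}=2T^{-8}$. Summing over the at most $T'<T$ values of $n$ yields $\Prob[\EZ_i^c]\le 2T\cdot T^{-8}=2T^{-7}$, and therefore $\Prob[\EY_i,\EZ_i^c]\le 2T^{-7}\le 2T^{-7/2}=\fnodrift$, with room to spare.

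The main obstacle is not the final estimate but the first, distributional, step: one must justify that $\hatmu_{i,n}$ carries the clean law $\Normal(\mu_i,1/n)$ even though arm $i$ is sampled at data-dependent rounds. The reward-stack construction decouples the random pull schedule from the reward values and is exactly what makes the variance identity $\Var(D_n)=\frac1n-\frac1{T'}$ legitimate; everything downstream is a one-line Gaussian tail plus a union bound. I note that the union bound is deliberately wasteful---it already delivers $2T^{-7}$, far below the stated $2T^{-7/2}$---so no maximal inequality for the reverse martingale $(D_n)$ is needed, the gap between $4\sqrt{\log T}$ deviations and the target exponent being more than enough.
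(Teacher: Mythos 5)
Your proof is correct, but it follows a genuinely different route from the paper's, and in fact a more robust one. Both arguments share the outer skeleton: write $\EZ_i^c$ as a union over $n\in[T']$, apply a union bound, and finish each term with the Gaussian tail of Lemma \ref{lem:normpdf}; both also rely on the comparison $\sum_{m=n}^{T'-1} m^{-2} \ge \tfrac1n-\tfrac1{T'}$ (you via the telescoping sum of $\tfrac{1}{m(m+1)}$, the paper via an integral bound). The divergence is in the per-$n$ step. You discard $\EY_i$ outright and use the exact marginal law $\hatmu_{i,n}-\hatmu_{i,T'}\sim\Normal\left(0,\tfrac1n-\tfrac1{T'}\right)$ (legitimized by the reward-stack decoupling, which the paper leaves implicit), so the $\EZ_i$-threshold is at least $4\sqrt{\log T}$ standard deviations of the drift uniformly in $n$, giving $2T^{-8}$ per term and $2T^{-7}\le\fnodrift$ in total. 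The paper instead keeps $\EY_i$, weakens the threshold a second time to $4\sqrt{\log T}\,\sqrt{T'-n}/T'$ --- which is only $4\sqrt{\log T}\sqrt{n/T'}$ standard deviations of the drift, hence far too small for small $n$ --- and then uses $\EY_i$ to convert the joint event into a tail event for the suffix average $\hatmu_{i,n,T'}$ of the last $T'-n$ samples. That conversion loses a factor $n/T'$: the joint event only forces $|\hatmu_{i,n,T'}| > 4\sqrt{\log T}\, n/(T'\sqrt{T'-n}) - T^{-2}$, and indeed for $n=1$ the joint event has probability of order $\Prob[\EY_i]=\Theta(T^{-3/2})$, not the $T^{-9/2}$ the paper's chain asserts. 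So your approach buys more than a better exponent ($2T^{-7}$ versus $2T^{-7/2}$): by refusing to weaken the threshold below the true standard deviation, you bypass the one step in the paper's own argument that does not hold as written, and your bound is valid for every arm regardless of its mean, since the drift is centered in all cases.
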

We use $\poly(T^{-1})$ to describe a quantity that is polynomial in $T^{-1}$.
According to Lemmas~\ref{lem_underest}--\ref{lem_smldrift}, $\EX$, $\EY_1$, $\EY_2$, $\EZ_1$, and $\EZ_2$ occur with probability $\poly(T^{-1})$. 
The following lemmas bound the EBI under events $\EX$, $\EY_1$, $\EY_2$, $\EZ_1$, and $\EZ_2$.
\begin{lem}\label{lem_nodraw_one}
Assume that $\EW(t)$ holds.
Then, 
\begin{equation}\label{ineq_lowerupper}
\BI_3(\bHrec(t-1), t) = O\left(T^{-\frac{(C_U-6)^2}{4}}\right)
= o\left(T^{-118}\right).
\end{equation}
\end{lem}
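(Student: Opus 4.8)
The plan is to upper-bound the expected Bellman improvement
\[
\BI_3(\bHrec(t-1),t) = L(\bHrec(t-1),t+1) - L_3(\bHrec(t-1),t)
\]
by showing that a single draw of arm $3$ is, with overwhelming probability, \emph{worthless}. Conditioned on $\EW(t)$, arm $3$ has been sampled exactly once, so $N_3(t-1)=1$ and $\hatmu_{3,1}\le-\CUnder\sqrt{\log T}$, placing its posterior far below those of arms $1,2$, whose empirical means are pinned near $0$ by $\EY_1,\EY_2$ and cannot drift far by $\EZ_1,\EZ_2$. A fresh sample from arm $3$ can lower the loss only if it is large enough to bring arm $3$ back into contention for being the best arm; otherwise the optimal continuation treats arm $3$ as dominated and the loss equals the discard value $L(\bHrec(t-1),t+1)$, contributing nothing to the EBI.

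First I would record the relevant randomness. Since $N_3(t-1)=1$, in the posterior world $\tilmu_3\sim\Normal(\hatmu_{3,1},1)$ and $X(t)\sim\Normal(\tilmu_3,1)$, so the marginal law of the next sample is $X(t)\sim\Normal(\hatmu_{3,1},2)$. Writing $\BI_3(\bHrec(t-1),t)=\Ex_{X(t)}\!\left[L(\bHrec(t-1),t+1)-L(\bHrec(t),t+1)\right]$, I would split the expectation at the threshold $\theta=-6\sqrt{\log T}$. The margin $6\sqrt{\log T}$ is chosen so that when $X(t)<\theta$ the updated empirical mean $\hatmu_3=(\hatmu_{3,1}+X(t))/2$ stays below the empirical means of arms $1,2$ by more than the drift windows permitted by $\EZ_1,\EZ_2$; on this event arm $3$ remains (and stays) clearly dominated, the problem reduces to the two-arm subproblem on arms $1,2$ over the remaining $T-t$ rounds, and the two loss values coincide, so the integrand contributes zero.

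It then remains to bound the rare event $\{X(t)\ge\theta\}$. Its probability is controlled by the Gaussian tail (Lemma~\ref{lem:normpdf}): since $X(t)\sim\Normal(\hatmu_{3,1},2)$ with $\hatmu_{3,1}\le-\CUnder\sqrt{\log T}$,
\begin{equation}
\Prob[X(t)\ge\theta]
\le \Prob\!\left[Z\ge \frac{(\CUnder-6)\sqrt{\log T}}{\sqrt{2}}\right]
\le T^{-\frac{(\CUnder-6)^2}{4}},
\end{equation}
where $Z\sim\Normal(0,1)$. On this event the per-instance improvement is at most $L(\bHrec(t-1),t+1)=O(\polylog(T))$ (the loss is nonnegative, and under $\EW(t)$ the optimal Bayesian regret for the near-tied arms $1,2$ is polylogarithmically bounded). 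Multiplying, $\BI_3(\bHrec(t-1),t)=O\!\left(T^{-(\CUnder-6)^2/4}\right)$, and the stated $o(T^{-118})$ follows from the standing assumption \eqref{ineq_cularge} that $118<(\CUnder-6)^2/4$.

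The main obstacle is certifying the claim on $\{X(t)<\theta\}$: that once arm $3$ is sufficiently dominated its posterior is irrelevant to both the optimal policy and the resulting Bayesian regret, so the two loss values agree up to negligible error. Because $L$ is defined through the full recursion \eqref{ineq_recursion} over all three arms, I cannot invoke a simple monotonicity argument; instead I would show that the recursion's value is insensitive to arm $3$'s posterior when arm $3$ lies well below arms $1,2$ and their drift windows, by reducing the remaining $T-t$ rounds to the two-arm subproblem on arms $1,2$, whose dynamics never involve arm $3$. A secondary technical point is bounding the high-event improvement by $O(\polylog(T))$ without assuming bounded rewards, which requires a mild truncation of the Gaussian posterior tails and costs only polylogarithmic factors that are dwarfed by the polynomial gain $T^{-(\CUnder-6)^2/4}$.
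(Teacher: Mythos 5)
Your split of the posterior-predictive expectation at $\theta=-6\sqrt{\log T}$ contains a genuine gap at exactly the step you flag as the main obstacle, and it is not a repairable detail but the heart of the lemma. On $\{X(t)<\theta\}$ the integrand $L(\bHrec(t-1),t+1)-L(\bHrec(t),t+1)$ is \emph{not} zero: both quantities are Bayesian expected regrets under posteriors that still place positive (tiny, but nonzero) mass on $\{\ist(\bmu)=3\}$, and the two arm-$3$ posteriors differ (mean $\hatmu_{3,1}$ with variance $1$ versus mean $(\hatmu_{3,1}+X(t))/2$ with variance $1/2$), so their arm-$3$ contributions do not cancel. Moreover, your justification that arm $3$ ``remains (and stays) clearly dominated'' leans on the drift windows $\EZ_1,\EZ_2$, but those events constrain the \emph{realized} history generated by $\bmutrue$; inside the Bellman recursion the future samples of arms $1$ and $2$ are drawn from the posterior (Section \ref{sec_posterior_evol}), and nothing prevents their simulated means from collapsing toward $\hatmu_3$ with positive probability, in which case arm $3$'s posterior affects both the continuation and the terminal regret $\Ex_{\bHrec(T)}[\Delta_{J(T)}]$. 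So exact equality on the good event is unavailable, and any honest version of your two-arm reduction produces an error term of exactly the same nature as the quantity you are trying to bound.

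The paper closes this hole with a coupling you would have to reinvent: the mock-sample lemma (Lemma \ref{lem_upper}) couples the continuation of $L_i(\bHrec(t-1),t)$ with an algorithm that imputes the round-$t$ sample by an external randomizer $U(t)$ of the same distribution and never recommends arm $i$, yielding the uniform inequality $\BI_i(\bHrec(t-1),t)\le \SRegBayesi{\bHrec(t-1)}{i}$ with no split on $X(t)$ at all; Lemma \ref{lem_oneoptpost} then bounds $\SRegBayesi{\bHrec(t-1)}{3}$ by the same Gaussian tail computation you perform on your rare event (same exponent $(\CUnder-6)^2/4$). If instead you patch your good event using the paper's reduction lemma (Lemma \ref{lem_exclude}), the residual there is controlled by $\SRegBayesi{\bHrec(t)}{3}$ for the \emph{updated} posterior; its mean-gap-to-variance ratio gives an exponent of order $(\CUnder-6)^2/12$, strictly worse than $(\CUnder-6)^2/4$. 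Then the bound stated in the lemma does not follow, and---more importantly for the downstream argument---condition \eqref{ineq_cularge}, namely $118<(\CUnder-6)^2/4$, would no longer suffice for the $o(T^{-118})$ conclusion used in the proof of Theorem \ref{thm_reglowinst}, so $\CUnder$ would have to be enlarged. In short: your rare-event half is fine and essentially reproduces Lemma \ref{lem_oneoptpost}, but the good-event half needs the coupling idea, and once inserted it changes (and dominates) your bound.
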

\begin{lem}\label{lem_nodraw_two}
Assume that $\EW(t)$ holds for a $t<T$.
Then, 
\begin{equation}
\BI_1(\bHrec(t-1), t), \BI_2(\bHrec(t-1), t) = \Omega( T^{-118} ).
\end{equation}
Moreover, if $N_j(t) = T'$, $j=1,2$, then\footnote{This implies $N_k(t) < T'$ for $k \in \{1,2\}\setminus\{j\}$.} 
\begin{equation}\label{ineq_nomorethanhalf}
\BI_j(\bHrec(t-1), t) < \max(\BI_1(\bHrec(t-1), t), \BI_2(\bHrec(t-1), t)).
\end{equation}
\end{lem}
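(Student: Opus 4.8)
The plan is to use the fact that, conditioned on $\EW(t)$, arm $3$ is effectively eliminated, so that both $L(\bHrec(t-1),t+1)$ and $L_i(\bHrec(t-1),t)$ behave like the loss of a two-armed problem on arms $1,2$; the two claims then reduce to understanding the value of a single observation in that subproblem. First I would make the two-arm reduction precise. Under $\EW(t)$ arm $3$ has posterior mean at most $-\CUnder\sqrt{\log T}$ and variance $1$ (since $N_3=1$), so the posterior probability that $\mu_3\ge\max(\mu_1,\mu_2)$ — hence that arm $3$ is either the best arm or the recommended $\argmax_i\hatmu_i(T)$ — is $o(T^{-118})$ by the Gaussian tail estimate already used in Lemma~\ref{lem_nodraw_one}. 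Because the extra observation counted by the EBI falls on arm $i\in\{1,2\}$ and never on arm $3$, arm $3$ enters both loss terms as the same additive $o(T^{-118})$ perturbation, so it changes $\BI_i$ by at most $o(T^{-118})$; I may therefore replace $\Delta_{J(T)}$ throughout by $|\mu_1-\mu_2|\,\mathbf{1}[\text{the larger-mean arm is not recommended}]$, up to an additive $o(T^{-118})$.

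For the $\Omega(T^{-118})$ bound I would turn the EBI into the value of one observation for the final decision via a coupling. Let $\pi^\star$ attain $L(\bHrec(t-1),t+1)$, and let $\pi'$ draw arm $i$ once at round $t$, then follow $\pi^\star$'s sampling decisions on the same external randomness and noise, ignoring the extra observation for sampling but including it in the final $\argmax$ (admissible, since algorithms in $\EA$ may discard samples). Since $\pi'$ draws arm $i$ first, $L_i(\bHrec(t-1),t)\le\mathrm{Reg}(\pi')$, whence $\BI_i\ge\mathrm{Reg}(\pi^\star)-\mathrm{Reg}(\pi')=\Ex[\mu_{J'(T)}-\mu_{J^\star(T)}]$. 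Under the coupling the two runs have identical sufficient statistics except that arm $i$ carries one extra observation $X\sim\Normal(\mu_i,1)$ in $\pi'$, so the recommendations differ only when $X$ tips $\hatmu_i(T)$ across $\hatmu_{i'}(T)$; the difference is a one-dimensional Gaussian integral in $X$, is nonnegative because the Bayes-optimal decision based on $N_i+1$ observations has Bayes risk no larger than the one based on $N_i$, and is bounded below by restricting to the polynomial-probability event that the true gap $\mu_1-\mu_2$ has the natural scale $1/\sqrt{\Tdash}$ while the empirical gap is within one-sample-flip distance $1/\Tdash$ and $X$ lies on the correcting side. The true order is $\Theta(T^{-3/2})$, so the slack against $T^{-118}$ is enormous.

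For the ``moreover'' part I would argue by diminishing returns. A bookkeeping step first: at any pulling round $t\le T$ we have $N_1(t-1)+N_2(t-1)+N_3(t-1)=t-1\le 2\Tdash$ with $N_3(t-1)\ge 1$, so $N_j(t-1)=\Tdash$ forces the other arm $k\in\{1,2\}$ to satisfy $N_k(t-1)\le\Tdash-1<\Tdash$; in particular the symmetric case $N_1=N_2=\Tdash$ cannot arise during a pulling round. It then suffices that the value of one observation is strictly decreasing in the current sample count: an observation on arm $i$ shrinks $\mathrm{Var}(\mu_i)$ from $1/N_i$ to $1/(N_i+1)$, a reduction $\tfrac{1}{N_i(N_i+1)}$ that is strictly larger for the less-sampled arm $k$, while under $\EY_1\cap\EY_2$ both empirical means sit near $0$ so the decision-boundary density factor is common to the two arms. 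Feeding this into the same Gaussian value-of-information integral gives $\BI_k>\BI_j$, i.e.\ $\BI_j<\max(\BI_1,\BI_2)$.

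The main obstacle I anticipate is the quantitative step in the second paragraph: the value of one Gaussian observation is a signed quantity, and the favourable-event lower bound must be combined with control of the ``wrong-flip'' contributions so that the net stays $\Omega(T^{-118})$. The huge gap between the true order $T^{-3/2}$ and the required $T^{-118}$ means only crude order-of-magnitude estimates of the integral are needed; the genuinely delicate point is establishing the monotonicity of this integral in the sample count uniformly over the range of empirical gaps permitted by $\EY_1,\EY_2,\EZ_1,\EZ_2$, rather than merely for the typical gap, as this is what the ``moreover'' conclusion rests on.
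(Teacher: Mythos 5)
Your opening reduction (arm $3$ contributes only an additive $o(T^{-118})$ to both loss terms) is essentially the paper's Lemma \ref{lem_exclude}, and that part is sound. The genuine gap is in your second paragraph. In your coupling you bound $\BI_i \ge \mathrm{Reg}(\pi^\star)-\mathrm{Reg}(\pi')$ and then lower-bound the gain by restricting to ``the polynomial-probability event that the true gap has scale $1/\sqrt{\Tdash}$ while the empirical gap is within one-sample-flip distance.'' But whether that event has polynomial probability is determined by the joint law of $\bigl(\hatmu_1(T),\hatmu_2(T),N_1(T),N_2(T)\bigr)$ under $\pi^\star$'s sampling between rounds $t$ and $T$ --- and $\pi^\star$ is the unknown solution of the Bellman recursion, i.e.\ precisely the object the lemma is trying to control. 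Adaptive sampling creates selection effects that correlate the allocation with the empirical means, so you cannot assert this probability bound uniformly over (or even for) the actual $\pi^\star$; nothing in your argument rules out that $\pi^\star$'s allocation makes the ``flippable'' configuration exponentially rare. This is exactly the difficulty the paper emphasizes (``a slight difference in the posterior can change the selection sequence''), and it is resolved there by a step your proposal has no substitute for: proving by backward induction on the Bellman equation (Lemmas \ref{lem_smldraf} and \ref{lem_drawnumdiff}) that in the two-armed problem the Bayes optimal algorithm \emph{deterministically} draws $\argmin_k N_k(s-1)$. Determinism is what makes $\hDelta(T-1)-\hDelta(t-1)$ an explicit zero-mean Gaussian with variance $\Vtrans$, after which the drift events $\EZ_1,\EZ_2$ inside $\EW(t)$ are used to show $(\hDelta(t-1))^2 \le 108(\log T)\,\Vtrans$, giving the $\Omega(T^{-110})$ window probability of Lemma \ref{lem_twoarm_driftclose}; only then do the one-step estimates (Lemma \ref{lem_impsample}) apply.

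The same missing step undermines your ``moreover'' argument. You treat $\BI_j$ at a round $t<T$ as a myopic one-step Gaussian value-of-information integral and invoke diminishing returns in $N_j$. But the EBI is a difference of \emph{optimal continuation values}, not a one-step quantity; the diminishing-returns comparison you describe is valid as stated only at the final round (this is the paper's Lemma \ref{lem_smldraf}). Propagating that comparison back from round $T$ to round $t$ is again exactly the content of the induction in Lemma \ref{lem_drawnumdiff} combined with the window-probability bound, since after the reduction both continuations follow the same deterministic draw sequence and the comparison collapses to the last draw. So both halves of the lemma hinge on the exact characterization of the two-armed Bayes optimal algorithm, which your proposal identifies as a worry (``monotonicity \ldots uniformly over the range of empirical gaps'') but does not prove.
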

Lemma~\ref{lem_nodraw_one} provides an upper bound for $\BI_3(t)$ under $\EW(t)$. Lemma~\ref{lem_nodraw_two} provides lower bounds for $\BI_1(t)$ and $\BI_2(t)$ under $\EW(t)$. 

Given these lemmas, we can provide the main theorem.
\begin{proof}[Proof of Theorem \ref{thm_reglowinst}]
It is trivial to show that  $\{\EX,\EY_1,\EY_2,\EZ_1,\EZ_2\}$ implies $\EW(4)$.
We first show that for $t \in \{4,5,\dots,T\}$, $\EW(t)$ implies $\EW(t+1)$. 
Lemmas~\ref{lem_nodraw_one} and~\ref{lem_nodraw_two} imply that $\BI_3(\bH(t-1), t) < \BI_1(\bH(t-1), t)$ and thus arm $3$ is not drawn. Moreover,  
Lemma~\ref{lem_nodraw_two} states that arm $i \in \{1,2\}$ is not drawn if $N_i(T)=T'$. Event $\EW(t+1)$ follows from these facts.

Event $\EW(T+1)$ implies $N_1(T)=N_2(T)=T'$ and $N_3(T)=1$. We have that
\begin{align}
\hatmu_1(T) &= \hatmu_{1,T'}\\
&\ge -4\sqrt{\log T}
\sqrt{
\sum_{m=n}^\infty 
\frac{1}{m^2}
}-\frac{1}{T^2} \text{\ \ \ \ (by $\EY_1, \EZ_1$)} \\
&\ge -\sqrt{\frac{8 \pi^2\log T}{3}} - \frac{1}{T^2} \ge -6\sqrt{\log T} \\
& > -\CUnder \sqrt{\log T} \\
&\text{\ \ \ \ (by Eq.~\eqref{ineq_cularge})}\\
&\ge \hatmu_{3,1} = \hatmu_3(T),
\end{align}
and thus $J(T) \ne 3$.

In summary, 
event $\{\EX,\EY_1,\EY_2,\EZ_1,\EZ_2\}$ implies $\EW(4)$. Event $\EW(4)$ implies $\EW(5),\EW(6),\dots,\EW(T+1)$. Moreover, 
event $\EW(T+1)$ implies $J(T) \ne 3$.
Therefore, we have
\begin{align}
\SRegFreqTrue(T) 
&\ge \Prob[J(T) \ne 3] \\
&\ge \Prob[\EX,\EY_1,\EY_2,\EZ_1,\EZ_2] \Gap \\
&\ge \funder (\fclose - \fnodrift)^2 \Gap, \text{\ \ \ \ (by Lemmas~\ref{lem_underest}--\ref{lem_smldrift})}
\end{align}
which is $\poly(T^{-1})$ if we view $\Gap$ as a constant.
\end{proof}

\section{Proof Sketches of Lemmas~\ref{lem_nodraw_one} and~\ref{lem_nodraw_two}}
\label{sec_proofsketch}

\added{(Note: This section is moved from Appendix to Main Paper)}

\added{This section describes the main ideas of the proofs of Lemmas~\ref{lem_nodraw_one} and~\ref{lem_nodraw_two}.}
Lemmas~\ref{lem_nodraw_one} and~\ref{lem_nodraw_two} are very different from the previous lemmas.
Whereas Lemmas~\ref{lem_underest}--\ref{lem_smldrift} deal with probabilities under the true parameter $\bmutrue$, Lemmas~\ref{lem_nodraw_one} and~\ref{lem_nodraw_two} are about how the Bayes optimal algorithm values the sample from each arm $i$ at round $t$, which is represented by the recursive formula \added{of Eq.~\eqref{ineq_loss_onestep}}.
In particular, the latter lemmas bound the EBI $\BI_i(\bH(t-1), t) = L(\bH(t-1), t+1) - L_i(\bH(t-1), t)$ given the true posterior $\bH(t-1)$. 
We first describe the evolution of $\bHrec(t-1)$ in Section~\ref{sec_posterior_evol}.
Derivation of Lemmas~\ref{lem_nodraw_one} and~\ref{lem_nodraw_two} are sketched in Sections~\ref{subsec_sketch_one} and~\ref{subsec_sketch_two}, respectively. 

\subsection{Evolution of the posterior in the loss function}
\label{sec_posterior_evol}

This section describes the evolution of $\bHrec(s)$ for $s \ge t-1$. At the beginning of round $t$, the Bayes optimal algorithm only observes $I(1),X(1),I(2),X(3),\dots,I(t-1),X(t-1)$, which were drawn from the true distribution:\footnote{Note that we consider the frequentist setting in which the (unknown) true parameters exist.}
\begin{equation}
X(t') \sim \Normal(\mu_{I(t')}, 1).
\end{equation}

The value $L_i(\bH(t-1), t)$ involves future rewards 
$\Irec(t), \Xrec(t),\Irec(t+1),\Xrec(t+1),\dots,\Irec(T),\Xrec(T)$, which are random variables \replaced{computed in Eqs.~\eqref{ineq_recursion} and \eqref{ineq_recursion_xdraw}. These future rewards}{that} are drawn not from the true distribution\footnote{The Bayes optimal algorithm does not know the true parameter $\bmu$ and believes in the posterior $\bH(t-1)$.} but from the posterior.
Namely, for $s\ge t-1$ and $i = I(s+1)$,
\begin{align}
\tilmu_i(s) &\sim \Normal\left(\added{\hatmurec_i(s)}, \frac{1}{\Nrec_i(s)}\right) \\
\Xrec(s+1) &\sim \Normal(\tilmu_i(s), 1)\\
\hatmu_i(s+1) &= \frac{\Nrec_i(s)\hatmurec_i(s)+\Xrec(s+1)}{\Nrec_i(s)+1},
\end{align}
or equivalently, 
\begin{equation}\label{ineq_bhrecevol}
\hatmurec_i(s+1) - \hatmurec_i(s)
\sim 
\Normal\left(\hatmurec_i(s), \frac{1}{\Nrec_i(s)(\Nrec_i(s)+1)}\right)
\end{equation}
and the posterior is updated accordingly. 
In other words, drawing arm $i$ with its posterior $\Normal(\hatmurec_i(s), 1/\Nrec_i(s))$ reduces its variance from $1/\Nrec_i(s)$ to $1/(\Nrec_i(s)+1)$.

\subsection{Sketch of the proof of Lemma~\ref{lem_nodraw_one}}
\label{subsec_sketch_one}

Lemma~\ref{lem_nodraw_one} provides an upper bound for the EBI of arm $3$ when the posterior probability of the arm being best is small.
In the proof of Lemma~\ref{lem_nodraw_one}, we derive Lemmas~\ref{lem_oneoptpost} and~\ref{lem_upper}. Lemma~\ref{lem_oneoptpost} bounds 
\[
\Ex_{\bHrec(t-1)}[ \Ind[\ist(\bmu)=3]\SRegFreq(T) ],
\]
which is a portion of the Bayesian simple regret when arm $3$ is optimal.
By using this, Lemma~\ref{lem_upper} bounds the EBI.

The greatest challenge in bounding the EBI $\BI_i(\bHrec(t-1), t) = L(\bHrec(t-1), t+1) - L_i(\bHrec(t-1), t)$ lies in its adaptivity. A slight difference in the posterior $\bHrec(s)$ can change the selection sequence $\Irec(s),\Irec(s+1),\dots,\Irec(T)$, and thus it is generally difficult to compute the EBI exactly. 
The high-level idea of Lemma~\ref{lem_upper} is as follows. When an arm is very unlikely to be the best arm (i.e., under event $\EX$), the \added{next} sample for that arm does not significantly improve the simple regret.
Notice that the difference between $L(\bHrec(t-1), t+1)$ and $L_i(\bHrec(t-1), t)$ is derived from the draw for arm $i$ at round $t$. To bound the EBI, we introduce a \textit{mock-sample} algorithm, along with its loss $\Lmock(\bHrec(t-1), t)$, which utilizes the external randomizer $\Urec(t)$ such that
\begin{align}
\Urec(t)&, \Irec(t+1), \Xrec(t+1), \Irec(t+2), \Xrec(t+2), \dots, \Irec(T), \Xrec(T)\\
&\text{\ \ of $\Lmock(\bHrec(t-1), t)$, and }\\
\Xrec(t)&, \Irec(t+1), \Xrec(t+1), \Irec(t+2), \Xrec(t+2), \dots, \Irec(T), \Xrec(T)\\
&\text{\ \ of $L_i(\bHrec(t-1), t)$}
\end{align}
have exactly the same distribution. We have that
\begin{align}
\BI_i(\bHrec(t-1), t) 
&:= L(\bHrec(t-1), t+1) - L_i(\bHrec(t-1), t)\\
&\le \Lmock(\bHrec(t-1), t) - L_i(\bHrec(t-1), t)\\
&\text{\ \ \ (by optimality of $L(\bHrec(t-1), t+1)$)}\\
&\le \Ex_{\bHrec(t-1)}[ \Ind[\ist(\bmu)=3]\SRegFreq(T) ].
\end{align}

\subsection{Sketch of the proof of Lemma~\ref{lem_nodraw_two}}
\label{subsec_sketch_two}

Lemma~\ref{lem_nodraw_two} provides a lower bound for 
the EBI $\BI_i(\bHrec(t-1), t) = L(\bHrec(t-1), t+1) - L_i(\bH(t-1), t)$ for $i = 1,2$.
Note that we define the events $\EY_1$, $\EY_2$, $\EZ_1$, and $\EZ_2$ such that $\hDelta(t) := \hatmu_1(t)-\hatmu_2(t)$ is small. 
If $\hatmu_1(t)-\hatmu_2(t)$ is small enough and arm $3$ is sufficiently worse than the other two arms, we can expect that drawing arm $1$ or $2$ reduces the loss function by $\poly(T^{-1})$. Again, evaluating $L(\bHrec(t-1), t+1) - L_i(\bHrec(t-1), t)$ exactly is very challenging in general, as the draws $\Irec(t),\Irec(t+1),\Irec(t+2),\dots,\Irec(T)$ depend on the \added{recursive computation of the future} rewards. We solve this problem in steps as follows: 
\begin{enumerate}[(A)]
\item  First, when the probability of arm $3$ being best is very small (i.e., under event $\EX$), we can show that 
\[
|L(\bHrec(t-1), t+1) - L(\Hexrec{3}(t-1), t+1)|
\]
is also small, where $\Hexrec{3}(t-1)$ is the posterior in the two-armed instance with arm $3$ removed. In other words, we have reduced a three-armed instance to a two-armed instance (Section~\ref{subsec_reduction}).
\item Second, in the two-armed instance, the Bayes optimal algorithm always draws the arm with the smaller number of draws. Therefore, we can explicitly obtain the distribution of $\hDeltarec(T-1)$ given $\hDelta(t-1)$. In the two-armed problem, we show that $\EW(t)$ implies a sufficiently small $\hDelta(t-1)$, which yields a sufficiently small $\hDeltarec(T-1)$ with probability $\Omega(\poly(T^{-1}))$, providing an $\Omega(\poly(T^{-1}))$ value for $L(\Hexrec{3}(T-1), T+1) - L_i(\Hexrec{3}(T-1), T)$. These results yield an $\Omega(\poly(T^{-1}))$ lower bound for the EBI $\BI_i(\Hexrec{3}(t-1), t)$ in the two-armed instance (Section~\ref{subsec_twobound}). 
\item Finally, by using (A) and (B), we derive Lemma~\ref{lem_nodraw_two} (Section~\ref{subsec_lower_final}).
\end{enumerate}

As a byproduct of the analysis, we have obtained a  complete characterization of the Bayes optimal algorithm in the case of two-armed best arm identification.
\begin{remark}{\rm (Characterization of the Bayes optimal algorithm for $K=2$)}\label{rem_twoarm}
Lemma~\ref{lem_drawnumdiff} in the appendices implies that $L_1(\Hexrec{3}(t-1), t) = L_2(\Hexrec{3}(t-1), t)$ if $N_1(t), N_2(t) < T'$, and that $L_1(\Hexrec{3}(t-1), t) > L_2(\Hexrec{3}(t-1), t)$ if $N_1(t) \ge T', N_2(t) < T'$. A Bayes optimal algorithm eventually draws both of the arms $T'$ times.
\end{remark}

\section{Simulation\label{sec_sim_brief}}

We conducted a computer simulation to observe the polynomial rate of the simple regret.\footnote{\added{The code that replicates the results is available at \url{https://github.com/jkomiyama/bayesoptimalalg/}.}}

\subsection{\added{Goal}}

\added{The goal of this simulation was to support the theoretical results.
Since the Bayes optimal algorithm is almost intractable, we used an approximate Bayes optimal (ABO) algorithm that is computationally efficient.} 
\added{
We show that the ABO algorithm has simple regret polynomial in $T^{-1}$.
}

\subsection{\added{Approximate Bayes optimal algorithm}}

\added{
A Bayes optimal algorithm draws an arm that maximizes the EBI (Definition~\ref{def_ebi}):
\[
\BI_i(\bH(t-1), t) := L(\bH(t-1), t+1) - L_i(\bH(t-1), t).
\]
Here, $L(\bH(t-1), t+1)$ is the Bayesian simple regret when the algorithm skips one round and explores optimally for the remaining rounds, whereas $L_i(\bH(t-1), t)$ is the Bayesian simple regret when the algorithm draws arm $i$ and explores optimally thereafter. Essentially, the difference of these two values indicates the value of the sample on arm $i$ to the loss function. Since the loss function is difficult to compute, we use the approximate indices $L_{\bH(t-1)}^{\mathrm{ABO}}$ and $L_{\bH_i^A(t)}^{\mathrm{ABO}}
$ to represent the value of drawing arm $i$, where
}
\replaced{
Although this does not compute the recursive equations, it simulates the behavior of the Bayes optimal algorithm where the value of drawing a largely underestimated arm is very low.
This section aims to demonstrate that, unlike frequentist algorithms, the Bayes optimal algorithm features polynomial regret. In particular, conditioned on a bad underestimation of level $\poly(T^{-1})$ in the initial stage, the simple regret of the Bayes optimal algorithm is $\poly(T^{-1})$. For the sake of comparison, we also test the successive rejects algorithm, which has an exponentially small regret for any instance.
}{}
\begin{align}
L_{\bH(t-1)}^{\mathrm{ABO}} = \sum_{i: \hatmu_i(t-1) < \hatmu_{\hat{i}^*}(t-1)} \Ex_{\bH(t-1)}\left[ \left(\mu_i - \mu_{\hat{i}^*}\right)_+ \right]
\end{align}
and
\begin{align}
\bH_i^A(t) &= \bigl((\hatmu_1(t-1),N_1(t-1)),(\hatmu_2(t-1),N_2(t-1)),\\
&\ \ \ \ \ \ \ \ \ \dots,
 (\hatmu_i(t), N_i(t-1)+1),
 \dots,(\hatmu_K(t-1),N_K(t-1))\bigr),
\end{align}
$(x)_+ = \max(x, 0)$, and $\hat{i}^* = \argmax_i \hatmu_i(t-1)$.
\replaced{More concretely, $\bH_i^A(t)$ features smaller variance on arm $i$ than $\bH(t-1)$.
}{Here, $\bH_i^A(t)$ is an identical copy of $\bH(t-1)$ except for a smaller variance on arm $i$ that corresponds to an additional sample.}
ABO draws the arm that maximizes the decrease in $L_{\bH(t-1)}^{\mathrm{ABO}}$. Specifically,
\begin{equation}
I(t) = \argmax_{i}
\left(
L_{\bH(t-1)}^{\mathrm{ABO}} -
L_{\bH_i^A(t)}^{\mathrm{ABO}}
\right),
\end{equation}
which represents a one-step lookahead.

\subsection{\added{Experimental settings}}

We considered a three-armed bandit problem with $\bmutrue = (0, 0, 0.5)$. 
\added{Instead of running a standard bandit simulation, we modified the first reward of arm $3$, drawing it from a truncated normal distribution $TN_{[-\infty, - \sqrt{4 \log T}]}(\mutrue_3, 1)$ that takes a value on $[-\infty, - \sqrt{4 \log T}]$. This approximately corresponds to the worst $T^{-2}$-th percentile of the original distribution.
The rest of the simulation was exactly the same as  standard best arm identification. Namely, a reward from arm $i$ was drawn from $\Normal(\mutrue_i, 1)$.
}
As demonstrated in Section~\ref{sec_proof}, the polynomial regret of ABO results from the very low posterior of arm $3$ (i.e., event~$\EX$)\added{, and our setting is designed to reproduce such cases more frequently}. 
\replaced{To reproduce such a case, we draw the first reward of arm $3$ from $\Normal(- \sqrt{4 \log T} + \mutrue_3, 1)$, which approximately corresponds to the worst \replaced{$T^{-4}$}{$T^{-2}$}-th percentile.}{} 
\added{We tested several different values of the horizon $T$.} For each $T$, we simulated \replaced{$1,000$}{$100$} runs and found the standard two-sigma confidence region. 

\subsection{\added{Experimental results}}

Figure~\ref{fig:regret2} compares the regret in SR and ABO. Unlike the successive rejects algorithm, the \replaced{Bayesian simple regret}{regret of ABO} remains large, even for large $T$, suggesting that the simple regret of the Bayes optimal algorithm is \replaced{$\Omega(T^{-4}) \times \Omega(1) = \Omega(T^{-4})$}{polynomial in $T^{-1}$ unlike SR that has a regret exponentially small in $T$}. 

Moreover, Figure~\ref{fig:regret3} shows the fraction of the runs in which arm $3$ receives a very small (5\% of $T$) number of samples, demonstrating that the underestimation of arm $3$ \added{consistently occurs for all values of $T$. This supports our finding that the initial underestimation of the best arm persists with a probability polynomial in $T^{-1}$.}
\begin{figure}[t!]
    \centering
    \begin{minipage}[t]{0.48\textwidth}
         \centering
         \includegraphics[width=\textwidth]{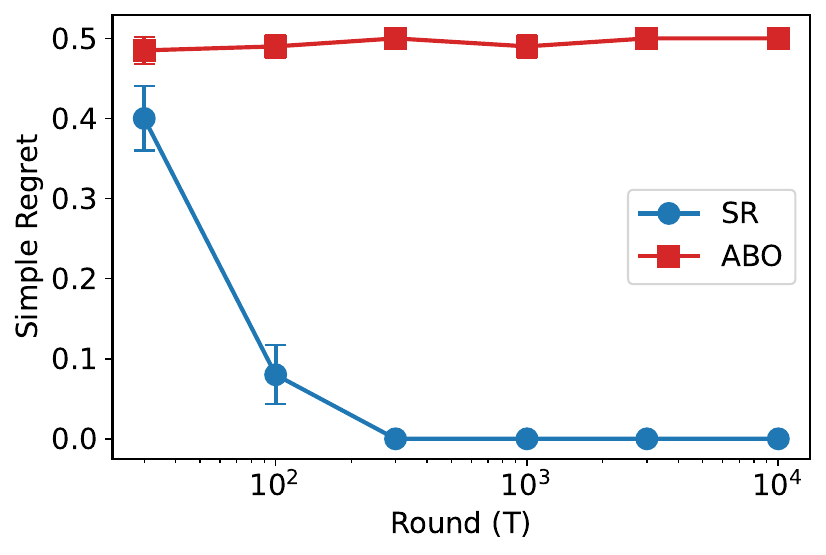}
         \caption{Simple regret of SR and ABO.}
         \label{fig:regret2}
    \end{minipage}
    \hspace{0.02\textwidth}
    \begin{minipage}[t]{0.48\textwidth}
         \centering
         \includegraphics[width=\textwidth]{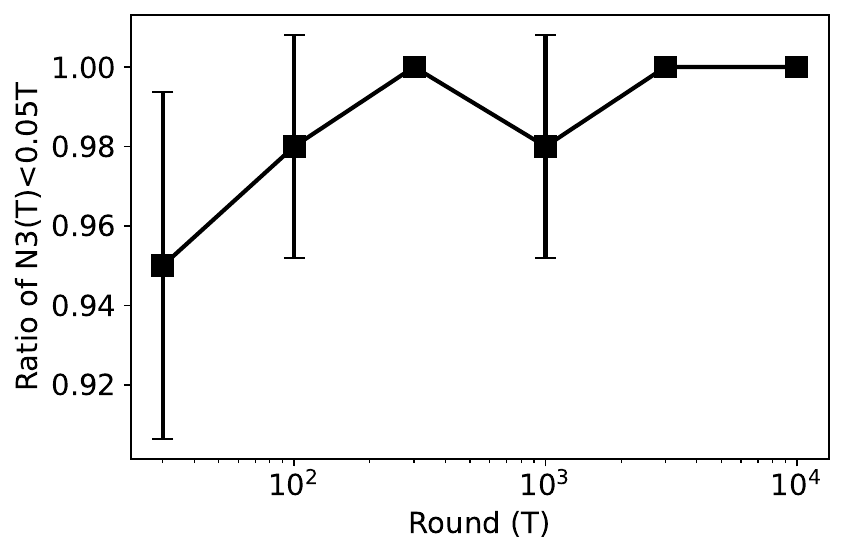}
         \caption{The fraction of runs for which $N_3(T) < 0.05 T$ in ABO.}
         \label{fig:regret3}
    \end{minipage}
\end{figure}

\section{Related Work \label{sec_related}}

This section reviews related studies concerning Bayesian algorithms in the context of sequential decision-making.
The multi-armed bandit problem \citep{Robbins1952} involves multiple treatment arms and decisions made using samples obtained sequentially via experiments. The aim is to maximize the sum of the rewards, which boils down to cumulative regret minimization (CRM). Although this problem has attracted considerable attention from the machine learning community, this study has focused on another established branch of bandit problems, BAI, in which the goal is to identify the treatment arm with the largest mean. Finding the best treatment essentially concerns simple regret minimization (SRM). 
The different goals of these two approaches results in significant differences between the approaches of CRM and SRM algorithms to balancing exploration and exploitation.

\noindent\textbf{Best arm identification (SRM):} 
Although the term ``best arm identification'' was coined in the early 2010s \citep{Audibert10,Bubeck2011}, similar ideas had already attracted substantial attention in various fields \citep{Paulson1964,Maron1997,EvanDar2006}.
\cite{Audibert10} proposed the successive rejects algorithm, which demonstrates frequentist simple regret with the order of $\exp(- \Gamma(\bmu) T)$ for some $\Gamma(\bmu)$. Elsewhere, \cite{Carpentier2016} showed an instance $\bmu$ for which this $\Gamma(\bmu)$ is optimal up to a constant factor.
Sometimes, the probability of error $\Prob[J(T) \ne \ist]$, which has the same exponential rate as the frequentist simple regret (\citealt{Audibert10}, Section 2), is used to measure a BAI algorithm.

\added{The aforementioned works focus on fixed-budget identification, in which the horizon $T$ is fixed. There is also a significant body of literature devoted to fixed-confidence identification. In this scenario, the forecaster is given a confidence level $\delta$ and aims to stop the sampling as soon as they are confident in the identification with 
a probability of $1-\delta$. For fixed-confidence identification, several algorithms are known, such as successive elimination \citep{EvanDar2006}, track and stop \citep{Garivier2016}, and top-two algorithms \citep{Russo2020,Xuedong2020,DBLP:conf/nips/JourdanDBHK22,DBLP:conf/colt/YouQWY23}. The designs of fixed-confidence and fixed-budget algorithms are markedly different \citep{Carpentier2016,komiyama2022fbbai,DBLP:conf/colt/Degenne23}.}

\noindent\textbf{Ranking and selection:} 
A particularly interesting strand of research concerns the RS problem \citep{Hong2021review}. 
Although RS and BAI are both used to find the best arm, they differ considerably in that RS considers a static allocation of samples based on knowledge of the true model parameter. 
The optimal computing budget allocation approach initiated by \cite{chen2000} is among the most popular ways to determine a static allocation. Given knowledge of the model parameters, this method solves the optimization problem
\begin{equation}\label{staticopt}
\max_{\sum_j N_j = T} \left(
\hatmu_{\ist}(N_{\ist}) > \max_{j \ne \ist} \hatmu_j(N_j)
\right).
\end{equation}
Elsewhere, \cite{glynn2004large} observed an asymptotically optimal allocation for Eq.~\eqref{staticopt} based on the large deviation theory. 
Additional adaptive algorithms have been developed based on Bayesian ideas. For example, the seminal work by \cite{chickinoue2001} proposed a two-stage method called the expected value of information algorithm, in which the information from the first stage is used as a prior to optimize the allocation of the second stage.
The KG algorithm \citep{Gupta1994} also adopts a Bayesian approach. %
However, the analytical properies of KG are little understood.
The results of \cite{RyzhovPF12} show the discounted version of KG most frequently drawing the best arm. Elsewhere, \cite{wang2018kg} have further characterized KG without giving a rate for the simple regret.
Another well-known one-step lookahead algorithm is EI \citep{Jones1998,bull11}. \cite{Ryzhov2016} showed that EI draws suboptimal arms $\Theta(\log T)$ times, implying the suboptimality of EI in terms of the frequentist simple regret (or probability of error). \cite{Qin2017} and \cite{chen_ryzhov_2019} proposed modified versions of EI with an exponential rate of convergence.

\noindent\textbf{Bayesian algorithms for CRM:}
Thompson sampling \citep{Thompson1933} is among the oldest of the heuristics and is known to be asymptotically optimal in terms of the frequentist CRM \citep{agrawal2012,kaufmann2012,komiyama15}. One of the seminal results regarding Bayesian CRM is the Gittins index theorem \citep{Gittins89,weber1992gittins}, which states that minimizing the discounted Bayesian regret can be achieved by computing the Gittins indices of individual arms.\footnote{Note that the Gittins index is no longer optimal in the context of undiscounted regret \citep{berry1984}; see also Section 1.3.3 in \cite{kaufmannthesis}.}
The Bayes optimal algorithms for CRM and SRM differ considerably. In the former setting, the algorithm obtains a reward at the end of each round. In the latter setting, the corresponding reward is delayed until round $T$, which makes our problem particularly challenging.

\section{Discussion}
\label{sec_disc}
This paper has considered the BAI problem. We have demonstrated that the Bayes optimal algorithm, which is optimized for the expected performance over the prior, does not have \replaced{the exponential simple regret that characterizes frequentist bandit algorithms}{a frequentist rate of simple regret}. In some distributions, the Bayes optimal algorithm does not perform well, even when the distributions are covered by the prior, simply because such distributions \added{carry limited weight with respect to} the prior expectation. 
A high-level implication is that if an approximate Bayesian algorithm \added{exhibits} a uniform exponential convergence, this results not from considering a lookahead but from some idea that lends it the robustness preferred by frequentists.
A challenge in analyzing a sequential decision-making algorithm is its flexibility. The decision $I(t)$ at round $t$ depends on the results of the Bellman equation, which is difficult to compute exactly. Accordingly, we have introduced a quantity called the EBI, which represents the improvement of the Bayesian simple regret from a single sample. 
We have introduced several novel ideas for analyzing the EBI and obtained a complete characterization of the Bayes optimal algorithm for two-armed BAI as a byproduct.

It is worth remarking on the unusual instance from which we derived Theorem~\ref{thm_reglowinst}. 
We have provided an instance in which $\mu_1 = \mu_2$. In this case, an algorithm cannot identify $\mu_1$ from $\mu_2$, and the value of drawing arm $2$ remains significant even when it is close to optimal. Such situations do not occur when $\mu_1$ and $\mu_2$ are distinct. 
This is a limitation of our results and highlights the following open problem.
\begin{conjecture}{\rm (Open problem: Exponential convergence of the Bayes optimal algorithm for almost all parameters)}
Under some regularity conditions over the prior, the Bayes optimal algorithm has a regret that falls exponentially with $T$ for almost all the instances.
\end{conjecture}    

\bibliographystyle{unsrtnat}
\bibliography{references}
\appendix

\section{General Lemma}

\begin{lem}[Normal tail bound; \citealt{feller-vol-1}]\label{lem:normpdf}
Let $\phi(x) := e^{-x^2/2}/\sqrt{2 \pi}$ be the probability density function (pdf) of a standard normal random variable. 
Let $\Phi^c(x) = \int_{x}^\infty \phi(x') dx'$.
Then, for any $x>0$, the following inequality holds:
\begin{equation}
\left(\frac{1}{x} - \frac{1}{x^3}\right) \frac{e^{-x^2/2}}{\sqrt{2 \pi}}
\le
\Phi^c(x)
\le 
\frac{1}{x} \frac{e^{-x^2/2}}{\sqrt{2 \pi}}.
\end{equation}
\end{lem}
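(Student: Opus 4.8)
The plan is to prove the two inequalities separately, both by exploiting the identity $\phi'(x') = -x'\,\phi(x')$, which turns each integrand into a total derivative once a suitable power of $x'$ is inserted. This lets every integral be evaluated in closed form up to a nonnegative remainder.

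For the upper bound I would start from $\Phi^c(x) = \int_x^\infty \phi(x')\,dx'$ and note that on the range of integration $x' \ge x > 0$, so that $x'/x \ge 1$. Multiplying the integrand by this factor only increases it, and the identity above makes the resulting integral elementary:
\begin{equation}
\Phi^c(x) \le \frac{1}{x}\int_x^\infty x'\,\phi(x')\,dx' = \frac{1}{x}\int_x^\infty \bigl(-\phi'(x')\bigr)\,dx' = \frac{\phi(x)}{x},
\end{equation}
which is precisely the claimed upper bound.

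For the lower bound I would integrate by parts. Writing $\phi(x') = -\phi'(x')/x'$ and integrating by parts gives
\begin{equation}
\Phi^c(x) = \frac{\phi(x)}{x} - \int_x^\infty \frac{\phi(x')}{x'^2}\,dx',
\end{equation}
so it remains to show that the correction term is at most $\phi(x)/x^3$. I would apply the same manipulation a second time: integrating $\int_x^\infty x'^{-2}\phi(x')\,dx' = -\int_x^\infty x'^{-3}\phi'(x')\,dx'$ by parts yields
\begin{equation}
\int_x^\infty \frac{\phi(x')}{x'^2}\,dx' = \frac{\phi(x)}{x^3} - 3\int_x^\infty \frac{\phi(x')}{x'^4}\,dx' \le \frac{\phi(x)}{x^3},
\end{equation}
where the inequality holds because the remaining integral is nonnegative. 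Combining the two displays gives $\Phi^c(x) \ge \frac{\phi(x)}{x} - \frac{\phi(x)}{x^3}$, completing the proof.

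There is no substantive obstacle here: this is the classical Mills-ratio bound, and both directions follow from at most two integrations by parts together with the nonnegativity of a Gaussian integral. The only point requiring minor care is in the lower bound, where one must resist stopping after a single integration by parts -- which leaves a correction term whose sign relative to the target is not yet settled -- and instead perform the second integration by parts to expose that the leftover integral is nonnegative and may therefore be discarded.
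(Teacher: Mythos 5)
Your proof is correct. Note that the paper itself offers no proof of this lemma: it is stated in the appendix as a known result and attributed to Feller, so there is no internal argument to compare against. Your derivation --- bounding the integrand by $(x'/x)\phi(x')$ for the upper bound, and two successive integrations by parts (via the identity $\phi'(x') = -x'\phi(x')$) with the nonnegative remainder $3\int_x^\infty x'^{-4}\phi(x')\,dx'$ discarded for the lower bound --- is the standard Mills-ratio argument and is sound at every step, including your caution that stopping after one integration by parts would leave the sign of the correction unresolved. For comparison, Feller's classical treatment reaches the same bounds by a slightly different mechanism: one verifies that $-\bigl(\tfrac{1}{x'}-\tfrac{1}{x'^3}\bigr)\phi(x')$ and $-\tfrac{1}{x'}\phi(x')$ are antiderivatives of $\bigl(1-\tfrac{3}{x'^4}\bigr)\phi(x')$ and $\bigl(1+\tfrac{1}{x'^2}\bigr)\phi(x')$ respectively, and then integrates the pointwise inequalities $\bigl(1-\tfrac{3}{x'^4}\bigr)\phi(x') < \phi(x') < \bigl(1+\tfrac{1}{x'^2}\bigr)\phi(x')$ over $(x,\infty)$. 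Your integration-by-parts route and that antiderivative check are two presentations of the same computation, so nothing substantive separates them; either one legitimately fills the gap the paper leaves to the citation.
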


\section{Proof of Lemma~\ref{lem_close}}

Since $\hatmu_{1,T'}$ is a zero-mean normal random variable with variance ${T'}^{-1} \le 2/T$, we have
\begin{align}
\Prob\left[
|\hatmu_{i,T'}| \le 
\frac{1}{T^2}
\right]
&=
2 \sqrt{\frac{T}{4\pi}}\int_0^{T^{-2}} \exp(-Tx^2/4) dx \\
&\ge \sqrt{\frac{1}{\pi T^3}}\exp\left(-1/(4T)\right)
\ge \sqrt{\frac{1}{\pi T^3}} \left(1 - \frac{1}{2T^3}\right) \ge \frac{1}{2} \sqrt{\frac{1}{\pi T^3}}.
\end{align}

\section{Proof of Lemma~\ref{lem_smldrift}}

We have that,
\begin{align}
\lefteqn{
\Prob\left[
|\hatmu_{i,T'}| \le \frac{1}{T^2}
,
\bigcup_{n \in [T']} 
\left\{
|\hatmu_{i,n} - \hatmu_{i,T'}| 
>
4\sqrt{\log T}
\sqrt{
\sum_{m=n}^{T'-1}
\frac{1}{m^2}
}
\right\}
\right]
}
\\
&\added{=}
\Prob\left[
\bigcup_{n \in [T']} 
\left\{
|\hatmu_{i,T'}| \le \frac{1}{T^2}
,
|\hatmu_{i,n} - \hatmu_{i,T'}| 
>
4\sqrt{\log T}
\sqrt{
\sum_{m=n}^{T'-1}
\frac{1}{m^2}
}
\right\}
\right]
\\
&\le
\Prob\left[
\bigcup_{n \in [T']} 
\left\{
|\hatmu_{i,T'}| \le \frac{1}{T^2}
,
|\hatmu_{i,n} - \hatmu_{i,T'}| 
>
4\sqrt{\log T}
\sqrt{
\int_{m=n}^{T'}
\frac{1}{m^2} dm
}
\right\}
\right]
\\
&=
\Prob\left[
\bigcup_{n \in [T']} 
\left\{
|\hatmu_{i,T'}| \le \frac{1}{T^2}
,
|\hatmu_{i,n} - \hatmu_{i,T'}| 
>
4\sqrt{\log T}
\sqrt{
\frac{1}{n} - \frac{1}{T'}
}
\right\}
\right]
\\
&\le
\Prob\left[
\bigcup_{n \in [T']} 
\left\{
|\hatmu_{i,T'}| \le \frac{1}{T^2}
,
|\hatmu_{i,n} - \hatmu_{i,T'}| 
>
4\sqrt{\log T}
\frac{
\sqrt{
T'-n
}
}{T'}
\right\}
\right].
\end{align}
Letting \added{$\hatmu_{i,n:T'}$} be the empirical average of the last $T'-n$ samples, we have
\begin{align}
\lefteqn{
\Prob\left[
|\hatmu_{i,T'}| \le \frac{1}{T^2}
,
|\hatmu_{i,n} - \hatmu_{i,T'}| 
>
4\sqrt{\log T}
\frac{
\sqrt{
T'-n
}
}{T'}
\right]
}
\\
&\le 
\Prob\left[
|\added{\hatmu_{i,n:T'}}| 
>
4\sqrt{\log T}
\frac{1}{ \sqrt{T'-n} }
-
\frac{1}{T^2}
\right]
\\
&\le 
\Prob\left[
|\added{\hatmu_{i,n:T'}}| 
>
3\sqrt{\log T}
\frac{1}{ \sqrt{T'-n} }
\right]\\
&\le 
\Prob\left[
|\added{\hatmu_{i,n:T'}}| 
>
3\sqrt{\log T}
\frac{1}{ \sqrt{T'-n} }
\right]\\
&\le 
\exp\left(-\frac{(3\sqrt{\log T})^2}{2}\right) 
\\&
\text{\ \ \ (by Lemma~\ref{lem:normpdf})}
\\&= T^{-9/2}
\end{align}
and thus
\begin{align}
\lefteqn{
\Prob\left[
\bigcup_{n \in [T']} 
\left\{
|\hatmu_{i,T'}| \le \frac{1}{T^2}
,
|\hatmu_{i,n} - \hatmu_{i,T'}| 
>
3\sqrt{\log T}
\frac{
\sqrt{
T'-n
}
}{T'}
\right\}
\right]
}\\
&\le 
\sum_{n \in [T']} 
\Prob\left[
|\hatmu_{i,T'}| \le \frac{1}{T^2}
,
|\hatmu_{i,n} - \hatmu_{i,T'}| 
>
3\sqrt{\log T}
\frac{
\sqrt{
T'-n
}
}{T'}
\right]\\
&\le T \times T^{-9/2} = T^{-7/2}.
\end{align}

\section{Proof of Lemma~\ref{lem_nodraw_one}}
\label{sec_upper}

In this section, we obtain an upper bound for $\BI_i = L(\bHrec(t-1), t+1) - L_i(\bHrec(t-1),t)$. 
The gap between $L(\bHrec(t-1), t+1)$ and $L_i(\bHrec(t-1), t)$ is derived from the draw of arm $i$ at round $t$. However, bounding the difference is challenging because a slight change in the posterior can drastically affect the subsequent choice of arms.
We introduce a mock-sample loss that simulates the Bayes optimal algorithm with a slightly modified posterior so that the difference between $\bHrec(t-1)$ and $\bHrec(t)$ is absorbed. 
In the proof, we derive Lemmas~\ref{lem_oneoptpost} and~\ref{lem_upper}. Lemma~\ref{lem_nodraw_one} is a straightforward corollary of these two lemmas.

\begin{lem}\label{lem_oneoptpost}
Assume that event $\EW(t)$ holds. Then,
\begin{equation}
\SRegBayesi{\bHrec(t-1)}{3}
:=
\Ex_{\bHrec(t-1)}[ \Ind[\ist(\bmu)=3]\SRegFreq(T) ]
\le
\gthree
\end{equation}
where 
$\gthree = \gthree(T, \CUnder)$ is $O\left(T^{-\frac{(C_U-6)^2}{4}}\right)$.
\end{lem}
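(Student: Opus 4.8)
The plan is to eliminate the dependence on the recommendation rule $J(T)$ — which is the source of adaptivity and hence the difficulty — by a crude algorithm-free bound on the regret, and then to reduce the claim to a one-dimensional Gaussian tail estimate under the posterior $\bHrec(t-1)$. In the posterior world underlying $\SRegBayesi{\bHrec(t-1)}{3}$, a parameter $\bmu$ is first drawn from $\bHrec(t-1)$, the subsequent rewards are drawn from $\bmu$, and the algorithm returns some $J(T)\in\{1,2,3\}$. On the event $\{\ist(\bmu)=3\}$ we have $\must=\mu_3\ge\mu_{J(T)}$, so for every realization of the future samples and for any recommendation,
\[
\SRegFreq(T)=\mu_3-\mu_{J(T)}\le \mu_3-\min(\mu_1,\mu_2).
\]
Since this bound is uniform, I would integrate out the future samples first, leaving only the posterior randomness of $\bmu$ and no reference to the algorithm:
\[
\SRegBayesi{\bHrec(t-1)}{3}\le \Ex_{\bmu\sim\bHrec(t-1)}\bigl[\Ind[\mu_3>\mu_1,\mu_3>\mu_2]\,(\mu_3-\min(\mu_1,\mu_2))\bigr]\le \Ex[(\mu_3-\mu_1)_+]+\Ex[(\mu_3-\mu_2)_+],
\]
where the last inequality drops one indicator in each term and uses $\max(u,v)\le u+v$ for $u,v\ge0$.

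Next I would extract the posterior parameters implied by $\EW(t)$. Because arm $3$ is drawn exactly once, $\mu_3\sim\Normal(\hatmu_3(t-1),1)$ with $\hatmu_3(t-1)=\hatmu_{3,1}\le-\CUnder\sqrt{\log T}$ by $\EX$. For $i\in\{1,2\}$ we have $\mu_i\sim\Normal(\hatmu_i(t-1),1/N_i(t-1))$ with $1\le N_i(t-1)\le T'$, and combining $\EY_i$ with $\EZ_i$ at $n=N_i(t-1)$ gives $|\hatmu_i(t-1)|\le T^{-2}+4\sqrt{\log T}\sqrt{\sum_{m\ge1}m^{-2}}\le 6\sqrt{\log T}$ for large $T$ (using $\sum_{m\ge1}m^{-2}=\pi^2/6$ and $4\pi/\sqrt6<6$). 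Hence each difference satisfies $\mu_3-\mu_i\sim\Normal(-b_i,v_i^2)$ with $b_i=\hatmu_i(t-1)-\hatmu_3(t-1)\ge(\CUnder-6)\sqrt{\log T}>0$ (as $\CUnder\ge30$) and $v_i^2=1+1/N_i(t-1)\le2$.

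The final step is the tail estimate. For $Z\sim\Normal(-b,v^2)$ the partial expectation is $\Ex[(Z)_+]=v\,\phi(b/v)-b\,\Phi^c(b/v)$; inserting the lower bound $\Phi^c(x)\ge(x^{-1}-x^{-3})\phi(x)$ from Lemma \ref{lem:normpdf} with $x=b/v$ reduces this to $\Ex[(Z)_+]\le (v^3/b^2)\,\phi(b/v)$. With $v\le\sqrt2$ and $b\ge(\CUnder-6)\sqrt{\log T}$ this yields $\phi(b/v)\le(2\pi)^{-1/2}\exp(-b^2/(2v^2))\le(2\pi)^{-1/2}T^{-(\CUnder-6)^2/4}$ and $v^3/b^2=O(1/\log T)$, so each term, and hence $\SRegBayesi{\bHrec(t-1)}{3}$, is $O(T^{-(\CUnder-6)^2/4})$, which is the desired $\gthree$.

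I expect the genuine obstacle to be the reduction in the first step rather than the subsequent computation: one has to be sure that the adaptive, $J(T)$-dependent regret is dominated, uniformly over all posterior evolutions $\bHrec(t),\dots,\bHrec(T)$ and all Bayes-optimal recommendations, by the static quantity $\mu_3-\min(\mu_1,\mu_2)$. Once the problem collapses to the posterior law of $\bmu$, the exponent is governed by two facts that are geometric rather than analytic and explain both constants in $(\CUnder-6)^2/4$: arm $3$ alone contributes posterior variance $1$, so $\Var(\mu_3-\mu_i)\le2$ turns the naive $\CUnder^2/2$ into a quarter-exponent, while the admissible drift of $\hatmu_i$ under $\EZ_i$ is at most $6\sqrt{\log T}$, which is exactly what shifts $\CUnder$ to $\CUnder-6$. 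A minor point to verify is that the contribution from arm $2$ (the second near-tied arm) is absorbed symmetrically by the union step above, costing only a factor of $2$.
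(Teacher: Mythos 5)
Your proof is correct and follows essentially the same route as the paper: reduce the adaptive, $J(T)$-dependent regret to a static posterior quantity in $\bmu$, use $\EX$ (with $N_3(t-1)=1$) to place $\hatmu_3(t-1)\le-\CUnder\sqrt{\log T}$ and $\EY_i,\EZ_i$ to place $\hatmu_i(t-1)\ge-6\sqrt{\log T}$, and finish with a Gaussian tail estimate on $\mu_3-\mu_i$ (mean gap at least $(\CUnder-6)\sqrt{\log T}$, variance at most $2$), which is exactly where the paper's exponent $(\CUnder-6)^2/4$ comes from. If anything, you are more careful than the paper at two points it glosses over: you handle the unboundedness of the Gaussian regret via the partial expectation $\Ex[(Z)_+]\le (v^3/b^2)\,\phi(b/v)$ instead of treating the regret as if it were an indicator, and you account for arm $2$ explicitly through the union bound; both refinements cost only constants and leave the rate unchanged.
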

\begin{proof}[Proof of Lemma~\ref{lem_oneoptpost}]
We first bound $\hatmu_1(t)$ and $\hatmu_2(t)$ from below:
\begin{align}
\hatmu_i(\added{t-1}) 
&= \hatmu_{i,N_i(\added{t-1})}\\
&\ge - 4\sqrt{\log T}
\sqrt{
\sum_{m=1}^{\infty} 
\frac{1}{m^2}
}
-\frac{1}{T^2}
\\
&\text{\ \ \ (by $\EY_i\EZ_i$)}\\
&\ge -\sqrt{\frac{8 \pi^2\log T}{3}} - \frac{1}{T^2}\\ &\ge -6\sqrt{\log T}.\label{ineq_muone}
\end{align}

Let $C_F = 6$. Let $x\sim \Normal(-\CUnder\sqrt{\log T}, 1)$, $y \sim \Normal(-C_F\sqrt{\log T}, 1)$ be two normal random variables.
\added{Note that $\EW(t)$ implies that $\mu_3$ follows a normal distribution with a mean smaller than $\CUnder$, and Eq.~\eqref{ineq_muone} implies that $\mu_1$ follows a normal distribution with a mean larger than $C_F$.
By using these statements, }we have
\begin{align}
\lefteqn{
\Ex_{\bHrec(t-1)}[ \Ind[\ist(\bmu)=3]\SRegFreq(T) ]
}\\
&\le 
\Ex_{\bHrec(t-1)}\left[
\Ind[(\mu_3 - \mu_1)_+]
\right] 
\\
&\le
\Ex\left[
\Ind[(x - y)_+]
\right] \text{\ \ \ \added{(by $C_F < \CUnder$)}}
\\
&\le
\frac{1}{\sqrt{4\pi}}\int_{\CUnder - C_F}^\infty \exp(-z^2/4) dz
\\&
\text{\ \ \ (by $z := x-y \sim \Normal(-(C_F-\CUnder), 2)$\ )} %
\\&=: \gthree,
\end{align}
which is $O\left(T^{-\frac{(C_U-C_F)^2}{4}}\right) = O\left(T^{-\frac{(C_U-6)^2}{4}}\right)$. 
\end{proof}

\begin{lem}{\rm (Upper bound of the EBI)}\label{lem_upper}
For any $t \in \Natural$, the following inequality holds:
\begin{equation}
\BI_i(\bHrec(t-1), t) \le \SRegBayesi{\bHrec(t-1)}{i}.
\end{equation}
\end{lem}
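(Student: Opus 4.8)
The plan is to upper bound the EBI $\BI_i(\bHrec(t-1),t) = L(\bHrec(t-1),t+1) - L_i(\bHrec(t-1),t)$ by inserting an intermediate, deliberately suboptimal policy—the mock-sample algorithm—whose loss $\Lmock(\bHrec(t-1),t)$ serves as a bridge between the two terms. Concretely, I would define $\Lmock$ as the expected regret of the following policy on the horizon $t+1,\dots,T$ started from $\bHrec(t-1)$: draw a fake sample $U(t)$ from the posterior predictive of arm $i$ using the external randomizer, fold it into a working posterior exactly as a genuine draw of arm $i$ would be folded into $\bHrec(t)$, and then run the Bayes optimal policy on that working posterior for the remaining $T-t$ rounds. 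Since the fake draw consumes only the randomizer and not a real round, this is a feasible member of $\EA$ for the $(t+1)$-problem; because $L(\bHrec(t-1),t+1)$ is the minimal Bayesian loss over that class, I immediately obtain $L(\bHrec(t-1),t+1) \le \Lmock(\bHrec(t-1),t)$.

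Next I would couple $\Lmock$ and $L_i$ by identifying the fake draw $U(t)$ with the real draw $X(t)$. Because both are sampled from the identical posterior-predictive law of arm $i$ in Eq.~\eqref{ineq_recursion_xdraw}, under this coupling the working posterior of the mock policy and the true posterior $\bHrec(t)$ of the $L_i$-policy coincide; consequently the two policies issue the identical action sequence $I(t+1),\dots,I(T)$, observe identical rewards, and output the identical recommendation $J(T)$. The sole surviving discrepancy is the posterior against which the terminal loss $\Ex[\Delta_{J(T)}]$ is scored: $L_i$ scores against a posterior that incorporates the round-$t$ observation for arm $i$, whereas $\Lmock$, treating that observation as informationless, scores against a posterior in which arm $i$ carries one fewer sample. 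Thus $\BI_i(\bHrec(t-1),t) \le \Lmock - L_i$, and everything reduces to controlling this one-sample marginal discrepancy.

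To bound $\Lmock - L_i$, I would decompose $\Delta_{J(T)} = \must - \mu_{J(T)} = \max(\mu_i, \max_{j\neq i}\mu_j) - \mu_{J(T)}$ and observe that the two scoring posteriors agree on every coordinate except $\mu_i$. Writing $M_{-i} = \max_{j\neq i}\mu_j$, the dependence of $\Delta_{J(T)}$ on $\mu_i$ enters only through the convex term $(\mu_i - M_{-i})_+$, which is nonzero precisely on $\{\ist(\bmu)=i\}$, together with the recommendation term in the case $J(T)=i$. The tower property of Bayesian updating equalizes the two posteriors' expectation of any fixed function of $\bmu$, so the discrepancy is entirely attributable to the coupling between $J(T)$ and the realization of $\mu_i$; isolating the part supported on $\{\ist(\bmu)=i\}$ bounds it by $\Ex_{\bHrec(t-1)}[\Ind[\ist(\bmu)=i]\SRegFreq(T)] = \SRegBayesi{\bHrec(t-1)}{i}$. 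Chaining $L(\bHrec(t-1),t+1)\le\Lmock \le L_i + \SRegBayesi{\bHrec(t-1)}{i}$ then yields the claim.

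The hard part will be the third paragraph: making rigorous that the loss gap created by scoring arm $i$ with one fewer sample is genuinely confined to—and dominated by—the regret mass on $\{\ist(\bmu)=i\}$. The delicate point is the recommendation term $\mu_{J(T)}$ when $J(T)=i$, where the discrepancy is driven by $(M_{-i}-\mu_i)_+$, i.e.\ the event that arm $i$ is \emph{not} optimal; I expect to dispatch this either by showing this contribution carries the favorable sign (so it does not inflate the upper bound) or by absorbing it using the martingale identity $\Ex[\hatmu_i^{\mathrm{real}}\mid \text{working posterior}] = \hatmu_i^{\mathrm{mock}}$ for the posterior means. By contrast, the feasibility claim $L(\bHrec(t-1),t+1)\le\Lmock$—verifying that a policy maintaining an intentionally contaminated belief while receiving rewards from the true predictive law is a legitimate element of $\EA$—is routine but must be stated carefully.
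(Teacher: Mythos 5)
You have rediscovered the right device (a mock sample $U(t)$ from the external randomizer, sandwiched as $L(\bHrec(t-1),t+1)\le\Lmock\le L_i(\bHrec(t-1),t)+\SRegBayesi{\bHrec(t-1)}{i}$), but you are missing the one idea that makes the sandwich provable: in the paper, the mock algorithm is defined to recommend $J(T)=\argmax_{j\in[K]\setminus\{i\}}\hatmu_j(T)$, i.e., it \emph{never} recommends arm $i$. Your version instead runs the full Bayes-optimal rule, including the recommendation $\argmax_{j\in[K]}\hatmu_j(T)$, and this breaks the argument on both ends. If $\Lmock$ is the actual Bayesian loss of your unrestricted mock policy (the reading under which your feasibility inequality is valid), then your coupling claim is false: setting $U(t)=X(t)$ does make the working posterior coincide with $\bHrec(t)$ as a state, but the subsequent rewards cannot be ``identical,'' because any post-$t$ draw of arm $i$ in the mock world is generated from the predictive that ignores $U(t)$ (it is pure noise about $\mu_i$), while in the $L_i$ recursion it is generated from the predictive that incorporates $X(t)$; the trajectory laws diverge as soon as arm $i$ is sampled again, so the discrepancy is not confined to the terminal scoring as you assert. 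If, alternatively, you take $\Lmock$ to be the value of the recursion run on the contaminated posterior (which restores the distributional identity), then $\Lmock=L_i$ exactly and the feasibility inequality $L(\bHrec(t-1),t+1)\le\Lmock$ becomes precisely the false claim that a genuine sample has no value.

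The terminal case $J(T)=i$ that you flag as ``the hard part'' is therefore the entire problem, and neither of your proposed escapes works. The sign is not favorable: $\{J(T)=i\}$ is positively correlated with large $U(t)$, so your mock policy recommends arm $i$ exactly when its contaminated belief overestimates $\mu_i$, inflating its true regret relative to what the working posterior reports; and the martingale identity for posterior means cannot be invoked after conditioning on this selection event. Indeed, since $\BI_i(\bHrec(t-1),t)\le\Lmock-L_i$ for your $\Lmock$, proving $\Lmock-L_i\le\SRegBayesi{\bHrec(t-1)}{i}$ is at least as strong as the lemma itself, so your reduction is circular rather than a simplification. The paper's recommendation restriction dissolves both difficulties at once: because the mock algorithm never recommends arm $i$, its incorrect belief about arm $i$ never enters the value of its recommendation (the recommended arm's posterior is computed identically in both worlds), and under the coupling its loss differs from that of the $L_i$ recursion only on the event $\{\argmax_j\hatmu_j(T)=i\}$, where $L_i$ recommends $i$ and the mock recommends the best remaining arm; that difference is dominated by the regret mass carried by the event $\{\ist(\bmu)=i\}$, which is exactly $\SRegBayesi{\bHrec(t-1)}{i}$.
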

\begin{proof}[Proof of Lemma~\ref{lem_upper}]
By definition,
\[
\BI_i(\bHrec(t-1), t) := L(\bHrec(t-1), t+1) - L_i(\bHrec(t-1), t).
\]

We first introduce the notion of mock-sample loss.
\begin{definition}{\rm (Mock-sample loss)}
Let 
\begin{align}
\mu_i(t-1) &\sim \Normal\left(\hatmurec_i(t-1), \frac{1}{\Nrec_i(t-1)}\right)\\
\Urec(t) &\sim \Normal(\mu_i(t-1), 1).
\end{align}
That is, $\Urec(t)$ is a simulated sample that is obtained from the external randomizer. Moreover, let 
\[
(\SPrec_j, \NPrec_j) =
\left\{
\begin{array}{ll} 
(\Srec_i(t-1)+\Urec(t), \Nrec_i(t-1)+1)& \text{($j = i$)} \\
(\Srec_j(t-1), \Nrec_j(t-1)) & \text{($j\ne i$)}
\end{array}
\right.
\]
be the updated posterior.
We define $\Lmock(\bHrec(t-1), t)$ as the loss of the recursive equation that determines the adaptive sequence $\Irec(t+1),\Irec(t+2),\Irec(t+3)$ as if $(\SPrec_j, \NPrec_j)_{j\in[K]}$ were $\bHrec(t)$ and that recommends $\Jrec(T) = \argmax_{j \in [K]\setminus\{i\}} \hatmurec_j(T)$, which is the best recommendation that excludes arm $i$. 
\end{definition}

Then, the following inequalities hold:
\begin{equation}
L(\bHrec(t-1), t+1) \le \Lmock(\bHrec(t-1), t) \le L_i(\bHrec(t-1), t) + \SRegBayesi{\bHrec(t-1)}{i}.
\end{equation}
The first inequality, $L(\bHrec(t-1), t+1) \le \Lmock(\bHrec(t-1), t)$, is derived from the fact that $L(\bH, t+1)$ is the minimum possible simple regret with $T-t$ samples.\footnote{Note that $\Lmock(\bHrec(t-1), t)$ skips round $t$ and imputes a sample by $\Urec(t)$.}
The second inequality is derived from the following argument. Since $\Lmock(\bHrec(t-1), t)$ simulates the sample from arm $i$ at round $t$ by $\Urec(t)$, then
\begin{align}
\Urec(t)&, \Irec(t\hspace{-0.1em}+\hspace{-0.1em}1), \Xrec(t\hspace{-0.1em}+\hspace{-0.1em}1), \Irec(t\hspace{-0.1em}+\hspace{-0.1em}2), \Xrec(t\hspace{-0.1em}+\hspace{-0.1em}2), \dots, \Irec(T), \Xrec(T) \text{~of~$\Lmock\hspace{-0.2em}(\bHrec(t\hspace{-0.1em}-\hspace{-0.1em}1),\hspace{-0.1em}t)$}
\intertext{and}
\\
\Xrec(t)&, \Irec(t\hspace{-0.1em}+\hspace{-0.1em}1), \Xrec(t\hspace{-0.1em}+\hspace{-0.1em}1), \Irec(t\hspace{-0.1em}+\hspace{-0.1em}2), \Xrec(t\hspace{-0.1em}+\hspace{-0.1em}2), \dots, \Irec(T), \Xrec(T) \text{\ of $L_i(\bHrec(t-1),t)$}
\end{align}
have an identical distribution.
Although $\Lmock(\bHrec(t-1), t)$ has an incorrect posterior on arm $i$, the mock-sample algorithm never recommends arm $i$, and the value
\begin{equation}
 \Lmock(\bHrec(t-1), t)  
\end{equation}
is exactly the same as the loss when we choose arms $\Irec(t), \Irec(t+1),\dots,\Irec(T)$ by using the recursive equation of $L_i(\bHrec(t-1), t)$ but $\Jrec(T) = \argmax_{j\in[K]\setminus\{i\}} \hatmurec_j(T)$ is recommended instead of $\argmax_{j\in[K]} \hatmurec_j(T)$.
Therefore, we have
\begin{align}
\Lmock(\bHrec(t-1), t) - L_i(\bH, t)
&\le \Ex_{\bHrec(t-1)}[\Ind[\argmax_j \hatmurec_j(T) = i, \ist(\bmu)=i] \SRegFreq(T) ] \\
&\le \Ex_{\bHrec(t-1)}[\Ind[\ist(\bmu)=i]\SRegFreq(T)].
\end{align}
\end{proof}

\begin{proof}[Proof of Lemma~\ref{lem_nodraw_one}]
Lemma~\ref{lem_nodraw_one} directly follows from Lemmas~\ref{lem_oneoptpost} and~\ref{lem_upper} with $i=3$.
\end{proof}

\section{Proof of Lemma~\ref{lem_nodraw_two}}
\label{sec_lower}

This section derives a lower bound for $\BI_i(\bHrec(t-1),t) := L(\bHrec(t-1), t+1) - L_i(\bHrec(t-1),t)$ for arms $i=1,2$.

\subsection{Reduction to a two-armed problem}
\label{subsec_reduction}

The following lemma bounds the gap between the three-armed best arm identification problem and the two-armed best arm identification problem in which one of the arms has been removed.
\begin{lem}{(Reduction to a two-armed problem)}\label{lem_exclude}
Let $\Hexrec{i}$ be the posterior of the two-armed instance in which arm $i$ has been removed.
The following inequality holds:
\begin{equation}\label{ineq_reduc}
|L(\bHrec(t-1), t+1) - L(\Hexrec{3}(t-1), t+1)| \le \SRegBayesi{\bHrec(t-1)}{3}.
\end{equation}
Similarly, we have 
\begin{equation}\label{ineq_reduc_i}
|L_j(\bHrec(t-1), t) - L_j(\Hexrec{3}(t-1), t)| \le \SRegBayesi{\bHrec(t-1)}{3}.
\end{equation}
for $j \in \{1,2\}$.
\end{lem}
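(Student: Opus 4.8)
The plan is to prove the two one-sided inequalities that together give \eqref{ineq_reduc}, namely $L(\bHrec(t-1),t+1)-L(\Hexrec{3}(t-1),t+1)\le \SRegBayesi{\bHrec(t-1)}{3}$ and $L(\Hexrec{3}(t-1),t+1)-L(\bHrec(t-1),t+1)\le \SRegBayesi{\bHrec(t-1)}{3}$. In both cases I use that $L(\cdot,t+1)$ is, by definition, the Bayesian simple regret of the \emph{optimal} adaptive policy in its own world, so it is upper bounded by the loss of any explicit policy I exhibit there. The unifying device is a coupling in which the real draws of arms $1,2$ are shared between the three-armed and two-armed worlds, while every round in which the three-armed recursion draws arm $3$ is matched, in the two-armed world, by a wasted round whose ``mock'' sample is drawn from the frozen posterior $H_3(t-1)$; this is exactly the external-randomizer substitution whose distributional identity was established in the proof of Lemma~\ref{lem_upper}. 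Under this coupling the terminal posterior means $\hatmu_1(T),\hatmu_2(T),\hatmu_3(T)$ carry the same joint law in both worlds, so the only discrepancies arise from the reference value (the best mean is $\max(\mu_1,\mu_2,\mu_3)$ with arm $3$ present and $\max(\mu_1,\mu_2)$ without it) and from whether the recommendation may name arm $3$.

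For the first inequality I run the two-armed Bayes optimal policy inside the three-armed instance, never drawing arm $3$ and recommending $J_2(T):=\argmax_{j\in\{1,2\}}\hatmu_j(T)$. This is a legitimate three-armed policy, so optimality bounds $L(\bHrec(t-1),t+1)$ by its three-armed loss, which equals $L(\Hexrec{3}(t-1),t+1)+\Ex_{\bHrec(t-1)}[(\mu_3-\max(\mu_1,\mu_2))_+]$. Since $(\mu_3-\max(\mu_1,\mu_2))_+=\Ind[\ist(\bmu)=3](\must-\max(\mu_1,\mu_2))$ and $\mu_{J_2(T)}\le\max(\mu_1,\mu_2)$, the added term is the arm-$3$-optimal regret contribution and is at most $\SRegBayesi{\bHrec(t-1)}{3}$, which is the claim.

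For the reverse inequality I run the three-armed Bayes optimal policy inside the two-armed world under the coupling above, recommending $J_2(T)$ whenever the three-armed rule $J_3(T):=\argmax_{j\in\{1,2,3\}}\hatmu_j(T)$ would name arm $3$, and $J_3(T)$ otherwise. Optimality of the two-armed policy gives $L(\Hexrec{3}(t-1),t+1)$ no larger than this policy's two-armed loss, and subtracting $L(\bHrec(t-1),t+1)$ and invoking the coupling collapses the difference to $\Ex_{\bHrec(t-1)}[\Ind[J_3(T)=3](\mu_3-\mu_{J_2(T)})]-\Ex_{\bHrec(t-1)}[(\mu_3-\max(\mu_1,\mu_2))_+]$. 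Writing $\mu_3-\mu_{J_2(T)}=(\mu_3-\max(\mu_1,\mu_2))+(\max(\mu_1,\mu_2)-\mu_{J_2(T)})$ and using $\Ind[J_3(T)=3](\mu_3-\max(\mu_1,\mu_2))\le(\mu_3-\max(\mu_1,\mu_2))_+$ cancels the reference-value terms and leaves the residual $\Ex_{\bHrec(t-1)}[\Ind[J_3(T)=3](\max(\mu_1,\mu_2)-\mu_{J_2(T)})]$, the regret of misidentifying the better of arms $1,2$ on the event that arm $3$ is terminally the empirical maximum. Finally, \eqref{ineq_reduc_i} follows by running the same argument after the forced initial draw of arm $j\in\{1,2\}$, since that draw leaves the frozen posterior $H_3(t-1)$, and hence $\SRegBayesi{\bHrec(t-1)}{3}$, unchanged.

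The step I expect to be the main obstacle is bounding this residual term by $\SRegBayesi{\bHrec(t-1)}{3}$. Its difficulty is that, unlike the first two pieces, it is supported on the event $\{J_3(T)=3\}$ rather than on $\{\ist(\bmu)=3\}$, so it is not literally part of the arm-$3$-optimal regret. The resolution leans on the operative regime behind the surrounding lemmas: under the strong underestimation of arm $3$ (event $\EX$), the event that a deeply underestimated arm $3$ climbs back to the empirical top of the posterior recursion is itself rare, of the same polynomial order in $T^{-1}$ as the bound on $\SRegBayesi{\bHrec(t-1)}{3}$ furnished by Lemma~\ref{lem_oneoptpost}; combining this with the integrability of the misidentification gap $\max(\mu_1,\mu_2)-\mu_{J_2(T)}$ absorbs the product into the stated bound. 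A secondary technical point is verifying that the mock-sample substitution genuinely preserves the joint law of the arm-$1,2$ trajectory, which is exactly where the distributional identity from the proof of Lemma~\ref{lem_upper} is needed.
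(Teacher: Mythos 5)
Your construction is the same as the paper's: the paper also proves Eq.~\eqref{ineq_reduc} via two one-sided simulation arguments, first running the two-armed Bayes optimal policy inside the three-armed instance (never drawing nor recommending arm $3$), then simulating the three-armed Bayes optimal policy inside the two-armed instance with external-randomizer pseudo-samples for arm $3$. Your first direction, with the decomposition of the three-armed loss into $L(\Hexrec{3}(t-1),t+1)+\Ex_{\bHrec(t-1)}[(\mu_3-\max(\mu_1,\mu_2))_+]$ and the domination of the added term by the arm-$3$ portion of the regret of a policy that never recommends arm $3$, is exactly the paper's argument made explicit (both you and the paper implicitly read the $\SRegFreq(T)$ inside $\SRegBayesi{\bHrec(t-1)}{3}$ as the regret of such a never-recommend-$3$ policy; under the alternative reading, where it is the regret of the three-armed Bayes optimal algorithm, the bound would already fail here).

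The gap is in your second direction, and you flag it yourself: the residual $\Ex_{\bHrec(t-1)}[\Ind[J_3(T)=3](\max(\mu_1,\mu_2)-\mu_{J_2(T)})]$ is supported on the event $\{J_3(T)=3\}$ that the recursion's terminal empirical best is arm $3$, not on $\{\ist(\bmu)=3\}$, so it is not a piece of $\SRegBayesi{\bHrec(t-1)}{3}$, and your proposed repair does not close it. First, an argument that $\{J_3(T)=3\}$ is ``rare of the same polynomial order'' under $\EX$ would at best yield a same-order additive or multiplicative bound carrying the hypothesis $\EW(t)$, not the stated inequality with constant $1$ (the weaker statement would likely suffice for the downstream use in Lemma \ref{lem_nodraw_two}, but it is a different lemma). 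Second, and more seriously, no proof of the rarity claim is offered, and none is easily available: $\Prob[J_3(T)=3]$ is a statement about the behavior of the intractable posterior recursion started from $\bHrec(t-1)$, which is precisely the kind of quantity this whole appendix struggles to control, and Lemma \ref{lem_oneoptpost} cannot be invoked for it since that lemma concerns $\{\ist(\bmu)=3\}$ under the posterior, not the recursion's recommendation event. For what it is worth, the paper's own proof jumps over exactly this point --- the sentence asserting that the gap ``only appears when arm 3 is the best'' silently identifies the recommendation mismatch on $\{J_3(T)=3\}$ with the event that arm $3$ is truly best --- so your more careful bookkeeping exposes a step that the paper asserts rather than proves; but as a proof of Eq.~\eqref{ineq_reduc} your proposal is incomplete at that step.
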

\begin{proof}[Proof of Lemma~\ref{lem_exclude}]
For the three-armed instance with posterior $\bHrec(t-1)$, we may consider a recursive algorithm that behaves as if it were a two-armed instance and does not draw nor recommend arm $3$. For each sample path of rewards from arms $[3] \setminus \{3\}$, such an algorithm recommends exactly the same arm as the Bayes optimal algorithm on $\Hexrec{3}(t-1)$. 
The loss of such an algorithm, which is larger than $L(\bHrec(t-1),t)$, is at most $L(\Hexrec{3}(t-1), t) + \SRegBayesi{\bHrec(t-1)}{3}$ because the fraction of the simple regret such that arm $3$ is best is $\SRegBayesi{\bHrec(t-1)}{3}$. Therefore, we have
\begin{equation}\label{ineq_reduc_left}
L(\bHrec(t-1), t) - L(\Hexrec{3}(t-1), t) \le \SRegBayesi{\bHrec(t-1)}{3}.
\end{equation}

On the other hand, we may consider an algorithm for the two-armed best arm identification problem with posterior $\Hexrec{3}(t-1)$, which behaves as if it were facing the three-armed best arm identification problem with posterior $\bHrec(t-1)$, with an imaginary posterior of arm $3$. When the Bayes optimal algorithm on $\bHrec(t-1)$ would draw arm $3$, it uses the external randomizer to draw a pseudo-sample. The actual draw at that round can be any of the arms in $[3] \setminus \{3\}$, and the sample is simply discarded. For each sample path, it would recommend exactly the same arm if $\Jrec(T) \in \{1,2\}$, or could recommend any arm if $\Jrec(T)=3$, and the gap 
$L(\Hexrec{3}(t-1), t) - L(\bHrec(t-1), t)$ only appears when arm $3$ is the best.
Therefore, we have
\begin{equation}\label{ineq_reduc_right}
L(\Hexrec{3}(t-1), t) - L(\bHrec(t-1), t) \le \SRegBayesi{\bHrec(t-1)}{3}.
\end{equation}
Combining Eq.~\eqref{ineq_reduc_left} and \eqref{ineq_reduc_right} yields Eq.~\eqref{ineq_reduc}.
Eq.~\eqref{ineq_reduc_i} is derived by using the same approach.
\end{proof}

\subsection{Bounds on the two-armed problem\label{subsec_twobound}}

This subsection considers the loss of the two-armed best arm identification problem in which arm $3$ has been removed. 
The goal here is to show that  $\BI_i(\Hexrec{3}(t-1),t)$ is $\Omega(\poly(T^{-1}))$ if $\hDeltarec(t-1) := \hatmurec_1(t-1) - \hatmurec_2(t-1)$ is sufficiently small. To achieve this, we take the following steps.
First, Section~\ref{subsec_marloss} considers $\BI_i(\Hexrec{3}(T-1),T)$ at round $T$ (i.e., the last round); we show that $\BI_i(\Hexrec{3}(T-1),T)$ is $\Omega(\poly(T^{-1}))$ when $\hDeltarec(T-1)$ is small. %
Second, Section~\ref{subsec_opt_twogauss} shows that the Bayes optimal algorithm in the two-armed best arm identification problem draws the two arms alternately. This implies that $\Irec(t)$ for each $t$ is deterministic, which enables us to evaluate the distribution of $\Hexrec{3}(T-1)$ given $\Hexrec{3}(t-1)$. 
Third, Section~\ref{subsec_d2lb} shows that $\EW(t)$ implies that $\hDelta(t-1)$ is sufficiently small that $\BI_i(\Hexrec{3}(t-1),t)$ is $\Omega(\poly(T^{-1}))$.

\subsubsection{Marginal loss reduction}
\label{subsec_marloss}

First, the following lemma evaluates the simple regret given $\hDeltarec(T)$ (i.e., the simple regret after $T$ draws). 
\begin{lem}{\rm (Evaluation of the Bayesian simple regret after $T$ rounds)}\label{lem_zerostep}
Let $r$ be a zero-mean normal random variable with variance $\Vrest = 1/(\Nrec_1(T)) + 1/(\Nrec_2(T))$. Then, 
\begin{equation}
L(\Hexrec{3}(T), T+1) 
:=
\Ex_{\Hexrec{3}(T)}[\Delta_{\Jrec(T)}] 
= \int (r - |\hDeltarec(T)|)_+ \exp\left(-\frac{r^2}{2\Vrest}\right) dr. 
\end{equation}
\end{lem}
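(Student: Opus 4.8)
The plan is to evaluate the expectation directly, exploiting the fact that at the terminal round the loss function has collapsed to a pure posterior expectation over the two-dimensional Gaussian posterior on $(\mu_1,\mu_2)$, with no remaining recursion. First I would record that, after removing arm $3$, the posterior $\Hexrec{3}(T)$ makes $\mu_1$ and $\mu_2$ independent with $\mu_i \sim \Normal(\hatmu_i(T), 1/N_i(T))$, and that the recommendation of the Bayes optimal algorithm is $J(T) = \argmax_{i \in \{1,2\}} \hatmu_i(T)$; this is the optimal recommendation because $\Ex_{\Hexrec{3}(T)}[\mu_i] = \hatmu_i(T)$. Without loss of generality I would assume $\hatmu_1(T) \ge \hatmu_2(T)$, so that $J(T) = 1$ and $\hDelta(T) = \hatmu_1(T) - \hatmu_2(T) = |\hDelta(T)| \ge 0$.

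Next I would rewrite the terminal simple regret in closed form. Under the posterior $\must = \max(\mu_1,\mu_2)$ and $\mu_{J(T)} = \mu_1$, so $\Delta_{J(T)} = \max(\mu_1,\mu_2) - \mu_1 = (\mu_2 - \mu_1)_+$. Hence $L(\Hexrec{3}(T), T+1) = \Ex_{\Hexrec{3}(T)}[(\mu_2 - \mu_1)_+]$, which reduces the two-dimensional expectation to a one-dimensional one governed only by the law of the difference $\mu_2 - \mu_1$.

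Then I would identify that law. By independence of the two posteriors, $\mu_2 - \mu_1 \sim \Normal(\hatmu_2(T) - \hatmu_1(T),\, 1/N_1(T) + 1/N_2(T)) = \Normal(-|\hDelta(T)|, \Vrest)$. Writing $\mu_2 - \mu_1 = -|\hDelta(T)| + r$ with $r \sim \Normal(0, \Vrest)$ gives $(\mu_2 - \mu_1)_+ = (r - |\hDelta(T)|)_+$, and integrating this against the $\Normal(0,\Vrest)$ density produces the displayed expression (carrying the Gaussian normalizing factor $1/\sqrt{2\pi \Vrest}$ that the integrand on the right-hand side suppresses).

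I do not expect a genuine obstacle here: this is the base case of the recursion, where the Bellman minimization is gone and only the terminal posterior expectation remains, so the argument is a routine one-dimensional Gaussian computation. The only points requiring care are the sign bookkeeping in the without-loss-of-generality step, ensuring the recommended arm is the one with $\hatmu_1(T) \ge \hatmu_2(T)$ so that the mean of $\mu_2 - \mu_1$ equals $-|\hDelta(T)|$, and tracking the Gaussian normalization constant. This lemma then serves as the anchor for the backward induction in Sections \ref{subsec_opt_twogauss} and \ref{subsec_d2lb}, where the distribution of $\hDelta(T-1)$ (and hence of $\Vrest$ and $\hDelta(T)$) is propagated from round $t-1$.
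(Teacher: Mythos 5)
Your proposal is correct and follows essentially the same route as the paper's proof: compute the posterior law of the difference $\mu_i - \mu_j$ (a Gaussian with variance $\Vrest$ centered at the empirical gap), observe that the terminal regret is $(\mu_i - \mu_j)_+$ for the non-recommended arm $i$, and reduce to a one-dimensional Gaussian integral after a without-loss-of-generality sign reduction. Your remark about the suppressed normalizing factor $1/\sqrt{2\pi\Vrest}$ is apt — the paper's own statement and proof omit it in the same way — but this does not change the substance of the argument.
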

\begin{proof}
In the posterior $\Hexrec{3}(T)$, $\mu_i$, for $i=1,2$, are distributed as:
\begin{equation}
\mu_i \sim \Normal\left(\hatmurec_i(T), \frac{1}{\Nrec_i(T)}\right)
\end{equation}
and thus for $\{i,j\} \in \{1,2\}$,
\begin{equation}\label{ineq_difdist}
(\mu_i - \mu_j) - (\hatmurec_i(T) - \hatmurec_j(T)) \sim \Normal(0, \Vrest).
\end{equation}
Eq.~\eqref{ineq_difdist} implies that the quantity $\Vrest = 1/(\Nrec_1(T)) + 1/(\Nrec_2(T))$ is the variance of $\mu_1 - \mu_2$ in the posterior, which we call posterior variance.
For ease of discussion, assume that $\hatmurec_j(T) \ge \hatmurec_i(T)$.\footnote{It is straightforward to derive the same result when $\hatmurec_j(T) \le \hatmurec_i(T)$.}
By the definition of the Bayes optimal algorithm, $J(T) = \argmax_{k \in \{1,2\}} \hatmurec_k(T) = \hatmurec_j(T)$ and the simple regret is non-zero only when\footnote{That is, the signs of $\hatmurec_j(T) - \hatmurec_i(T)$ and $\mu_j - \mu_i$ are different.} $\mu_i > \mu_j$; namely,
\begin{align}
\Delta_{\Jrec(T)} =
\left(
\mu_i - \mu_j
\right)_+
\end{align}
and thus
\begin{align}
\Ex_{\Hexrec{3}(T)}[\Delta_{\Jrec(T)}] 
=\int (x - |\hDeltarec(T)|)_+ \exp\left(-\frac{x^2}{2\Vrest}\right) dx \text{\ \ \ (by Eq.~\eqref{ineq_difdist})}.
\end{align}
\end{proof}f

The following inequality quantifies the expected decrease in the loss function due to the last sample $i = \Irec(T)$.
\begin{lem}{\rm (One-step improvement with the $T$-th draw)}\label{lem_impsample}
The following inequality holds:
\begin{multline}
L(\Hexrec{3}(T-1), T+1)
-
L_i(\Hexrec{3}(T-1), T)
\\
>
\Prob\left[
|\hDeltarec(T-1)| \le l \le 2|\hDeltarec(T-1)|
\right]
\times \Prob\left[r \ge |\hDeltarec(T-1)|\right]
\times |\hDeltarec(T-1)| \\
\\ \ \ \ =: g_1(\hDeltarec(T-1), \Nrec_1(T-1), \Nrec_2(T-1), i) > 0.
\end{multline}
where $l$ and $r$ be zero-mean normal random variables with variances 
\begin{align}
\Vlast &= \frac{1}{\Nrec_i(T-1)(1+\Nrec_i(T-1))}\\
\Vrest &= \frac{1}{\Nrec_1(T)} + \frac{1}{\Nrec_2(T)} = \frac{1}{\Nrec_1(T-1)} + \frac{1}{\Nrec_2(T-1)} - \Vlast
\end{align}
respectively, and the probabilities are taken over $l$ and $r$.
In particular, if $T^{-2} \le \hDeltarec(T-1) \le 2T^{-2}$, then
\begin{equation}
g_1(\hDeltarec(T-1), \Nrec_1(T-1), \Nrec_2(T-1), i) = \Omega(T^{-6}).
\end{equation}
\end{lem}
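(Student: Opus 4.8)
The plan is to evaluate the one-step improvement exactly through Lemma~\ref{lem_zerostep} and the posterior-evolution rule of Section~\ref{sec_posterior_evol}, and then to extract $g_1$ by isolating a single favorable event. Write $D:=|\hDelta(T-1)|$ and, by the symmetry of the two arms, assume $\hDelta(T-1)=D\ge 0$, so that the no-draw recommendation is arm $1$. Applying Lemma~\ref{lem_zerostep} to both terms, $L(\Hexrec{3}(T-1),T+1)$ is the terminal loss evaluated at the current posterior, whose $\mu_1-\mu_2$ has variance $\Vrest+\Vlast=\frac{1}{N_1(T-1)}+\frac{1}{N_2(T-1)}$, while $L_i(\Hexrec{3}(T-1),T)$ is the expectation over the draw of the terminal loss at the updated posterior. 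By Eq.~\eqref{ineq_bhrecevol}, drawing arm $i$ once splits this variance: it reveals the increment $l:=\hatmu_i(T)-\hatmu_i(T-1)\sim\Normal(0,\Vlast)$ and leaves an independent residual $r\sim\Normal(0,\Vrest)$, with $\mu_1-\mu_2=D+l+r$. Since the terminal rule recommends $\argmax_j\hatmu_j(T)$, it flips to arm $2$ exactly when $\hDelta(T)=D+l<0$; subtracting the two losses and integrating out the mean-zero residual $r$ for each observed $l$ collapses the EBI into the $l$-marginal form
\begin{equation}
\BI_i(\Hexrec{3}(T-1),T)=\Ex_l\big[(-(D+l))_+\big]\ge 0 .
\end{equation}

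From this representation I would lower-bound the EBI by restricting the increment to the flipping band and asking the residual to confirm the flip. On the event that $l\in[-2D,-D]$ (so $\hDelta(T)=D+l\in[-D,0]$, i.e. the recommendation flips to arm $2$ and the new posterior mean has magnitude at most $D$) and $r\le -D$, the realized advantage is $\mu_2-\mu_1=-(D+l)-r\ge 0+D=D$, so the gain from flipping is at least $D$. This event has probability $\Prob[D\le l\le 2D]\,\Prob[r\ge D]$ by the symmetry of $l$ and $r$, which produces exactly $g_1(\hDelta(T-1),N_1(T-1),N_2(T-1),i)$; strict positivity of $g_1$ is immediate because each of its three factors is strictly positive.

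The step I expect to be the main obstacle is legitimizing the restriction to this favorable event and recovering precisely the three factors of $g_1$. In the joint law of $(l,r)$ the gain from flipping, $(\mu_2-\mu_1)\Ind[D+l<0]$, is \emph{not} pointwise nonnegative: on the event where the flip is wrong ($\mu_1>\mu_2$ even though $\hDelta(T)<0$) it is negative, so one cannot simply discard the terms outside the favorable event. The clean way around this is to use the $l$-marginal form above, in which the integrand $(-(D+l))_+$ is nonnegative for every $l$; restricting a nonnegative integrand to the band $l\in[-2D,-D]$ is then legitimate and leaves $\Prob[D\le l\le 2D]$ times the band-averaged gain $\Ex[-(D+l)\mid l\in[-2D,-D]]$. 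The remaining, and most delicate, point is to show that this band-averaged gain dominates $D\,\Prob[r\ge D]$, so that the residual-confirmation factor can be reinstated; this is where the fine normal estimates are needed. Equivalently, one may invoke the Bayes optimality of the terminal recommendation to upper-bound $L_i$ by the loss of the suboptimal rule that flips only on the band $l\in[-2D,-D]$, which reduces the whole question to this single band.

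For the quantitative claim I would substitute $\hDelta(T-1)\in[T^{-2},2T^{-2}]$ and bound the three factors of $g_1$ separately. Because each arm is drawn at most $T$ times, $N_i(T)\le T$ gives $\Vrest\ge 2/T$, so $D/\sqrt{\Vrest}=O(T^{-3/2})\to 0$ and therefore $\Prob[r\ge D]=\Omega(1)$. Because $1\le N_i(T-1)\le T'$, we have $\Vlast\le 1/2$, so the band $[D,2D]$ of width $\Theta(T^{-2})$ lies in the bulk of the $\Normal(0,\Vlast)$ density and $\Prob[D\le l\le 2D]=\Omega(T^{-2})$. Multiplying by the margin $D=\Theta(T^{-2})$ yields $g_1=\Omega(T^{-4})$, which in particular is $\Omega(T^{-6})$ as claimed.
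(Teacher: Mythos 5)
Your route is correct and, at its core, a sharpened version of the paper's: both arguments start from Lemma \ref{lem_zerostep} and the one-draw variance split into $\Vlast$ and $\Vrest$, and both extract the lower bound from the band $|\hDelta(T-1)|\le |l|\le 2|\hDelta(T-1)|$. The genuine difference is your intermediate step of integrating out the residual exactly: writing $D=|\hDelta(T-1)|$ and using $\Ex_r[(m-r)_+-(r-m)_+]=m$, you collapse the EBI to the closed form $\BI_i(\Hexrec{3}(T-1),T)=\Ex_l[(-(D+l))_+]$. The paper never does this; it writes both losses as double integrals over $(l,r)$ and asserts the three-factor bound in a single step. Your form buys pointwise nonnegativity of the integrand (so restricting to a band is legitimate), reveals that the one-step EBI does not depend on $\Vrest$ at all (the factor $\Prob[r\ge D]$ in $g_1$ is an artifact of the bound, not of the quantity being bounded), and reduces the lemma to one-dimensional normal estimates. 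Your quantitative paragraph is also correct and tighter than the paper's ($\Omega(T^{-4})$ versus $\Omega(T^{-6})$, since you keep $\Prob[r\ge D]=\Omega(1)$ where the paper settles for $\Omega(T^{-2})$).

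The step you flag as the main obstacle is a real one, and it sits in the paper rather than in your argument. By your own identity, on the favorable event $\{D\le l\le 2D,\ r\ge D\}$ the pointwise gain equals $2(l-D)$, which vanishes at the band edge $l=D$; so the paper's displayed inequality does not follow from restriction to that event, and because the normal density decreases across the band, the band-conditional gain is strictly below $D/2$ while $D\,\Prob[r\ge D]$ can also be arbitrarily close to $D/2$ --- the comparison genuinely depends on fine constants, and in the regime $\sqrt{\Vlast}\ll D\ll\sqrt{\Vrest}$ (large $N_i(T-1)$, small $N_j(T-1)$) the first display of the lemma is in fact false. What saves the statement is that it is only ever invoked with $D\le 2T^{-2}$ (via Lemma \ref{lem_twoarm_driftclose} inside Lemma \ref{lem_twoarm_di}), and there your representation finishes the job without any delicate estimates: $\Ex_l[(-(D+l))_+]\ge D\,\Prob[l\le -2D]$, and since $2D/\sqrt{\Vlast}=O(T^{-1})$ we have $\Prob[l\le -2D]\ge 1/4$, which dominates $\Prob[D\le l\le 2D]\,\Prob[r\ge D]=O(T^{-1})$, so $\BI_i>g_1$ with room to spare. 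If you state the first display for $D\le 2T^{-2}$ and close it with this tail argument, your proof is complete and rigorous, whereas the paper's one-line justification of the same display is not.
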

\begin{proof}[Proof of Lemma~\ref{lem_impsample}]
By the update rule of the Bayesian posterior, we have
\begin{align}\label{ineq_bayesTtrans}
\hDeltarec(T) - \hDeltarec(T-1)  
\sim \Normal\left(0, \Vlast\right).
\end{align}

On the one hand,
\[
L(\Hexrec{3}(T-1), T+1) = \Ex_{\Hexrec{3}(T-1)}[\Delta_{\Jrec(T-1)}]
\]
is the loss when we recommend $\Jrec(T-1)$ immediately after the round $T-1$. On the other hand, 
\[
L_i(\Hexrec{3}(T-1), T)
\]
is the loss when we sample $\Irec(T)=i$ and chooses recommendation arm $\Jrec(T)$ after we update $\hDeltarec(T-1)$.
The sample $\Irec(T)=i$ reduces the posterior variance from $\Vrest + \Vlast$ to $\Vrest$.

For ease of discussion,\footnote{Deriving the same result in the case of $\hDeltarec(T-1) \le 0$ is straightforward.} assume that $\hDeltarec(T-1) \ge 0$.
Letting $\Phi_r(r)$ and $\Phi_l(l)$ be the corresponding cumulative distribution functions of zero-mean normal variables with their variances $\Vrest$ and $\Vlast$, we have
\begin{align} %
\lefteqn{
L(\Hexrec{3}(T-1), T+1)
-
L_i(\Hexrec{3}(T-1), T)
}\\
&=
\int \int ((r+l) - \hDeltarec(T-1))_+ %
d\Phi_r(x) d\Phi_l(z)\\
&\ \ \ \ \ \ \ \ \ \ \ -
\int \int 
(r - |\hDeltarec(T-1)-l|)_+ 
d\Phi_r(x) d\Phi_l(z)\\
&\text{\ \ \ (by Lemma~\ref{lem_zerostep} and Eq.~\eqref{ineq_bayesTtrans})}\\
&\ge 
\Prob\left[
\hDeltarec(T-1) \le l \le 2\hDeltarec(T-1)
\right]
\times \Prob[r \ge \hDeltarec(T-1)]
\times \hDeltarec(T-1) > 0. \label{ineq_impsample_core}
\end{align}

We next evaluate Eq.~\eqref{ineq_impsample_core} when $T^{-1} \le \hDeltarec(T-1) \le 2T^{-1}$. 
The fact that $\Nrec_i(T-1) < T$ implies that $\Vrest, \Vlast \ge 1/T^2$, and thus 
\begin{align}
\lefteqn{
\Prob\left[
\hDeltarec(T-1) \le l \le 2\hDeltarec(T-1)
\right]
}\\
&:= \frac{1}{\sqrt{2\pi \Vlast}} \int 
\Ind[\hDeltarec(T-1) \le l \le 2\hDeltarec(T-1)]
\exp\left(-\frac{l^2}{2\Vlast}\right) dl\\
&\ge \frac{1}{\sqrt{2\pi}} \int 
\Ind[\hDeltarec(T-1) \le l \le 2\hDeltarec(T-1)]
\exp\left(-\frac{T^2 l^2}{2}\right) dl\\
&\ge \frac{1}{\sqrt{2\pi}}
\times
\frac{1}{T^2}
\times
\exp\left(-T^2\times \left(\frac{1}{T^2}\right)^2 \times \frac{1}{2} \right)\\
&= \Omega(T^{-2}).
\end{align}
It is straightforward to see that 
\[
\Prob[r \ge \hDeltarec(T-1)] = \Omega(T^{-2})
\]
and thus Eq.~\eqref{ineq_impsample_core} is $\Omega(T^{-2}\times T^{-2}\times T^{-2})
= \Omega(T^{-6})$.
\end{proof} %

The following lemma states that the optimal strategy at round $T$ is to draw the arm given by $\argmin_i \Nrec_i(T-1)$.
\begin{lem}{\rm (Optimal action at round $T$)}\label{lem_smldraf}
Let $i = \argmin_{k \in \{1,2\}} \Nrec_k(T-1)$ and $j\ne i$ be such that $\Nrec_j(T-1) < \Nrec_i(T-1)$.\footnote{Notice that we have $\Nrec_1(T-1)+\Nrec_2(T-1) = 2T'-1$ and thus $\Nrec_i(T-1) \ge T', \Nrec_j(T-1) \le T'-1$.}
Then, 
\begin{equation}%
L_i(\Hexrec{3}(T-1), T) - L_j(\Hexrec{3}(T-1), T)
\ge 
g_2(\hDeltarec(T-1), \Nrec_1(T-1), \Nrec_2(T-1))
\end{equation}
for some function $g_2(\hDeltarec(T-1), \Nrec_1(T-1), \Nrec_2(T-1)) > 0$.
In particular, if $T^{-2} \le \hDeltarec(T-1) \le 2T^{-2}$, then
\begin{equation}
g_2(\hDeltarec(T-1), \Nrec_1(T-1), \Nrec_2(T-1)) = \Omega(T^{-8}).
\end{equation}
\end{lem}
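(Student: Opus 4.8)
The plan is to reduce both $L_i(\Hexrec{3}(T-1), T)$ and $L_j(\Hexrec{3}(T-1), T)$ to a single one-dimensional function of the amount of posterior variance that the round-$T$ draw removes, and then to show that this function is strictly monotone. Recall from the posterior evolution (Section~\ref{sec_posterior_evol}, Eq.~\eqref{ineq_bayesTtrans}) that drawing arm $k \in \{1,2\}$ at round $T$ perturbs $\hDelta(T-1)$ by an independent $\Normal(0,\Vlast^{(k)})$ increment and lowers the posterior variance of $\mu_1-\mu_2$ from $V := \frac{1}{N_1(T-1)} + \frac{1}{N_2(T-1)}$ to $\Vrest^{(k)} = V - \Vlast^{(k)}$, where $\Vlast^{(k)} = \frac{1}{N_k(T-1)(N_k(T-1)+1)}$. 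Writing $G(\delta, V') := \Ex_{r \sim \Normal(0,V')}[(r - |\delta|)_+]$ for the terminal loss of Lemma~\ref{lem_zerostep}, this gives $L_k(\Hexrec{3}(T-1), T) = \Psi(\Vlast^{(k)})$, where $\Psi(w) := \Ex_{\eta \sim \Normal(0,w)}[G(\hDelta(T-1)+\eta,\, V-w)]$ for $w \in [0, V]$. Thus the lemma is equivalent to the strict inequality $\Psi(\Vlast^{(i)}) > \Psi(\Vlast^{(j)})$, using that the more-frequently-drawn arm $i$ (with $N_i(T-1) \ge T'$) satisfies $\Vlast^{(i)} < \Vlast^{(j)}$.

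The crux is to show that $\Psi$ is strictly decreasing in $w$. Conceptually, a larger $w$ means the round-$T$ draw is more informative about $\mu_1-\mu_2$, so the Bayes risk of the subsequent recommendation can only fall; drawing the less-frequently-sampled arm $j$ reveals strictly more, so it is the better action. To make this quantitative I would differentiate $\Psi$. Since $G$ satisfies the heat relation $\partial_{V'} G = \tfrac12 \partial_\delta^2 G$ away from $\delta=0$, the two effects of raising $w$ --- widening the smoothing kernel of $\eta$ and shrinking the remaining variance $V'=V-w$ --- cancel everywhere except at the kink of $G$ at $\delta=0$, i.e.\ at the point where the recommended arm flips. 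That kink contributes a boundary term, and a careful computation yields the closed form
\begin{equation}
\Psi'(w) = -\frac{1}{2\sqrt{2\pi w}}\,\exp\!\left(-\frac{\hDelta(T-1)^2}{2w}\right) < 0 .
\end{equation}
Integrating over $[\Vlast^{(i)}, \Vlast^{(j)}]$ then gives the explicit gap
\begin{equation}
L_i(\Hexrec{3}(T-1), T) - L_j(\Hexrec{3}(T-1), T) = \int_{\Vlast^{(i)}}^{\Vlast^{(j)}} \frac{\exp(-\hDelta(T-1)^2/(2w))}{2\sqrt{2\pi w}}\, dw =: g_2 > 0 ,
\end{equation}
which is positive precisely because $\Vlast^{(i)} < \Vlast^{(j)}$.

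For the quantitative claim, I would bound this integral from below under $T^{-2} \le \hDelta(T-1) \le 2T^{-2}$. Using $N_i(T-1) \ge T'$, $N_j(T-1) \le T'-1$, and $N_i(T-1) - N_j(T-1) \ge 1$, the interval $[\Vlast^{(i)},\Vlast^{(j)}]$ has length $\Omega(\poly(T^{-1}))$; over it the exponent obeys $\hDelta(T-1)^2/(2w) \le \hDelta(T-1)^2/(2\Vlast^{(i)}) = O(T^{-2})$, so the exponential factor is at least $\tfrac12$ for large $T$, while $1/\sqrt{2\pi w} = \Omega(1)$. Multiplying these factors yields the claimed $g_2 = \Omega(T^{-8})$ (the estimate is in fact conservative, but this bound suffices for Lemma~\ref{lem_nodraw_two}).

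The main obstacle is the differentiation step: the strict decrease of $\Psi$ comes \emph{entirely} from the non-smoothness of $G$ at $\delta=0$, since the smooth contributions cancel, so the boundary (Dirac) term must be tracked carefully. I would make this rigorous either by the distributional integration-by-parts sketched above, or, more concretely, by differentiating the closed form $G(\delta, V') = \sqrt{V'}\bigl(\phi(a) - a\,\Phi^c(a)\bigr)$ with $a = |\delta|/\sqrt{V'}$ and retaining the endpoint contribution from the event $\{\hDelta(T-1)+\eta = 0\}$. A cleaner but only qualitative alternative is to invoke a Blackwell-informativeness comparison to obtain the sign $L_i \ge L_j$ for free, and then compute the gap $g_2$ separately at the endpoints.
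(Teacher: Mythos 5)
Your proposal is correct, but it takes a genuinely different route from the paper's. The paper argues probabilistically: it decomposes the posterior-predictive increment of the less-sampled arm $j$ as the increment of arm $i$ plus an independent Gaussian $g$ with variance $\Vgap = V(N_j(T-1))-V(N_i(T-1))$, couples the two actions on common randomness $(r,l,g)$, and lower-bounds $L_i - L_j$ by restricting to the explicit event $\{0 \le l \le \tfrac{1}{3}|\hDelta(T-1)|\}\cap\{|\hDelta(T-1)| \le g \le \tfrac{4}{3}|\hDelta(T-1)|\}\cap\{r \ge |\hDelta(T-1)|\}$; each factor is $\Omega(T^{-2})$, giving $g_2 = \Omega(T^{-8})$ as a product of three probabilities times $\tfrac{2}{3}|\hDelta(T-1)|$. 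You instead note that both actions' losses are values of one scalar function, $L_k = \Psi(\Vlast^{(k)})$ with $\Psi(w) = \Ex_{\eta\sim\Normal(0,w)}[G(\hDelta(T-1)+\eta, V-w)]$, and compute $\Psi'$ exactly: the smooth contributions cancel by the heat equation, only the kink of $G$ at zero survives, and $\Psi'(w) = -\frac{1}{2\sqrt{2\pi w}}\exp\bigl(-\hDelta(T-1)^2/(2w)\bigr)$. This formula is right: integrating it over $[0,V]$ recovers $G(\hDelta(T-1),V) = \sqrt{V}\bigl(\phi(u) - u\,\Phi^c(u)\bigr)$ with $u = |\hDelta(T-1)|/\sqrt{V}$, consistent with $\Psi(V)=0$. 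Your quantitative step also checks out: $\Vlast^{(j)} - \Vlast^{(i)} \ge \frac{2}{(T'-1)T'(T'+1)} = \Omega(T^{-3})$, the exponent is $O(T^{-2})$ under $\hDelta(T-1)\le 2T^{-2}$ because $\Vlast^{(i)} \ge \frac{1}{2T^2}$, and $1/\sqrt{2\pi w} = \Omega(1)$, so your gap is $\Omega(T^{-3})$, which is strictly stronger than (and hence implies) the stated $\Omega(T^{-8})$. What your route buys: an exact closed form for the gap, a sharper exponent, and monotonicity of $\Psi$ in $w$, which settles the sign without case analysis and would also give Lemma \ref{lem_impsample} for free as $\Psi(0)-\Psi(\Vlast)$. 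What the paper's route buys: it stays within elementary probability estimates and never touches the distributional derivative at the kink, which is the one step of your argument that demands care (you flag it; the differentiation of the closed form of $G$ with the endpoint term does make it rigorous). One small point: the lemma statement's ``$\argmin$'' is evidently a typo for ``$\argmax$'' (compare the footnote and the paper's proof, which assume $N_i(T-1) > N_j(T-1)$), and you read it in the intended way.
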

\begin{proof}[Proof of Lemma~\ref{lem_smldraf}]
Let $V(N) = 1/(N(N+1))$. 
By assumption, $\Nrec_i(T-1) > \Nrec_j(T-1)$ and thus $V(\Nrec_i(T-1)) < V(\Nrec_j(T-1))$.
By the update rule of the Bayesian posterior, if $\Irec(T) = i$, then we have
\begin{align} %
\hDeltarec(T) - \hDeltarec(T-1) \sim
\Normal\left(0, V(\Nrec_i(T-1))\right). \label{ineq_vijvar}
\end{align}
Moreover, the distribution of a normal variable with variance $V(\Nrec_j(T-1))$ 
is the same as the distribution of two independent normal variables with variances $V(\Nrec_i(T-1))$ and $V(\Nrec_j(T-1))-V(\Nrec_i(T-1))$, and so, if $\Irec(T)=j$, then 
\begin{align}
\hDeltarec(T) - \hDeltarec(T-1) 
&\sim \Normal\left(0, V(\Nrec_j(T-1))\right)\\
&\sim \Normal\left(0, V(\Nrec_i(T-1))\right) + \Normal\left(0, V(\Nrec_j(T-1))-V(\Nrec_i(T-1))\right). \label{ineq_vijvartwo}
\end{align}
Let
\begin{align}
r &\sim \Normal(0, \Vrest), \text{\ where $\Vrest = \frac{1}{N_i(T-1)} + \frac{1}{N_j(T-1)+1}$},\\
l &\sim \Normal(0, \Vlast), \text{\ where $\Vlast = V(N_i(T-1))$},\\
g &\sim \Normal(0, \Vgap), \text{\ where $\Vgap = V(N_j(T-1))-V(N_i(T-1))$}
\end{align}
be three independent zero-mean normal random variables. Note also that the posterior variance is
\begin{align} 
\frac{1}{\Nrec_1(T)} + \frac{1}{\Nrec_2(T)}
&=
\left\{
\begin{array}{ll} 
\Vrest + \Vgap& \text{(if $I(T) = i$)} 
, \\
\Vrest
& \text{(otherwise, if $I(T) = j$)}.\label{ineq_vijposteriorvar}
\end{array}
\right. 
\end{align}
Then,
\begin{align} %
\lefteqn{
L_i(\Hexrec{3}(T-1), T) - L_j(\Hexrec{3}(T-1), T)
}\\
&=
\int \int \int ((r+g) - |\hDeltarec(T-1)-l|)_+ %
d\Phi_r(r) d\Phi_l(l) d\Phi_g(g)
\\
&\ \ \ \ \ \ -
\int \int \int 
(r - |\hDeltarec(T-1)-l-g|)_+ 
d\Phi_r(r) d\Phi_l(l) d\Phi_g(g)\\ %
&\text{\ \ \ (by Eq.~\eqref{ineq_vijvar}, \eqref{ineq_vijvartwo}, \eqref{ineq_vijposteriorvar}, and Lemma~\ref{lem_zerostep})}\\
&\ge 
\Prob\left[
|\hDeltarec(T-1)-l| \le g \le 2|\hDeltarec(T-1)-l|
\right]\\
&\ \ \ \ \ \ 
\times \Prob[r \ge |\hDeltarec(T-1)-l|]
\times |\hDeltarec(T-1)-l|
\\
&\text{\ \ \ (by the same discussion as Eq.~\eqref{ineq_impsample_core})}\\
&\ge 
\Prob\left[
0 \le l \le \frac{1}{3}|\hDeltarec(T-1)|
\right]
\times
\Prob\left[
|\hDeltarec(T-1)| \le g \le \frac{4}{3}|\hDeltarec(T-1)|
\right]\\
&\ \ \ \ 
\times \Prob[r \ge |\hDeltarec(T-1)|]
\times \frac{2}{3}|\hDeltarec(T-1)|\\
&=: g_2(\hDeltarec(T-1), \Nrec_1(T-1), \Nrec_2(T-1)).
\end{align}

We next evaluate $g_2$ for $T^{-2} \le \hDeltarec(T-1) \le 2T^{-2}$. By $\Nrec_1(T), \Nrec_2(T) \le T$ and $\Nrec_1(T) \ne \Nrec_2(T)$, it is straightforward to confirm that $\Vrest,\Vlast,\Vgap \ge 1/T^3$. Therefore,
\begin{align}
\lefteqn{
g_2(\hDeltarec(T-1), \Nrec_1(T-1), \Nrec_2(T-1))
}\\
&=
\Prob\left[
0 \le l \le \frac{1}{3}|\hDeltarec(T-1)|
\right]
\times\\
&\ \ \ \ \ \ \ \ 
\Prob\left[
|\hDeltarec(T-1)| \le g \le \frac{4}{3}|\hDeltarec(T-1)|
\right]
\times \Prob[r \ge |\hDeltarec(T-1)|]
\times \frac{2}{3}|\hDeltarec(T-1)|.
\end{align}
Here,
\begin{align}
\Prob\left[
0 \le l \le \frac{1}{3}|\hDeltarec(T-1)|
\right]
&= \frac{1}{\sqrt{2\pi \Vlast}} \int \Ind[0 \le l \le \frac{1}{3}|\hDeltarec(T-1)|] \exp\left(-\frac{l^2}{2\Vlast}\right) dl\\
&\ge \frac{1}{\sqrt{2\pi}} \int \Ind[0 \le l \le \frac{1}{3}|\hDeltarec(T-1)|] \exp\left(-\frac{T^3 l^2}{2}\right) dl\\
&\label{ineq_precision}\\
&\ge \frac{1}{\sqrt{2\pi}} \int \Ind\left[0 \le l \le \frac{1}{3}|\hDeltarec(T-1)|\right] \exp\left(-\frac{T^3 l^2}{2}\right) dl\\
&\ge \frac{1}{\sqrt{2\pi}}
\times
\frac{1}{3T^{2}}
\times
\exp\left(-\frac{T^3}{2}\times \left(\frac{2}{3T^2}\right)^2
\right)\\
&= \Omega(T^{-2})
\end{align}
and it is easy to show that 
\begin{equation}
\Prob\left[
|\hDeltarec(T-1)| \le g \le \frac{4}{3}|\hDeltarec(T-1)|
\right]
\end{equation}
and
\[
 \Prob[r \ge |\hDeltarec(T-1)|]
\]
are also $\Omega(T^{-2})$. 
In summary, $g_2(\hDeltarec(T-1), \Nrec_1(T-1), \Nrec_2(T-1))$ is an $\Omega(T^{-8})$ term if $T^{-2} \le \hDeltarec(T-1) \le 2T^{-2}$.
\end{proof} %

\subsubsection{Characterization of the two-armed problem}
\label{subsec_opt_twogauss}

in Section~\ref{subsec_marloss}, the reduction of the loss by the last sample was quantified. 
Based on this result, the following lemma exactly characterizes a Bayes optimal algorithm for the two-armed best arm identification problem.
\begin{lem}%
\label{lem_drawnumdiff}
Let $K=2$. Then, for all $s$, we have that $i = \argmin_k \Nrec_k(s-1)$ minimizes $L_i(\Hexrec{3}(s-1), s)$.
\end{lem}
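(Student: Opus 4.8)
The plan is to prove the lemma by backward induction on $s$, running from $s=T$ down to $s=t$, with the single-step analysis of Section~\ref{subsec_marloss} supplying the base case: Lemma~\ref{lem_smldraf} already establishes the claim at the terminal round $s=T$. For the inductive step I assume that at every round $s'>s$ a Bayes optimal algorithm pulls $\argmin_k N_k(s'-1)$, and I must show the same choice minimizes $L_i(\Hexrec{3}(s-1),s)$. The crucial consequence of the induction hypothesis is that the continuation policy over rounds $s+1,\dots,T$ is a function of the counts $(N_1,N_2)$ only and is \emph{independent of the realized value of} $\hDelta$. Therefore, once the arm pulled at round $s$ is fixed, the whole future pull sequence, the terminal counts $N_1(T),N_2(T)$, and the terminal posterior variance $\Vrest^{(T)}:=\frac{1}{N_1(T)}+\frac{1}{N_2(T)}$ are all deterministic.

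I would then exploit the martingale property of the posterior-mean process $\hDelta$ under the recursion of Eq.~\eqref{ineq_bhrecevol} together with the telescoping identity $\sum_{n=N_i(s-1)}^{N_i(T)-1}\frac{1}{n(n+1)}=\frac{1}{N_i(s-1)}-\frac{1}{N_i(T)}$ to write $\mu_1-\mu_2=\hDelta(s-1)+W+r$, where $W$ and $r$ are independent zero-mean Gaussians with variances $\Vrest^{(s-1)}-\Vrest^{(T)}$ and $\Vrest^{(T)}$ respectively (here $\Vrest^{(s-1)}=\frac{1}{N_1(s-1)}+\frac{1}{N_2(s-1)}$ is fixed). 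Intuitively $W$ is the information about $\mu_1-\mu_2$ accumulated over rounds $s,\dots,T$, while $r$ is the residual posterior noise present when $J(T)$ is issued. Substituting this into the terminal-loss formula of Lemma~\ref{lem_zerostep} collapses the entire recursion into a single functional
\begin{equation}
L_i(\Hexrec{3}(s-1),s)=\Ex_{W,r}\!\left[\left(r-|\hDelta(s-1)+W|\right)_+\right]=:F\!\left(\hDelta(s-1),\,\Vrest^{(s-1)},\,\Vrest^{(T)}(i)\right),
\end{equation}
which depends on the round-$s$ choice $i$ only through the terminal variance $\Vrest^{(T)}(i)$ that the choice induces.

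Two monotonicities then finish the proof. First, $F$ is increasing in its last argument: with $\Vrest^{(s-1)}$ held fixed, raising $\Vrest^{(T)}$ moves variance out of the revealed term $W$ and into the residual term $r$, i.e.\ the recommendation is issued from a strictly less informative (Blackwell-dominated) Gaussian experiment, which can only raise the Bayes regret; this is exactly the nonnegativity of the value of information that Lemma~\ref{lem_impsample} quantified for a single sample. Second, pulling the behind arm $i=\argmin_k N_k(s-1)$ yields the weakly more balanced post-round counts $(N_i(s-1)+1,N_j(s-1))$ than the ahead arm's $(N_i(s-1),N_j(s-1)+1)$, and the count-based balancing continuation preserves this order; since $x\mapsto\frac1x$ is convex, $\frac{1}{N_1(T)}+\frac{1}{N_2(T)}$ subject to $N_1(T)+N_2(T)=T$ is minimized by the more balanced terminal counts. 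Hence $\Vrest^{(T)}(\text{behind})\le\Vrest^{(T)}(\text{ahead})$ and so $L_{\argmin}\le L_{\text{other}}$, with equality whenever the two choices induce identical terminal counts, consistent with the non-uniqueness recorded in Remark~\ref{rem_twoarm}.

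The main obstacle I expect is the first monotonicity: rigorously establishing that $F$ increases in $\Vrest^{(T)}$ while $\Vrest^{(s-1)}$ is fixed. The cleanest route is to view the family of signals indexed by the residual variance $\Vrest^{(T)}$ as Blackwell-ordered (Gaussian experiments ordered by precision) and invoke that a more informative experiment never increases Bayes risk; alternatively one can differentiate $\Ex_{W,r}[(r-|\delta+W|)_+]$ along the mean-preserving trade-off $\Var(W)+\Var(r)=\Vrest^{(s-1)}$ and check the sign directly. A secondary, routine difficulty is handling parity and boundary cases (for instance when $|N_1-N_2|$ exceeds the number of remaining rounds, so full balancing is impossible) in the comparison of $\Vrest^{(T)}(i)$; in those cases the weak inequality still holds and suffices for the statement.
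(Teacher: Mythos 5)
Your proposal is correct, and it reaches the conclusion by a genuinely different mechanism than the paper. Both proofs share the same skeleton: backward induction with Lemma~\ref{lem_smldraf} as the terminal case, and the observation that under the induction hypothesis the continuation is count-based and hence deterministic. Where you diverge is the comparison step. The paper exploits exchangeability of a deterministic pull sequence: in the only nontrivial case ($T-s+1 \le |N_1(s-1)-N_2(s-1)|$) the two candidate sequences share all pulls except one, which can be reordered to be the final pull, so the comparison reduces to a single application of Lemma~\ref{lem_smldraf} (itself proved by an explicit three-Gaussian coupling with a quantitative gap $g_2$). You instead collapse the entire continuation into the closed form $F\bigl(\hDelta(s-1), \Vrest^{(s-1)}, \Vrest^{(T)}\bigr) = \Ex_{W,r}\bigl[\bigl(r-|\hDelta(s-1)+W|\bigr)_+\bigr]$ via the telescoping variance identity and Lemma~\ref{lem_zerostep}, so that the round-$s$ choice enters only through the terminal posterior variance $\Vrest^{(T)}(i)$; optimality of the behind arm then follows from Blackwell monotonicity of the Gaussian experiment in the residual variance plus a Schur-convexity/balancing argument on the counts. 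Your route is more conceptual and handles the trivial and nontrivial cases uniformly, and it also transparently explains the equality cases (non-uniqueness) noted in Remark~\ref{rem_twoarm}; its cost is that it yields only the qualitative sign, whereas the paper's coupling gives the explicit polynomial gap $g_2 g_3$ that the surrounding analysis (Lemma~\ref{lem_twoarm_di} and hence Lemma~\ref{lem_nodraw_two}) actually consumes — so the paper's quantitative lemma would still be needed elsewhere even if this lemma were proved your way. The two points you flag as obstacles are real but closable as you suggest: the monotonicity of $F$ in $\Vrest^{(T)}$ follows rigorously from the garbling argument (a Gaussian location experiment with larger residual posterior variance is obtained from the smaller one by adding independent noise), and the count-ordering preservation is a finite case analysis on whether the remaining budget exceeds the imbalance.
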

\begin{proof}[Proof of Lemma~\ref{lem_drawnumdiff}]
The proof is by induction.
Namely, assuming that $L(\Hexrec{3}(s), s+1)$ is the simple regret of the algorithm for which $I(s') = \argmin_i \Nrec_i(s'-1)$ for $s'=s+1,s+2,\dots,T$, we prove that $L(\Hexrec{3}(s-1), s)$ is the simple regret regret of the algorithm for which $I(s') = \argmin_i \Nrec_i(s'-1)$ for $s'=s,s+1,s+2,\dots,T$.

Let $S = T-s+1$ be the remaining budget.
The only case that we need to consider\footnote{In other cases, the induction step is trivial.} is $S \le |\Nrec_1(s-1) - \Nrec_2(s-1)|$. Without loss of generality, we assume $\Nrec_1(s-1) \le \Nrec_2(s-1) - S$. 
In the following, we compare (A)~the simple regret of an algorithm that draws $S$ samples of arm $1$ and (B)~the simple regret of an algorithm that draws $S-1$ samples from arm $1$ and a sample from arm $2$. We show that the former has a lower expected simple regret. The order in which arms are drawn does not matter for a deterministic sequence of draws, and thus it suffices to show that
\begin{align}
\lefteqn{
\Ex\left[
\Delta_{J(T)}
|\Hexrec{3}(s-1), \Irec(s)=1,\Irec(s+1)=1,\dots,\Irec(T-1)=1,\Irec(T)=1
\right]
}\\
&\ \ \ <
\Ex\left[
\Delta_{J(T)}
|\Hexrec{3}(s-1), \Irec(s)=1,\Irec(s+1)=1,\dots,\Irec(T-1)=1, \Irec(T)=2
\right]\\
&
\label{ineq_greedydiff}
\end{align}
in what follows. Note that the RHS and LHS of Eq.\eqref{ineq_greedydiff} have the identical distribution of $\Hexrec{3}(T-1)$ because $\Irec(s)=1,\Irec(s+1)=1,\dots,\Irec(T-1)=1$ are the same. Therefore, we have,
\begin{align}
\lefteqn{
\text{RHS of Eq.~\eqref{ineq_greedydiff}}
-
\text{LHS of Eq.~\eqref{ineq_greedydiff})}
}\\
&=
\Ex
\Biggl[
\left(
L_2(\Hexrec{3}(T-1), T) - L_1(\Hexrec{3}(T-1), T)
\right)
\\
&\ \ \ \ \ \ \ \ \ \ \ \ \ \ \ \ \ \ \ \ \ \biggl|
\Irec(s)=1,\Irec(s+1)=1,\dots,\Irec(T-1)=1
\Biggr]
\\
&> 0,
\end{align}
since Lemma~\ref{lem_smldraf} states that $L_2(\Hexrec{3}(T-1), T) - L_1(\Hexrec{3}(T-1), T) > 0$.
\end{proof} %

\subsubsection{A lower bound for the EBI in the two-armed problem \label{subsec_d2lb}}

Section~\ref{subsec_reduction} reduced the three-armed instance with posterior $\bHrec(t-1)$ to the two-armed instance with posterior $\Hexrec{3}(t-1)$. Section~\ref{subsec_opt_twogauss} showed that the Bayes optimal algorithm in the two-armed case draws arms with small $N_i(t-1)$. Given these results, we bound the EBI from below in the two-armed best arm identification problem.
Assuming event $\EW(t)$, the following lemma provides a lower bound for the probability that $\hDeltarec(T-1)$ lies in a small region around zero.
\begin{lem}\label{lem_twoarm_driftclose}
Assume that $\Irec(s) = \argmin_{i \in \{1,2\}} \Nrec_i(s-1)$ for $s=t,t+1,\dots,T$. Let $\bHrec(t-1) = \bH(t-1)$ be any posterior such that $\EW(t)$ holds, and let $\hDelta(t-1)$, $N_1(t-1)$, and $N_2(t-1)$ be the statistics determined by $\bH(t-1)$. 
Then, 
\begin{equation}
\Prob
\Bigl[
\frac{1}{T^2} \le \hDeltarec(T-1) \le \frac{2}{T^2}
\Bigl| \hDelta(t-1), N_1(t-1), N_2(t-1)
\Bigr]
\ge 
g_3
\end{equation}
where $g_3$ is an $\Omega(T^{-110})$ function.\footnote{The function $g_3$ is explicitly defined in the last stage of the proof.}
\end{lem}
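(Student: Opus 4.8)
The plan is to use the determinism of the two-armed Bayes optimal rule to turn the claim into a single Gaussian window computation. By Lemma~\ref{lem_drawnumdiff} the two-armed Bayes optimal rule is deterministic given the counts, so once $N_1(t-1),N_2(t-1)$ are known the entire selection sequence $I(t),I(t+1),\dots,I(T-1)$ (and hence the final counts $N_1(T-1),N_2(T-1)$) is fixed. By the posterior evolution of Eq.~\eqref{ineq_bhrecevol}, each increment $\hatmu_i(s+1)-\hatmu_i(s)$ is then an independent zero-mean Gaussian of variance $\frac{1}{N_i(s)(N_i(s)+1)}=\frac{1}{N_i(s)}-\frac{1}{N_i(s)+1}$. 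Summing these (telescoping within each arm) shows that in the posterior world
\begin{equation}
\hDelta(T-1) \sim \Normal\!\left(\hDelta(t-1),\, V\right), \qquad V = \sum_{i \in \{1,2\}}\left(\frac{1}{N_i(t-1)} - \frac{1}{N_i(T-1)}\right),
\end{equation}
so the quantity to bound is exactly the mass this Gaussian places on the width-$T^{-2}$ window $[T^{-2},2T^{-2}]$.

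Next I would control $V$. The upper bound $V \le \frac{1}{N_1(t-1)}+\frac{1}{N_2(t-1)} \le 2$ is immediate, and since at least one sample is drawn on $\{t,\dots,T-1\}$ the smallest per-step variance $\frac{1}{(T'-1)T'}$ forces $V=\Omega(T^{-2})$. The crucial step is to couple $|\hDelta(t-1)|$ to $V$ using $\EY_i,\EZ_i$. Writing $n_i=N_i(t-1)$: if $n_i=T'$ then $\EY_i$ gives $|\hatmu_i(t-1)|\le T^{-2}$ directly; if $n_i<T'$ then $\EZ_i$ gives $|\hatmu_i(t-1)|\le T^{-2}+4\sqrt{\log T}\,\bigl(\sum_{m=n_i}^{T'-1}m^{-2}\bigr)^{1/2}$, and since the balancing keeps $N_i(T-1)\in\{T'-1,T'\}$ the quantity $\sum_{m=n_i}^{T'-1}m^{-2}$ is within a constant factor of the arm-$i$ contribution $\frac{1}{n_i}-\frac{1}{N_i(T-1)}$ to $V$. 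This yields
\begin{equation}
\hDelta(t-1)^2 \le C'(\log T)\, V
\end{equation}
for an absolute constant $C'$. This coupling is the heart of the proof: however small $V$ becomes, the squared mean shrinks in proportion, so $\hDelta(t-1)^2/V$ never exceeds $O(\log T)$.

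Finally I would lower-bound the window mass. Because the window lies within $O(T^{-2})$ of the origin while $|\hDelta(t-1)|\le\sqrt{C'(\log T)V}$, every $x\in[T^{-2},2T^{-2}]$ obeys $(x-\hDelta(t-1))^2 \le (|\hDelta(t-1)|+2T^{-2})^2 \le (1+o(1))C'(\log T)V$, so
\begin{equation}
\Prob_{\Hexrec{3}(T-1)}\!\left[\tfrac{1}{T^2}\le \hDelta(T-1)\le \tfrac{2}{T^2}\right] \ge \frac{T^{-2}}{\sqrt{2\pi V}}\exp\!\left(-\frac{(1+o(1))C'\log T}{2}\right) = \Omega\!\left(T^{-2-C'/2}\right),
\end{equation}
where $V\le 2$ keeps the prefactor $\Omega(1)$ (and a small $V$ only enlarges it). The coupling has cancelled $V$ from the exponent, leaving a purely polynomial bound; tracking the (loose) constants through the chain yields $g_3=\Omega(T^{-110})$.

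The main obstacle I anticipate is verifying the coupling $\hDelta(t-1)^2\le C'(\log T)V$ uniformly over all admissible counts $(n_1,n_2)$ with $n_i\le T'$: one must check that the drift control $\sum_{m=n_i}^{T'-1}m^{-2}$ supplied by $\EZ_i$ is comparable to the telescoped variance $\frac{1}{n_i}-\frac{1}{N_i(T-1)}$ across the whole range, while handling the saturated case $n_i=T'$ (where that variance may vanish) through $\EY_i$ instead. The worst case for the resulting probability is when the arms are barely sampled ($n_i$ small, $V=\Theta(1)$), where $|\hDelta(t-1)|$ can be as large as $\Theta(\sqrt{\log T})$ and the window is reached only after a $\Theta(\sqrt{\log T})$-standard-deviation excursion; this excursion is exactly what produces the $T^{-\Theta(1)}$ factor, and bounding its exponent by $O(\log T)$ rather than $\omega(\log T)$ is precisely what the coupling buys. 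Pinning down the constant $110$ is then a routine but bookkeeping-heavy pass through these inequalities.
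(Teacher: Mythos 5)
Your proposal follows essentially the same route as the paper's proof: the deterministic balancing rule makes the increment $\hDelta(T-1)-\hDelta(t-1)$ a zero-mean Gaussian with telescoped variance $\Vtrans=\sum_{i\in\{1,2\}}\bigl(\tfrac{1}{N_i(t-1)}-\tfrac{1}{N_i(T-1)}\bigr)$, the events $\EY_i,\EZ_i$ yield exactly the paper's coupling $(\hDelta(t-1))^2 \le 108(\log T)\,\Vtrans$ (Eq.~\eqref{ineq_hdeltadriftbound}), and the width-$T^{-2}$ Gaussian window computation with $\Vtrans=\Omega(T^{-2})$ then cancels $\Vtrans$ from the exponent to give a purely polynomial bound. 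Your treatment differs only in leaving the constant-tracking (the paper's $108$ and the resulting $T^{-110}$) as routine bookkeeping, which is consistent with how the paper itself obtains $g_3=\frac{e^{-4}}{\sqrt{2\pi}}T^{-110}$.
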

\begin{proof}[Proof of Lemma~\ref{lem_twoarm_driftclose}]
For ease of discussion, ties are broken such that $I(s)=1$. 
Since $I(s) = \argmin_{i \in \{1,2\}} \Nrec_i(s-1)$ for $s=t,t+1,\dots,T$, we have 
\begin{align}
M_1
&:=
\Nrec_1(T-1) - \Nrec_1(t-1) 
=
\left\{
\begin{array}{ll} 
(T' - N_1(t-1)) & \text{(if $N_1(t-1)\le T'$)}, \\
0
& \text{(otherwise)},
\end{array}
\right. \\
M_2
&:= 
\Nrec_2(T-1) - \Nrec_2(t-1) =
\left\{
\begin{array}{ll} 
(T' - N_2(t-1)-1) & \text{(if $N_2(t-1) \le T'-1$)}, \\
0
& \text{(otherwise)}.
\end{array}
\right.\\
\label{ineq_sample_sdif}
\end{align}
Note that $M_1$ and $M_2$ are deterministic given $N_1(t-1)$ and $N_2(t-1)$.
Eq.~\eqref{ineq_sample_sdif} implies that 
\begin{align}
\hDeltarec(T-1) - \hDelta(t-1) &\sim \Normal\left(0, \Vtrans\right),%
\end{align}
where 
\begin{align}\label{ineq_vtransdist}
\Vtrans &:= \frac{1}{N_1(t-1)} - \frac{1}{N_1(t-1)+M_1} + \frac{1}{N_2(t-1)} - \frac{1}{N_2(t-1)+M_2}.%
\end{align}
By $\EW(t)$, we have
\begin{align}
\lefteqn{
|\hDelta(t-1)| 
}\\
&\le |\hatmu_{1,N_1(t-1)}-\hatmu_{1,\Nrec_1(T-1)}| 
     + |\hatmu_{2,N_2(t-1)}-\hatmu_{2,\Nrec_2(T-1)}| 
     + |\hDeltarec(T-1)|\\
&\le 3\sqrt{\log T}
\sqrt{
\sum_{m=N_1(t-1)}^{\Nrec_1(T-1)}
\frac{1}{m^2}
}
+ 3\sqrt{\log T}
\sqrt{
\sum_{m=N_2(t-1)}^{\Nrec_2(T-1)}
\frac{1}{m^2}
}
+ \frac{1}{T^2}\\
&\text{\ \ \ \ (by $\EY_1, \EZ_1, \EY_2, \EZ_2$)}\\
&\le 3\sqrt{2\log T} 
\sqrt{
\sum_{m=N_1(t-1)}^{\Nrec_1(T-1)}
\frac{1}{m(m+1)}
}
+ 3\sqrt{2\log T}
\sqrt{
\sum_{m=N_2(t-1)}^{\Nrec_2(T-1)}
\frac{1}{m(m+1)}
}
+ \frac{1}{T^2}\\
&\le 3\sqrt{2\log T} 
\sqrt{
\frac{1}{N_1(t-1)} - \frac{1}{\Nrec_1(T-1)}
}
+
\sqrt{
\frac{1}{N_2(t-1)} - \frac{1}{\Nrec_2(T-1)}
}
+ \frac{1}{T^2}\\
&\le
3\sqrt{2\log T} 
\sqrt{
3\left(
\frac{1}{N_1(t-1)} - \frac{1}{\Nrec_1(T-1)}
+
\frac{1}{N_2(t-1)} - \frac{1}{\Nrec_2(T-1)}
+ \frac{1}{T^4}
\right)
}\\
&\text{\ \ \ \ (by $\sqrt{x}+\sqrt{y}+\sqrt{z} \le \sqrt{3(x+y+z)}$)},
\end{align} 
and thus
\begin{align}
(\hDelta(t-1))^2 
&\le 54\log T \left(\Vtrans + \frac{1}{T^4}\right)\\
&\le 108 (\log T) \Vtrans.
\label{ineq_hdeltadriftbound}
\end{align}
We then have
\begin{align}\label{ineq_tinv_inprob}
\lefteqn{
\Prob\Bigl[
\frac{1}{T^2} \le \hDeltarec(T-1) \le \frac{2}{T^2}
\Bigl| \hDelta(t-1), N_1(t-1), N_2(t-1)\Bigr]
}\\
&=
\frac{1}{\sqrt{2\pi \Vtrans}}
\int 
\Ind\left[s \in [\hDelta(t-1)+\frac{1}{T^2}, \hDelta(t-1)+\frac{2}{T^2}]\right]
\exp\left(- \frac{s^2}{2\Vtrans} \right) ds\\
&\text{\ \ \ \ (by Eq.~\eqref{ineq_vtransdist})}\\
&\ge
\frac{1}{\sqrt{2\pi}}
\int 
\Ind\left[s \in [\hDelta(t-1)+\frac{1}{T^2}, \hDelta(t-1)+\frac{2}{T^2}]\right]
\exp\left(- \frac{s^2}{2\Vtrans} \right) ds\\
&\ge
\frac{1}{\sqrt{2\pi}}
\int 
\Ind\left[s \in [\hDelta(t-1)+\frac{1}{T^2}, \hDelta(t-1)+\frac{2}{T^2}]\right]\\
&\ \ \ \ \ \ \ \ \ 
\times \exp\left(- \frac{(\hDelta(t-1))^2}{\Vtrans} \right) \exp\left(- \frac{4}{\Vtrans T^4} \right) ds\\
&\text{\ \ \ (by $(x+y)^2 \le 2x^2 + 2y^2$)}\\
&=
\frac{1}{T^2\sqrt{2\pi}}
\exp\left(- \frac{(\hDelta(t-1))^2}{\Vtrans} \right) \exp\left(- \frac{4}{\Vtrans T^4} \right)\\
&=
\frac{1}{T^2\sqrt{2\pi}}
\exp\left(- \frac{(\hDelta(t-1))^2}{\Vtrans} \right) e^{-4}\\
&\text{\ \ \ (by $\Vtrans \ge T^{-2} \ge T^{-4}$)}\\
&\ge
\frac{e^{-4}}{T^2\sqrt{2\pi}}
\exp\left(- 
\frac{108(\log T) \Vtrans}{\Vtrans}
\right)
\\
&\text{\ \ \ \ (by Eq.~\eqref{ineq_hdeltadriftbound})}\\
&\ge
\frac{e^{-4}}{T^2\sqrt{2\pi}}
\exp\left(- 
108\log T
\right)\\
&=
\frac{e^{-4}}{T^2\sqrt{2\pi}}
\exp\left(- 
108 \log T
\right)
= \frac{e^{-4}}{\sqrt{2\pi}}T^{-110} =: g_3(T).
\end{align}
\end{proof} %

Based on the previous lemmas, we can bound $\BI_i(\Hexrec{3}(t-1), t)$. 
\begin{lem}\label{lem_twoarm_di}
Assume that $\EW(t)$ holds.
For $i \in \{1,2\}$, we have
\begin{equation}\label{ineq_dlower}
\BI_i(\Hexrec{3}(t-1), t) \ge g_1 g_3.
\end{equation}
Moreover, if $N_i(t-1) < T'$ and $ N_j(t-1) \ge T'$, then 
\begin{equation}\label{ineq_dlowersmlpref}
\BI_i(\Hexrec{3}(t-1), t) - \BI_j(\Hexrec{3}(t-1), t) 
\ge g_2 g_3.
\end{equation}
\end{lem}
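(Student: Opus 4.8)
The plan is to reduce both inequalities to their one-step analogues at the final round $T$ and then weight by the probability that the posterior gap $\hDelta(T-1)$ lands in the favorable window $[T^{-2}, 2T^{-2}]$. The two key ingredients are already available: Lemma~\ref{lem_impsample} lower-bounds the last-step improvement $L(\Hexrec{3}(T-1), T+1) - L_i(\Hexrec{3}(T-1), T)$ by $g_1$, which is $\Omega(T^{-6})$ precisely when $T^{-2} \le \hDelta(T-1) \le 2T^{-2}$, and Lemma~\ref{lem_twoarm_driftclose} shows that this window is reached with probability at least $g_3 = \Omega(T^{-110})$ when the optimal two-armed policy is run from any $\Hexrec{3}(t-1)$ satisfying $\EW(t)$. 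Multiplying the two gives the target order $g_1 g_3$ for Eq.~\eqref{ineq_dlower} (and $g_2 g_3$ for Eq.~\eqref{ineq_dlowersmlpref}).

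To make the reduction rigorous I would exploit the determinism supplied by Lemma~\ref{lem_drawnumdiff}: in the two-armed instance the optimal rule always draws the smaller-count arm, so the entire sequence $I(t), I(t+1), \dots, I(T)$ is a deterministic function of the initial counts $N_1(t-1), N_2(t-1)$, and since the order of a fixed multiset of draws does not affect the terminal posterior, I can couple the process defining $L(\Hexrec{3}(t-1), t+1)$ (budget $T-t$) with the one defining $L_i(\Hexrec{3}(t-1), t)$ (budget $T-t+1$, first draw forced to arm $i$) so that they agree on all but a single, deferred-to-last draw. Under this coupling the two terminal posteriors differ only by that one extra sample, so $\BI_i(\Hexrec{3}(t-1), t)$ collapses to an expectation over $\Hexrec{3}(T-1)$ of the corresponding one-step improvement (an exact identity when the forced draw coincides with the optimal one, and a lower bound otherwise). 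Taking this expectation, discarding the contribution outside the favorable window by nonnegativity of the EBI, and invoking Lemma~\ref{lem_impsample} on the window yields
\begin{equation}
\BI_i(\Hexrec{3}(t-1), t) \ge \Bigl(\min_{T^{-2}\le \hDelta \le 2T^{-2}} g_1\Bigr)\cdot \Prob\bigl[T^{-2}\le \hDelta(T-1)\le 2T^{-2}\bigr] \ge g_1 g_3,
\end{equation}
which is Eq.~\eqref{ineq_dlower}.

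For the second claim I would run the same coupling but compare the two forced first draws directly. When $N_i(t-1) < T' \le N_j(t-1)$, arm $i$ is undersampled and arm $j$ oversampled, so $\BI_i - \BI_j = L_j(\Hexrec{3}(t-1), t) - L_i(\Hexrec{3}(t-1), t)$ is, after deferral, the expected last-round gap between drawing the larger- and the smaller-count arm. This is exactly the quantity bounded below by $g_2$ in Lemma~\ref{lem_smldraf}, which is $\Omega(T^{-8})$ on the favorable window; weighting by $g_3$ as before gives $\BI_i - \BI_j \ge g_2 g_3$, i.e. Eq.~\eqref{ineq_dlowersmlpref}.

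The main obstacle is the coupling step itself, namely justifying that an extra sample injected at round $t$ can be deferred without loss to round $T$ so that the round-$t$ EBI reduces to an expectation of one-step improvements at $T-1$. The delicacy is twofold: one must track exactly which arm receives the deferred sample and reconcile the terminal counts of the budget-$(T-t)$ and budget-$(T-t+1)$ processes (for instance $(T', T'-1)$ versus $(T', T')$), distinguishing the cases where the forced arm $i$ is the under- or the oversampled one; and one must verify that the posterior $\hDelta(T-1)$ fed into Lemma~\ref{lem_twoarm_driftclose} is genuinely the one produced by this coupled trajectory, so that the favorable-window probability $g_3$ applies verbatim. Once order-invariance of deterministic draws (Lemma~\ref{lem_drawnumdiff}) is used to legitimize the deferral, the remaining estimates are the already-established $g_1, g_2, g_3$ bounds.
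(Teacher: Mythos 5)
Your proposal is correct and takes essentially the same route as the paper's own proof: both arguments use the determinism of Lemma \ref{lem_drawnumdiff} to defer the extra sample to the final round, bound the resulting one-step improvement by $g_1$ (resp.\ $g_2$) via Lemma \ref{lem_impsample} (resp.\ Lemma \ref{lem_smldraf}) on the window $T^{-2} \le \hDelta(T-1) \le 2T^{-2}$, and multiply by the window probability $g_3$ from Lemma \ref{lem_twoarm_driftclose}. Your explicit coupling/deferral discussion (including the identity $\BI_i - \BI_j = L_j - L_i$) spells out in detail what the paper compresses into a single sentence.
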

\begin{proof}

We first derive Eq.~\eqref{ineq_dlower}. 
By definition,
\begin{align}
\BI_i(\Hexrec{3}(t-1), t) := L(\Hexrec{3}(t-1), t+1) - L_i(\Hexrec{3}(t-1), t).
\end{align}
Lemma~\ref{lem_drawnumdiff} states that the difference between $L(\Hexrec{3}(t-1), t+1)$ and $L_i(\Hexrec{3}(t-1), t)$ is derived by the additional sample $\Irec(T)$ after a deterministic sequence of draws $\Irec(t),\Irec(t+1),\dots,\Irec(T-1)$. Therefore,
\begin{align}
\lefteqn{
L(\Hexrec{3}(t-1), t+1) - L_i(\Hexrec{3}(t-1), t)
}\\
&\ge
\Prob\Bigl[
\frac{1}{T^2} \le \hDeltarec(T-1) \le \frac{2}{T^2}
\Bigl| \hDelta(t-1), N_1(t-1), N_2(t-1)
\Bigr]
\times g_1\\
&\text{\ \ \ (by Lemma~\ref{lem_impsample} and the discussion above)}\\
&\ge
g_3 \times g_1.\\
&\text{\ \ \ (by Lemma~\ref{lem_twoarm_driftclose})}\\
\end{align}

We next derive Eq.~\eqref{ineq_dlowersmlpref}.
Notice that under $N_i(t-1) \ge T'$, drawing arm $i$ results in $\Nrec_i(T) = T'+1$ and thus $\Nrec_j(T) = T'-1$, which is suboptimal.
Lemma~\ref{lem_drawnumdiff} states that the gap between $\BI_i(\Hexrec{3}(t-1), t)$ and $\BI_j(\Hexrec{3}(t-1), t)$ is derived by the additional sample $\Irec(T)=i $ and $j$ after a deterministic sequence of draws of $\Irec(t)=j,\Irec(t+1)=j,\dots,\Irec(T-1)=j$. 
From the discussion above, we have
\begin{align}
\lefteqn{
L_i(\Hexrec{3}(t-1), t+1) - L_j(\Hexrec{3}(t-1), t)
}\\
&\ge
\Prob\Bigl[
\frac{1}{T^2} \le \hDeltarec(T-1) \le \frac{2}{T^2}
\Bigl| \hDelta(t-1), N_1(t-1), N_2(t-1)
\Bigr]
\times g_2\\
&\text{\ \ \ (by Lemma~\ref{lem_smldraf} and the discussion above)}\\
&\ge
g_3 \times g_2.\\
&\text{\ \ \ (by Lemma~\ref{lem_twoarm_driftclose})}\\
\end{align}
\end{proof}

\subsection{Proof of Lemma~\ref{lem_nodraw_two}\label{subsec_lower_final}}

\begin{proof}[Proof of Lemma~\ref{lem_nodraw_two}]
For $i \in \{1,2\}$, we have %
\begin{align}
\BI_i(\bHrec(t-1), t) 
&:= L(\bHrec(t-1), t+1) - L_i(\bHrec(t-1), t)\\
&\ge L(\Hexrec{3}(t-1), t+1) - L_i(\Hexrec{3}(t-1), t) - 2 \SRegBayesi{\bHrec(t-1)}{3}\\
&\text{\ \ \ (by Lemma~\ref{lem_exclude})}\\
&\ge g1 g3 - 2 \SRegBayesi{\bHrec(t-1)}{3}\\
&\text{\ \ \ (by Lemmas~\ref{lem_oneoptpost} and~\ref{lem_twoarm_di})}\\ 
&\ge g1 g3 - 2 \gthree \\ %
&= \Omega(T^{-110}) - O\left(\frac{(C_U-6)^2}{4}\right) = \Omega(T^{-110}). \text{\ \ \ \ (by Eq.~\eqref{ineq_cularge})}
\end{align}

Moreover, letting $\Nrec_j(T)=T'$ and $i \ne j$, we have
\begin{align}
\lefteqn{
\BI_i(\bHrec(t-1), t)
-
\BI_j(\bHrec(t-1), t)
}\\
&\ge
\BI_i(\Hexrec{3}(t-1), t)
-
\BI_j(\Hexrec{3}(t-1), t)
- 4 \SRegBayesi{\bHrec(t-1)}{3}\\
&\text{\ \ \ (by Lemma~\ref{lem_exclude})}\\
&\ge g2 g3 - 4 \gthree\\ &\text{\ \ \ (by Lemmas~\ref{lem_oneoptpost} and~\ref{lem_twoarm_di})}\\
&= \Omega(T^{-110}) - O\left(\frac{(C_U-6)^2}{4}\right) = \Omega(T^{-110}). \text{\ \ \ \ (by Eq.~\eqref{ineq_cularge})}
\end{align}
\end{proof} %

\section{Tightness of the Bound}
\label{sec_tight}

The goal of Theorem~\ref{thm_reglowinst} was to derive a polynomial simple regret of the Bayes optimal algorithm, and the orders of the polynomials are certainly not tight. Indeed, we can replace 
\[
\EY_i := \left\{|\hatmu_{i,T'}| \le 
\frac{1}{T^2}
\right\}
\]
with a slightly larger bound of
\[
\left\{|\hatmu_{i,T'}| \le 
\frac{1}{T^{3/2}}
\right\}.
\]
The place that requires the precision of $T^{-3/2}$ is Eq.~\eqref{ineq_precision} in which the exponent of the term $\exp\left(-\frac{T^3 l^2}{2}\right)$ is required to be $O(1)$.\footnote{This part bounds $|L_i(\Hexrec{3}(T-1), T) 
-
L_j(\Hexrec{3}(T-1), T)|$ from below. This value is decomposed into several components of $\poly(T^{-1})$. In particular, Eq.~\eqref{ineq_precision} corresponds to the probability that a normal random variable with variance $T^{-3}$ is larger than a given $\hatmurec_i(T-1) - \hatmurec_j(T-1)$. For this probability to be $\poly(T^{-1})$, we require $|\hatmurec_i(T-1) - \hatmurec_j(T-1)|= O(T^{-3/2})$.} In this paper, we adopt a bound of $T^{-2}$ for ease of exposition.

\end{document}